\def\eqref#1{equation~\ref{#1}}
\def\1{\bm{1}}
\def\mF{{\bm{F}}}
\DeclareMathAlphabet{\mathsfit}{\encodingdefault}{\sfdefault}{m}{sl}
\SetMathAlphabet{\mathsfit}{bold}{\encodingdefault}{\sfdefault}{bx}{n}
\newcommand{\E}{\mathbb{E}}
\DeclareMathOperator*{\argmax}{arg\,max}
\theoremstyle{plain}
\newtheorem{theorem}{Theorem}[section]
\theoremstyle{definition}
\newtheorem{definition}[theorem]{Definition}
\newtheorem{assumption}[theorem]{Assumption}
\title{Unknown Domain Inconsistency Minimization \\ for Domain Generalization}
\author{Seungjae Shin\thanks{Equal contribution}\text{ }\text{ }\textsuperscript{\textmd{1}}, HeeSun Bae\footnotemark[1]\text{ }\text{ }\textsuperscript{\textmd{1}}, Byeonghu Na\textsuperscript{\textmd{1}}, Yoon-Yeong Kim\textsuperscript{\textmd{2}} \& Il-Chul Moon\textsuperscript{\textmd{1,3}} \\
\textsuperscript{\textmd{1}}Department of Industrial and Systems Engineering, KAIST \\ \textsuperscript{\textmd{2}}Department of Statistics, University of Seoul, \textsuperscript{\textmd{3}}summary.ai \\
\texttt{\{tmdwo0910,cat2507,wp03052,icmoon\}@kaist.ac.kr},
\texttt{yykim@uos.ac.kr}
}
\begin{document}

\maketitle

\begin{abstract}
The objective of domain generalization (DG) is to enhance the transferability of the model learned from a source domain to unobserved domains. To prevent overfitting to a specific domain, Sharpness-Aware Minimization (SAM) reduces source domain's loss sharpness. Although SAM variants have delivered significant improvements in DG, we highlight that there's still potential for improvement in generalizing to unknown domains through the exploration on data space. This paper introduces an objective rooted in both parameter and data perturbed regions for domain generalization, coined Unknown Domain Inconsistency Minimization (UDIM). UDIM reduces the loss landscape inconsistency between source domain and unknown domains. As unknown domains are inaccessible, these domains are empirically crafted by perturbing instances from the source domain dataset. In particular, by aligning the loss landscape acquired in the source domain to the loss landscape of perturbed domains, we expect to achieve generalization grounded on these flat minima for the unknown domains. Theoretically, we validate that merging SAM optimization with the UDIM objective establishes an upper bound for the true objective of the DG task. In an empirical aspect, UDIM consistently outperforms SAM variants across multiple DG benchmark datasets. Notably, UDIM shows statistically significant improvements in scenarios with more restrictive domain information, underscoring UDIM's generalization capability in unseen domains. Our code is available at \url{https://github.com/SJShin-AI/UDIM}.
\end{abstract}

\section{Introduction}
Domain Generalization (DG) \citep{domain_generalization,wang2022generalizing} focuses on \textit{domain shift} that arises when training and testing occur across distinct domains, i.e. a domain of real pictures in training, and a separate domain of cartoon images in testing. The objective of DG is to train a model on a given source domain dataset, and generalizes well on other unobserved domains. To address the domain discrepancy between the source domain and other domains, various methods have been proposed: 1) alignment-based methods \citep{pden, wald2021calibration}; 2) augmentation-based methods \citep{mada, mixstyle}; and 3) regularization-based methods \citep{irm,vrex,fishr}. While these methodologies have demonstrated promising results, they often underperform in settings where given domain information is particularly limited \citep{wang2021learning,qiao2020learning}. Also, most methods lack theoretical guarantees on the minimization of the target risk at the distribution level.

In contrast to the aforementioned methods, Sharpness-aware optimizers, which flatten the loss landscape over a perturbed parameter region, demonstrate promising performances in DG tasks \citep{gamcvpr2023,sagmcvpr2023}. By optimizing the perturbed local parameter regions, these approaches relieve the model overfitting to a specific domain, thereby enhancing the adaptability of the model across various domains. Also, this concept has a solid theoretical foundation based on the parameter space analysis with PAC-Bayes theories \citep{pac_v1,pac_v2}.

While the perturbation methods based on the parameter space have shown promising improvements in DG tasks, this paper theoretically claims that perturbation rooted in the data space is essential for robust generalization to unobserved domains. Accordingly, this paper introduces an objective that leverages both parameter and data perturbed regions for domain generalization. In implementation, our objective minimizes the loss landscape discrepancy between a source domain and unknown domains, where unknown domains are emulated by perturbing instances from the source domain datasets. Recognizing the loss landscape discrepancy as an \textit{Inconsistency score} across different domains, we name our objective as \underline{\textbf{U}}nknown \underline{\textbf{D}}omain \underline{\textbf{I}}nconsistency \underline{\textbf{M}}inimization (UDIM). 

Introduction of UDIM on the DG framework has two contributions. First, we theoretically prove that the integration of sharpness-aware optimization and UDIM objective becomes the upper bound of population risk for all feasible domains, without introducing unoptimizable terms. Second, we reformulate the UDIM objective into a practically implementable term. This is accomplished from deriving the worst-case perturbations for both parameter space and data space, each in a closed-form expression. Our experiments on various DG benchmark datasets illustrate that UDIM consistently improves the generalization ability of parameter-region based methods. Moreover, we found that these improvements become more significant as domain information becomes more limited.

\section{Preliminary}
\label{prem}
\subsection{Problem Definition of Domain Generalization}
This paper investigates the task of domain generalization in the context of multi-class classification \citep{irm,GroupDRO,sagnet}. We define an input as $x \in \mathbb{R}^{d}$ and its associated class label as $y \in \{1,..,C\}$. Let $\mathscr{D}_{e}$ represent a distribution of $e$-th domain. $\mathcal{E}$ is a set of indices for all domains, and $\mathscr{D} := \{ \mathscr{D}_{e} \}_{e\in \mathcal{E}}$ denotes the set of distributions for all domains, where every domain shares the same class set. For instance, let's hypothesize that video streams from autonomous cars are being collected, and the data collection at days and nights will constitute two distinct domains. A sampled dataset from the $e$-th domain is denoted by $D_e = \left\{ (x_i,y_i) \right\}_{i=1}^{n_e}$ where $(x_i,y_i) \sim \mathscr{D}_e$ and $n_e$ is the number of data instances of $e$-th domain.



Throughout this paper, let $\theta \in \Theta$ represents a parameter of trained model \(f_{\theta}\), where $\Theta$ is a set of model parameters. Using $D_e$, we define a loss function as $\mathcal{L}_{D_e}(\theta)=\frac{1}{n_e}\sum_{(x_i,y_i)\in D_e}\ell(f_{\theta}(x_i),y_i)$, where we sometimes denote $\ell(f_{\theta}(x_i),y_i)$ as $\ell(x_{i},\theta)$. The population risk for domain $e$ is given by $\mathcal{L}_{\mathscr{D}_e}(\theta)=\mathbb{E}_{(x,y)\sim\mathscr{D}_e}[\ell(f_{\theta}(x),y)]$. Then, the population risk over all domains is defined as $\mathcal{L}_{\mathscr{D}}(\theta)=\sum_{e \in \mathcal{E}}p(e)\mathcal{L}_{\mathscr{D}_e}(\theta)$, where $p(e)$ represents the occurrence probability of domain $e$.


In essence, the primary goal of training a model, \(f_{\theta}\), is to minimize the population risk, \(\mathcal{L}_{\mathscr{D}}(\theta)\). In practical scenarios, we only have access to datasets derived from a subset of all domains. We call these accessible domains and datasets as \textit{source domains} and \textit{source domain datasets}, respectively; denoted as \(\mathscr{D}_S = \{ \mathscr{D}_s \}_{s\in\mathcal{S}} \) and \(D_S = \{D_s\}_{s\in\mathcal{S}}\) where $\mathcal{S}$ is the set of indexes for source domains.
As \(\mathscr{D}_S  \neq \mathscr{D}\), $D_{S}$ deviates from the distribution $\mathscr{D}$ under the sampling bias of $\mathscr{D}_S$. As a consequence, a model parameter \({\theta}^{*}_{S} = \text{argmin}_{\theta} \mathcal{L}_{D_{S}}(\theta)\), which is trained exclusively on $D_{S}$, might not be optimal for \(\mathcal{L}_{\mathscr{D}}(\theta)\). Accordingly, domain generalization emerges as a pivotal task to optimize \(\theta^{*}=\text{argmin}_{\theta}\mathcal{L}_{\mathscr{D}}(\theta)\) by only utilizing the source domain dataset, \(D_S\).

\subsection{Variants of Sharpness-Aware Minimization for Domain Generalization}
Recently, a new research area has emerged by considering optimization over the parameter space \citep{SAM,asam}. Several studies have focused on the problem of $\theta$ overfitted to a training dataset \citep{sagmcvpr2023,gamcvpr2023,famiccv2023}. These studies confirmed that optimization on the perturbed parameter region improves the generalization performance of the model. To construct a model that can adapt to unknown domains, it is imperative that an optimized parameter point is not overfitted to the source domain datasets. Accordingly, there were some studies to utilize the parameter perturbation in order to avoid such overfitting, as elaborated below (Table \ref{objective_sam}).

Among variations in Table \ref{objective_sam}, Sharpness-Aware Minimization (SAM) \citep{SAM} is the most basic form of the parameter perturbation, which regularizes the local region of $\theta$ to be the flat minima on the loss curvature as $\min_{\theta}\max_{\|\epsilon\|_2 \leq \rho} \mathcal{L}_{D_{s}}(\theta+\epsilon) +\|\theta\|_{2}$. Here, $\epsilon$ is the perturbation vector to $\theta$; and $\rho$ is the maximum size of the perturbation vector. Subsequently, methodologies for regularizing stronger sharpness were introduced \citep{gamcvpr2023,sagmcvpr2023,famiccv2023}. These approaches exhibited incremental improvements in domain generalization tasks. Check Appendix \ref{appendix:prev_DG} for more explanations.

\begin{wraptable}{h!}{.63\textwidth}
\centering
\vskip-0.25in
\caption{Objectives of SAM variants for DG}
\vskip-0.1in
\resizebox{.63\textwidth}{!}{
\begin{tabular}{c|c}
\toprule
Method&Objective\\ \midrule
SAM \citep{SAM} &$\max\limits_{\|\epsilon\|_2 \leq \rho} \mathcal{L}_{D_{s}}(\theta+\epsilon)$\\ \midrule
GAM \citep{gamcvpr2023} &$\mathcal{L}_{D_{s}}(\theta)+\rho\max\limits_{\|\epsilon\|_2 \leq \rho}\|\nabla \mathcal{L}_{D_{s}}(\theta+\epsilon)\| $ \\ \midrule
SAGM \citep{sagmcvpr2023} &$\mathcal{L}_{D_{s}}(\theta)+\mathcal{L}_{D_{s}}(\theta+\rho \nabla \mathcal{L}_{D_{s}}(\theta) \big{/} \|\nabla \mathcal{L}_{D_{s}}(\theta)\|-\alpha\nabla \mathcal{L}_{D_{s}}(\theta))$ \\ \midrule
FAM \citep{famiccv2023} &$\mathcal{L}_{D_{s}}(\theta)+ \max\limits_{\|\epsilon\|_2 \leq \rho} \left(\mathcal{L}_{D_{s}}(\theta+\epsilon)-\mathcal{L}_{D_{s}}(\theta)\right) +\rho\max\limits_{\|\epsilon\|_2 \leq \rho}\|\nabla \mathcal{L}_{D_{s}}(\theta+\epsilon)\|$\\ \bottomrule
\end{tabular}%
}
\label{objective_sam}
\vskip-0.15in
\end{wraptable}
We are motivated by this characteristic to extend the parameter perturbation towards data perturbation, which could be effective for the exploration of unknown domains. None of SAM variants proposed data-based perturbation for unknown domain discovery. Fundamentally, SAM variants predominantly focus on identifying the flat minima for the given source domain dataset, \(D_{S}\). In Section \ref{sec:3.1}, we highlight that finding flat minima in the source domain cannot theoretically ensure generalization to unknown target domains. Consequently, we demonstrate that generalization should be obtained from unobserved domains, rather than solely the source domain. Since SAM imposes the perturbation radius of $\rho$ on only $\theta$, we hypothesize $D_s$ could be perturbed by additional mechanism to generalize $D_s$ toward $\mathscr{D}$. 

While SAM minimizes the loss over $\rho$-ball region of $\theta$, its actual implementation minimizes the maximally perturbed loss w.r.t. $\theta+\epsilon^{*}$; where the maximal perturbation, $\epsilon^{*}$, is approximated in a closed-form solution via Taylor expansion as follows:
\begin{align}  \operatorname*{max}_{\|\epsilon\|_2\leq\rho}\mathcal{L}_{D_{s}}(\theta+\epsilon)\approx\mathcal{L}_{D_{s}}(\theta+ \epsilon^{*}) \text{ where } \epsilon^{*}\approx\rho\cdot\text{sign}(\nabla_{\theta}\mathcal{L}_{D_{s}}(\theta))\frac{|\nabla_{\theta}\mathcal{L}_{D_{s}}(\theta)|}{\|\nabla_{\theta}\mathcal{L}_{D_{s}}(\theta)\|_2^2}.
\label{eq:sam1}
\end{align}
The existence of this closed-form solution of $\epsilon^{*}$ simplifies the learning procedure of SAM by avoiding min-max game and its subsequent problems, such as oscillation of parameters \citep{chu2019smoothness}. This closed-form solution can also be applied to the perturbation on the data space. 





\section{Method}
\label{sec:4.2}
\subsection{Motivation : Beyond the Source domain-based Flatness}
\label{sec:3.1}
Based on the context of domain generalization, Theorem \ref{theorem_1} derives the relationship between the SAM loss on the source domain dataset, denoted as $\max_{\|\epsilon\|_2 \leq \rho} \mathcal{L}_{D_{s}}(\theta+\epsilon)$, and the generalization loss on an arbitrary unknown domain, $\mathcal{L}_{\mathscr{D}_{e}}(\theta)$, for a model parameter $\theta \in \Theta$ as follows:
\begin{theorem}
\label{theorem_1}
\citep{rangwani2022closer} For $\theta \in \Theta$ and arbitrary domain $\mathscr{D}_{e} \in \mathscr{D}$, with probability at least $1-\delta$ over realized dataset $D_{s}$ from $\mathscr{D}_{s}$ with $|D_{s}|=n$, the following holds under some technical conditions on $\mathcal{L}_{\mathscr{D}_{e}}(\theta)$, where $h_0: \mathbb{R}_{+}\rightarrow  \mathbb{R}_{+}$ is a strictly increasing function. 
\begin{align}
\mathcal{L}_{\mathscr{D}_{e}}(\theta) 
&\leq \max_{\|\epsilon\|_2 \leq \rho} \mathcal{L}_{D_{s}}(\theta+\epsilon) + \mathcal{D}^{\text{f}}({\mathscr{D}_{s}}||{\mathscr{D}_{e}}) + h_0(\frac{\|\theta\|_2^2}{\rho^2})
\end{align}
\end{theorem}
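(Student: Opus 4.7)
The plan is to chain two distinct inequalities: a distributional shift inequality that relates the unknown-domain risk to the source-domain population risk, and a PAC-Bayes inequality that relates the source-domain population risk to the SAM loss on the empirical source sample. This two-step decomposition cleanly isolates the irreducible cost of not knowing $\mathscr{D}_e$ (captured by the f-divergence term) from the learnable cost of working with a finite source sample (captured by the SAM term together with the norm-based penalty $h_0$).

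For the first step, I would write $\mathcal{L}_{\mathscr{D}_e}(\theta) = \mathcal{L}_{\mathscr{D}_s}(\theta) + (\mathcal{L}_{\mathscr{D}_e}(\theta) - \mathcal{L}_{\mathscr{D}_s}(\theta))$ and bound the bracketed gap by $\mathcal{D}^{\text{f}}(\mathscr{D}_s \| \mathscr{D}_e)$ via a variational/change-of-measure argument. Under the technical condition that $\ell(f_\theta(x),y)$ is bounded (or more generally that the associated f-divergence admits a Donsker--Varadhan style dual for this loss), one obtains $\mathcal{L}_{\mathscr{D}_e}(\theta) \leq \mathcal{L}_{\mathscr{D}_s}(\theta) + \mathcal{D}^{\text{f}}(\mathscr{D}_s \| \mathscr{D}_e)$, which is the standard way of converting a distribution mismatch into an additive penalty.

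For the second step, I would apply a McAllester-style PAC-Bayes bound with a centered Gaussian prior $\mathcal{N}(0, \sigma^2 I)$ and posterior $\mathcal{N}(\theta, \sigma^2 I)$, which yields with probability at least $1-\delta$ a bound of the form
\begin{equation*}
\mathbb{E}_{\epsilon\sim\mathcal{N}(0,\sigma^2 I)}\!\left[\mathcal{L}_{\mathscr{D}_s}(\theta+\epsilon)\right] \leq \mathbb{E}_{\epsilon\sim\mathcal{N}(0,\sigma^2 I)}\!\left[\mathcal{L}_{D_s}(\theta+\epsilon)\right] + \sqrt{\tfrac{\|\theta\|_2^2/(2\sigma^2) + \log(n/\delta)}{2(n-1)}}.
\end{equation*}
Calibrating $\sigma$ so that a Gaussian draw lies in the $\rho$-ball with high probability (roughly $\sigma \asymp \rho/\sqrt{d}$) and then upper-bounding both expectations by the worst-case in-ball loss lets me replace $\mathbb{E}_\epsilon[\mathcal{L}_{D_s}(\theta+\epsilon)]$ by $\max_{\|\epsilon\|_2 \leq \rho}\mathcal{L}_{D_s}(\theta+\epsilon)$, while absorbing the KL term $\|\theta\|_2^2/(2\sigma^2)$ together with the $\log$ factors into a single strictly increasing function $h_0(\|\theta\|_2^2/\rho^2)$. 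Chaining this with the divergence bound of step one delivers the claimed inequality.

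The main obstacle is precisely the reconciliation between the Gaussian prior/posterior used by PAC-Bayes and the deterministic $\rho$-ball used by SAM: the Gaussian has unbounded support, so one must either truncate and pay a Gaussian-tail penalty, or choose $\sigma$ small enough that $\|\epsilon\|_2 \leq \rho$ holds with overwhelming probability and then control the tail contribution using boundedness of $\ell$. Either route introduces auxiliary $\log$ factors in $n$, $d$, and $\delta$, and the proof must verify that these, combined with the KL quadratic in $\theta$, are dominated by a single strictly increasing $h_0(\|\theta\|_2^2/\rho^2)$. The "technical conditions on $\mathcal{L}_{\mathscr{D}_e}(\theta)$" mentioned in the statement are exactly what is needed (bounded loss plus mild smoothness) to make this collapse of lower-order terms into $h_0$ rigorous.
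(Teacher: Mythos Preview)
The paper does not supply its own proof of this theorem; it is quoted directly from \citep{rangwani2022closer} and used only as motivation for why a SAM-type bound alone is insufficient under domain shift. There is therefore no in-paper argument to compare against.

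That said, your two-step decomposition is exactly the natural route and is consistent with the ingredients the paper does make explicit elsewhere. In particular, your second step---the PAC-Bayes bound $\mathcal{L}_{\mathscr{D}_s}(\theta)\le \max_{\|\epsilon\|_2\le\rho}\mathcal{L}_{D_s}(\theta+\epsilon)+h_0(\|\theta\|_2^2/\rho^2)$ under the convexity-type assumption $\mathcal{L}_{\mathscr{D}_s}(\theta)\le\mathbb{E}_\epsilon[\mathcal{L}_{\mathscr{D}_s}(\theta+\epsilon)]$---is restated verbatim in the appendix as the SAM theorem (Theorem~\ref{sam_theorem}). Chaining it with the change-of-measure inequality $\mathcal{L}_{\mathscr{D}_e}(\theta)\le\mathcal{L}_{\mathscr{D}_s}(\theta)+\mathcal{D}^{\mathrm{f}}(\mathscr{D}_s\|\mathscr{D}_e)$ is the standard way to obtain the cited bound, and your discussion of the Gaussian-tail/$\rho$-ball reconciliation correctly identifies where the ``technical conditions'' are spent. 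Nothing in your plan is at odds with the paper; there is simply no paper-side proof to diff against.
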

Theorem \ref{theorem_1} provides an upper bound for $\mathcal{L}_{\mathscr{D}_{e}}(\theta)$. The second term of this upper bound, represented as $\mathcal{D}^{\text{f}}({\mathscr{D}_{s}}||{\mathscr{D}_{e}})$, corresponds to the distribution discrepancy between \(\mathscr{D}_{s}\) and \(\mathscr{D}_{e}\). Notably, $\mathcal{D}^{\text{f}}$ denotes the discrepancy based on $f$-divergence. When $e \neq s$, \(D_{e}\) is inaccessible in the context of domain generalization. Accordingly, the SAM optimization on $D_{s}$ leaves an unoptimizable term in its upper bound, posing challenges for the domain generalization.

In a setting that only \(D_{s}\) and model parameter \(\theta\) are accessible, generating unseen domain data becomes infeasible. Nontheless, by perturbing \(D_{s}\) towards the direction that is most sensitive given \(\theta\), we can emulate the worst-case scenario for an unobserved domain \citep{GroupDRO}. While parameters trained via the SAM optimizer may exhibit flat regions based on the source domain dataset, $D_{s}$, there is no theoretic study on the flat minima under unobserved domains. By identifying the worst-case scenario that maximizes the loss landscape difference between domains, our methodology seeks the generalization across the unknown domains.
\subsection{UDIM : Unknown Domain Inconsistency Minimization}
This section proposes an objective based on both parameter and data perturbed regions for domain generalization, coined \underline{\textbf{U}}nknown \underline{\textbf{D}}omain \underline{\textbf{I}}nconsistency \underline{\textbf{M}}inimization (UDIM). UDIM minimizes the loss landscape discrepancy between the source domain and unknown domains, where unknown domains are empirically realized by perturbing instances from the source domain dataset, $D_{s}$. 

Let $\Theta_{\theta,\rho} = \{\theta^{'} | \|\theta^{'}-\theta\|_{2}\leq \rho\}$, which is $\rho$-ball perturbed region of a specific parameter point $\theta$. When training a parameter \(\theta\) on an arbitrary domain dataset \(D_{e}\) with the SAM optimizer, some regions within \(\Theta_{\theta,\rho}\) are expected to be optimized as flat regions for source domain. Following the notation of \cite{parascandolo2020learning}, define $N^{\gamma,\rho}_{e,\theta} := \{\theta^{'} \in \Theta_{\theta,\rho} \text{ } |\text{ } \big{|}\mathcal{L}_{D_{e}}(\theta^{'})-\mathcal{L}_{D_{e}}(\theta)\big{|} \leq \gamma \}$, which is the region in $\Theta_{\theta,\rho}$ where the loss value of $\theta$ deviates by no more than $\gamma$. Given small enough values of $\mathcal{L}_{D_{e}}(\theta)$ and $\gamma$, $N^{\gamma,\rho}_{e,\theta}$ could be recognized as a flat minima for $e$-th domain. 

We aim to utilize the flat minima of \( \theta \) obtained through training on \( D_{s} \) using the SAM optimizer, where we represent $s$-th domain as our source domain. The goal is to regularize the loss and its corresponding landscape of unknown domains, so that the flat minima of the unknown domain aligns with that of the source domain. By optimizing the domain of worst-case deviation in the loss landscape from $D_{s}$, we facilitate regularization across a range of intermediary domains. Eq. \ref{inconsistency_1} formalizes our motivation, which is \textit{cross-domain inconsistency score}: 
\begin{align}
\label{inconsistency_1}
    \mathcal{I}^{\gamma}_{s}(\theta)=\max_{e\in \mathcal{E}}\max_{\theta^{'} \in N^{\gamma,\rho}_{s,\theta}}|\mathcal{L}_{D_{e}}(\theta^{'})-\mathcal{L}_{D_{s}}(\theta^{'})|
\end{align}
In the above equation, the inner maximization seeks the worst-case parameter point \(\theta^{'}\) that amplifies the domain-wise loss disparity, while the outer maximization identifies $e$-th domain that maximizes $\max_{\theta^{'} \in N^{\gamma,\rho}_{s,\theta}}|\mathcal{L}_{D_{e}}(\theta^{'})-\mathcal{L}_{D_{s}}(\theta^{'})|$. \cite{hardtovary} utilizes an equation similar to Eq. \ref{inconsistency_1}, and this equation is also employed to indicate \(\theta\) with a loss surface that is invariant to the environment changes. Our methodology differs from \cite{hardtovary}, which simply uses inconsistency as a metric, in that we employ it directly as the objective we aim to optimize. Let's assume that \(\theta\) exhibits sufficiently low loss along with flat loss landscape of $D_{s}$. Further, if we can identify \(\theta\) with a low value of \(\mathcal{I}^{\gamma}_{s}(\theta)\), then this \(\theta\) would also demonstrate consistently good generalization for unknown domains. This motivation leads to the UDIM's objective, which specifically targets the reduction of the cross-domain inconsistency score across unobserved domains.
\begin{figure*}\centering
\hfill
\begin{subfigure}{0.585\textwidth}
\includegraphics[width=0.325\linewidth]{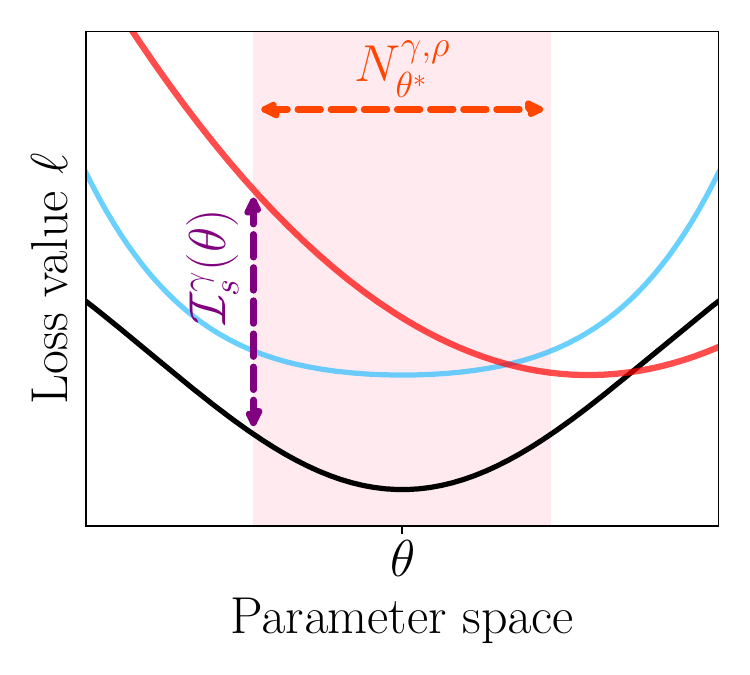}
\includegraphics[width=0.325\linewidth]{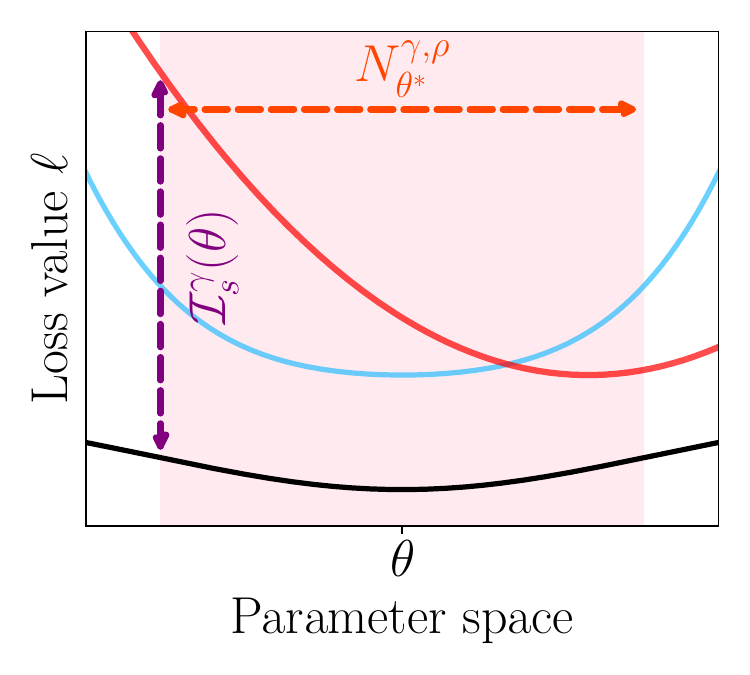}
\includegraphics[width=0.325\linewidth]{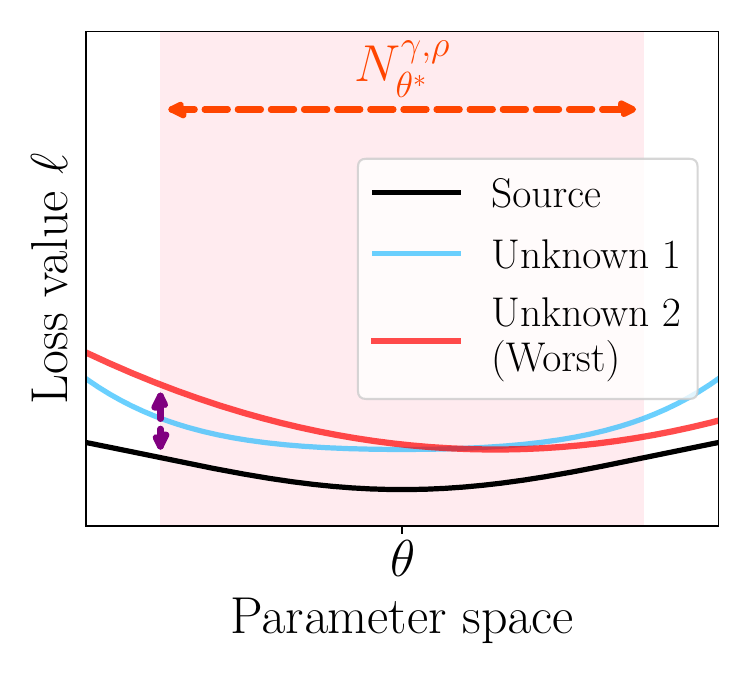}
\caption{Changes in the loss landscape across domains based on the perturbed parameter space of $\theta$: initial state (left), post-SAM optimization (center), and subsequent to UDIM application (right).}\label{fig:1(a)}
\end{subfigure}%
\hfill
\begin{subfigure}{0.390\textwidth}
\includegraphics[width=0.49\linewidth]{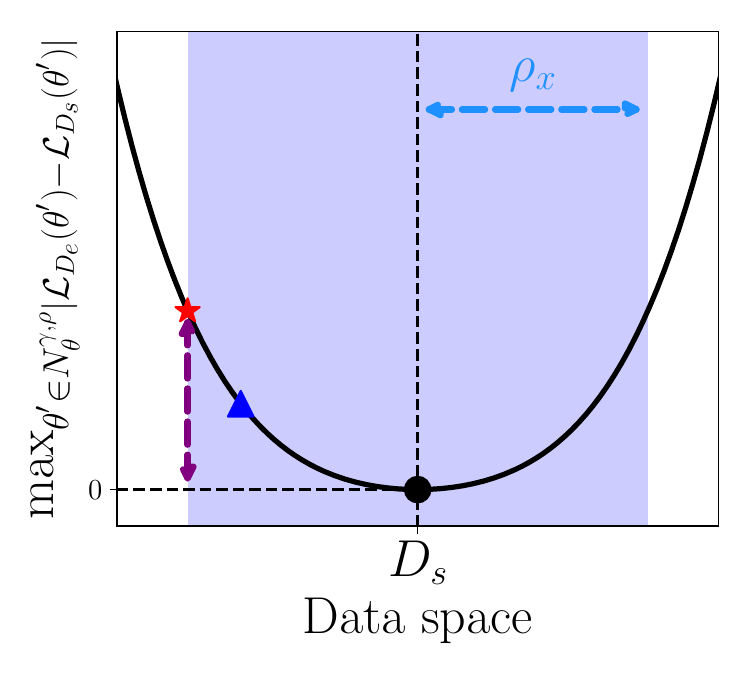}
\includegraphics[width=0.49\linewidth]{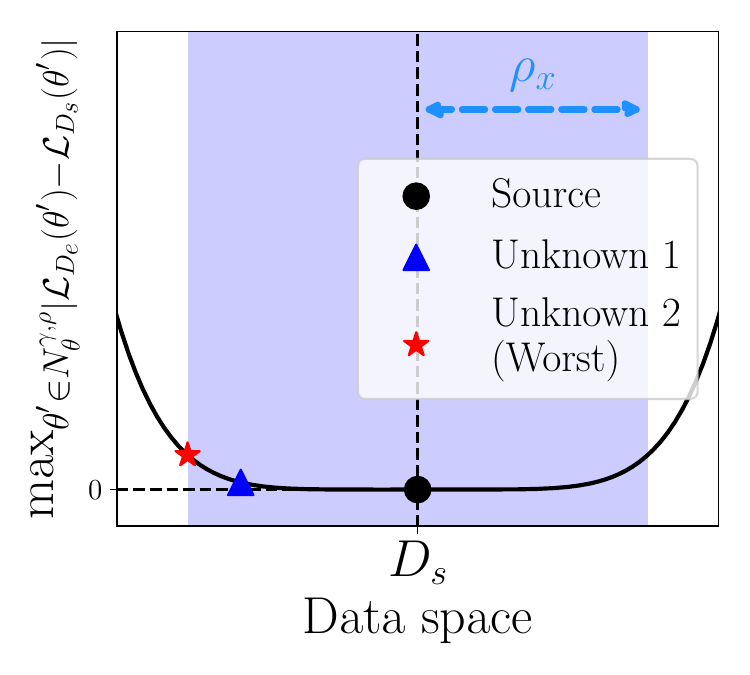}
\caption{Changes in domain-wise inconsistency sharpness based on data space of $D_{s}$ before (left) and after (right) applying UDIM.}\label{fig:1(b)}
\end{subfigure}%
\caption{Illustration of our model, UDIM, based on parameter space (a) and data space (b). (a) We define flatness within a perturbed region by minimizing the inconsistency loss relative to the unknown domains, around the flat region derived from the source domain. (b) Furthermore, by reducing the domain-wise inconsistency within the input perturbed regions, where $\rho_{x}$ denotes perturbation length, our method can also be interpreted as an data space perspective of SAM.}\label{fig:motivation_fig}
\vskip-0.15in
\end{figure*}

\paragraph{Objective Formulation of UDIM} 
While we define the cross-domain inconsistency as \(\mathcal{I}^{\gamma}_{s}(\theta)\), the final formulation of UDIM is the optimization of $\theta$ regarding both source and unknown domain losses from the flat-minima perspective. Eq. \ref{main_objective} is the parameter optimization of our proposal:
\begin{align}
\label{main_objective}
\min_{\theta}\Big{(}\max_{\|\epsilon\|_2 \leq \rho} \mathcal{L}_{D_{s}}(\theta+\epsilon) + \lambda_{1}\max_{e \in \mathcal{E}}\max_{\theta^{'} \in N^{\gamma,\rho}_{s,\theta}} |\mathcal{L}_{D_{e}}(\theta^{'})-\mathcal{L}_{D_{s}}(\theta^{'})| + \lambda_{2}\|\theta\|_{2}\Big{)}
\end{align}
The objective of UDIM consists of three components. The first term is a SAM loss on $D_{s}$, guiding $\theta$ toward a flat minima in the context of $D_{s}$. Concurrently, as \(\theta\) provides flattened local landscape, the second term weighted with $\lambda_{1}$ is a region-based loss disparity between the worst-case domain $e$ and the source domain. Subsequently, the algorithm seeks to diminish the region-based loss disparity between the worst-case domain and the source domain. Figure \ref{fig:motivation_fig} (a) illustrates the update of the loss landscape in the parameter-space for each domain, based on the optimization of Eq. \ref{main_objective}. As SAM optimization on $D_{s}$ progresses, $N^{\gamma,\rho}_{s,\theta}$ is expected to be broadened. However, SAM optimization does not imply the minimization of $\mathcal{I}^{\gamma}_{s}(\theta)$ (center). Given that the optimization of $\mathcal{I}^{\gamma}_{s}(\theta)$ is conducted on $N^{\gamma,\rho}_{s,\theta}$, we expect the formation of flat minima spanning all domains in the optimal state (rightmost).

\begin{wrapfigure}{h}{0.3\textwidth}
\vskip-0.1in
\begin{subfigure}{\linewidth}
\centering
\includegraphics[width=\linewidth]{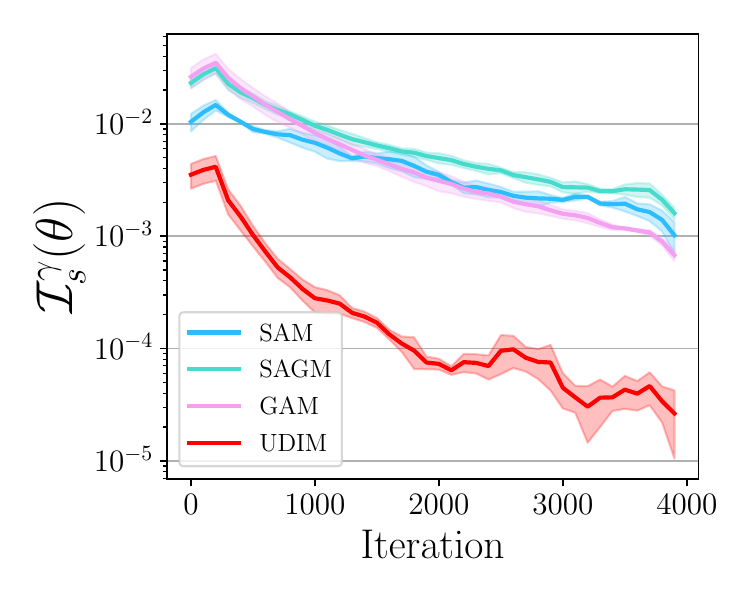}
\end{subfigure}
\vskip-0.15in
\caption{Inconsistency score of each method on PACS training dataset (X-axis: training iteration). Y-axis is depicted in a log-scale.}
\label{fig:inconsistency_line}
\vskip-0.2in
\end{wrapfigure}
The optimization of Eq. \ref{main_objective} can also be interpreted as minimizing another form of sharpness in the data space. Let's suppose all other domains lie within a finite perturbation of the source domain's dataset. In the context of the data space over the $D_{s}$, the optimization can be seen as identifying the worst-case perturbed dataset, $D_{e}$, by evaluating \(\max_{\theta^{'} \in N^{\gamma,\rho}_{s,\theta}} |\mathcal{L}_{D_{e}}(\theta^{'})-\mathcal{L}_{D_{s}}(\theta^{'})|\). If we view the perturbation of the $D_{s}$, not as discrete choices but as a continuum within the data-space; our optimization can be viewed as minimizing the sharpness of domain-wise inconsistency in the data space. Figure \ref{fig:motivation_fig} (b) illustrates this interpretation. After such optimization, the resulting parameter $\theta$ can offer the consistent generalization over domains located within the perturbed data space. 

At this juncture, a crucial consideration is whether the SAM optimization on $D_{s}$ focuses on minimizing the second term, $\mathcal{I}^{\gamma}_{s}(\theta)$, or not. We illustrate the limited capability of SAM variants in minimizing the second term, by an analytical illustration of Figure \ref{fig:motivation_fig} (a); and by an empirical demonstration of Figure \ref{fig:inconsistency_line}. Therefore, UDIM covers the optimization area that SAM does not operate.

\paragraph{Theoretical Analysis of UDIM} 
Given the definition of $\Theta_{\theta,\rho} = \{\theta^{'} | \|\theta^{'}-\theta\|_{2}\leq \rho\}$, we introduce 
 $\Theta_{\theta, \rho{'}} = \argmax_{\Theta_{\theta, \hat{\rho}} \subseteq N^{\gamma,\rho}_{s,\theta}} \hat{\rho}$, which is the largest $\rho{'}$-ball region around $\theta$ in $N^{\gamma,\rho}_{s,\theta}$.\footnote{For the sake of simplicity, we do not inject the notation of $s$ or $\gamma$ in $\Theta_{\theta, \rho{'}}$.} Theorem \ref{ours_theorem_2_main} introduces the generalization bound of Eq. \ref{main_objective}, which is the objective of UDIM. Theorem \ref{ours_theorem_2_main} states that Eq. \ref{main_objective} can become the upper bound of $\mathcal{L}_\mathscr{D}(\theta)$, which is the population risk over all domains.
\begin{theorem}
\label{ours_theorem_2_main}
For $\theta \in \Theta$ and arbitrary domain $e \in \mathcal{E}$, with probability at least $1-\delta$ over realized dataset $D_{e}$ from $\mathscr{D}_{e}$, the following holds under technical conditions on $\mathcal{L}_{\mathscr{D}_{e}}(\theta)$ and $\mathcal{L}_{D_e}(\theta)$, where $h: \mathbb{R}_{+}\rightarrow  \mathbb{R}_{+}$ is a strictly increasing function. (Proof in Appendix \ref{appendix:proof_thm3_2}.)
\begin{align}
\label{theorem_eq}
\mathcal{L}_\mathscr{D}(\theta) 
&\leq \max_{\|\epsilon\|_2 \leq \rho} \mathcal{L}_{D_{s}}(\theta+\epsilon) + (1-\frac{1}{|\mathcal{E}|})\max_{e \in \mathcal{E}}\max_{\theta^{'} \in N^{\gamma,\rho}_{s,\theta}} |\mathcal{L}_{D_{e}}(\theta^{'})-\mathcal{L}_{D_{s}}(\theta^{'})| + h(\frac{\|\theta\|_2^2}{\rho^{2}})
\end{align} 
\end{theorem}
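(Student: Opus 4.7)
The plan is to reduce Theorem~\ref{ours_theorem_2_main} to Theorem~\ref{theorem_1} by a domain-wise decomposition and then to convert the resulting distribution-level $f$-divergence into the empirical region-based inconsistency term. First, I would write the target population risk as
$$\mathcal{L}_{\mathscr{D}}(\theta) = \sum_{e \in \mathcal{E}} p(e)\, \mathcal{L}_{\mathscr{D}_e}(\theta),$$
and apply Theorem~\ref{theorem_1} term by term with the single source domain $\mathscr{D}_s$ throughout. For every $e \in \mathcal{E}$ this yields, with probability at least $1-\delta$ over $D_s$ (and also over $D_e$ when one later needs to pass to empirical losses),
$$\mathcal{L}_{\mathscr{D}_e}(\theta) \leq \max_{\|\epsilon\|_2 \leq \rho} \mathcal{L}_{D_s}(\theta + \epsilon) + \mathcal{D}^{\text{f}}(\mathscr{D}_s \| \mathscr{D}_e) + h_0\bigl(\|\theta\|_2^2/\rho^2\bigr),$$
so the SAM term and the regularization term already have the desired form independent of $e$.

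Taking the $p(e)$-weighted combination, the divergence contribution equals $\sum_{e} p(e)\, \mathcal{D}^{\text{f}}(\mathscr{D}_s \| \mathscr{D}_e)$. Because $\mathcal{D}^{\text{f}}(\mathscr{D}_s \| \mathscr{D}_s) = 0$, only the $e \neq s$ terms survive, and I would bound the sum by $(1 - p(s)) \max_{e \in \mathcal{E}} \mathcal{D}^{\text{f}}(\mathscr{D}_s \| \mathscr{D}_e)$. Under the convention that all $|\mathcal{E}|$ domains are weighted uniformly, $p(s) = 1/|\mathcal{E}|$ produces exactly the prefactor $(1 - 1/|\mathcal{E}|)$ appearing in Eq.~\ref{theorem_eq}.

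The main obstacle — and the step where most of the work sits — is replacing the population divergence $\mathcal{D}^{\text{f}}(\mathscr{D}_s \| \mathscr{D}_e)$ by the empirical, parameter-region-restricted quantity $\max_{\theta' \in N^{\gamma,\rho}_{s,\theta}} |\mathcal{L}_{D_e}(\theta') - \mathcal{L}_{D_s}(\theta')|$. My plan is to invoke a variational characterization: the family $\{\ell(\cdot, \theta') : \theta' \in N^{\gamma,\rho}_{s,\theta}\}$ of loss functions can be used as a test class to define an integral probability metric, so that
$$|\mathcal{L}_{\mathscr{D}_e}(\theta') - \mathcal{L}_{\mathscr{D}_s}(\theta')| \leq \text{IPM}_{N^{\gamma,\rho}_{s,\theta}}(\mathscr{D}_s, \mathscr{D}_e),$$
and the IPM upper bounds $\mathcal{D}^{\text{f}}$ (up to constants) via Pinsker-type or Donsker–Varadhan-type inequalities, under the technical conditions on $\mathcal{L}_{\mathscr{D}_e}$ that the statement already assumes. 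Passing from the population loss disparity to its empirical version $|\mathcal{L}_{D_e}(\theta') - \mathcal{L}_{D_s}(\theta')|$ uniformly over the compact neighborhood $N^{\gamma,\rho}_{s,\theta}$ is then a standard covering-number argument, and the residual sampling error is of order depending on $\|\theta\|_2$ and the covering complexity of the $\rho$-ball.

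Finally, I would collect the SAM term, the $(1-1/|\mathcal{E}|)$-weighted maximum inconsistency, the Lipschitz/IPM conversion constants, and the uniform-concentration remainder, and absorb everything except the two leading terms into a single strictly increasing function $h(\|\theta\|_2^2/\rho^2)$ that dominates the original $h_0$. The delicate bookkeeping here — ensuring that $h$ remains strictly increasing and that the concentration rate does not depend on $e$ — is the most error-prone point, but it is essentially mechanical once the IPM bound in the previous paragraph is in place; this yields the bound of Eq.~\ref{theorem_eq}.
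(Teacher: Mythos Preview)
Your decomposition into per-domain terms and the extraction of the $(1-1/|\mathcal{E}|)$ prefactor are fine, but the core conversion step is in the wrong direction and cannot be repaired as stated. You need $\mathcal{D}^{\text{f}}(\mathscr{D}_s\|\mathscr{D}_e)\le \max_{\theta'\in N^{\gamma,\rho}_{s,\theta}}|\mathcal{L}_{D_e}(\theta')-\mathcal{L}_{D_s}(\theta')|$ (plus absorbable remainders), i.e.\ an $f$-divergence bounded \emph{above} by an IPM over the restricted test class $\{\ell(\cdot,\theta'):\theta'\in N^{\gamma,\rho}_{s,\theta}\}$. Pinsker and Donsker--Varadhan go the other way: they bound IPMs (e.g.\ total variation) \emph{by} $f$-divergences, because the $f$-divergence is already a supremum over essentially all bounded measurable functions, while your IPM runs only over a small neighborhood of losses. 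Concretely, one can have $\mathscr{D}_s$ and $\mathscr{D}_e$ far apart in $\mathcal{D}^{\text{f}}$ while every $\ell(\cdot,\theta')$ with $\theta'$ in a tiny $\rho$-ball happens to have nearly equal expectation under both; then your right-hand side is small and the claimed inequality fails. So the route through Theorem~\ref{theorem_1} and then ``downgrading'' $\mathcal{D}^{\text{f}}$ to the inconsistency term does not close.

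The paper avoids this obstacle by never invoking Theorem~\ref{theorem_1} at all. Instead it applies the SAM PAC-Bayes bound (their Theorem~B.1, the one underlying SAM) \emph{separately to each domain} $e$, obtaining $\mathcal{L}_{\mathscr{D}_e}(\theta)\le \max_{\|\epsilon\|_2\le\rho'}\mathcal{L}_{D_e}(\theta+\epsilon)+h_e(\|\theta\|_2^2/\rho^2)$ with the perturbation radius $\rho'$ chosen so that $\Theta_{\theta,\rho'}$ is the largest ball contained in $N^{\gamma,\rho}_{s,\theta}$. The set inclusion $\Theta_{\theta,\rho'}\subseteq N^{\gamma,\rho}_{s,\theta}$ immediately gives $\max_{\|\epsilon\|_2\le\rho'}\mathcal{L}_{D_e}(\theta+\epsilon)\le \max_{\theta'\in N^{\gamma,\rho}_{s,\theta}}\mathcal{L}_{D_e}(\theta')$, and then a short algebraic rearrangement together with the assumption $\max_{e}\mathcal{L}_{D_e}(\theta')\ge \mathcal{L}_{D_s}(\theta')$ on $N^{\gamma,\rho}_{s,\theta}$ turns the per-domain maxima into the single inconsistency term. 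The point is that the inconsistency already \emph{is} a perturbed empirical loss, so no divergence-to-IPM conversion is needed.
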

In such a case, Theorem \ref{ours_theorem_2_main} retains the same form of weight decay term as presented in Theorem \ref{theorem_1}. Unlike Theorem \ref{theorem_1}, which contains terms that are inherently unoptimizable, our objective is capable of minimizing every term encompassed in the upper bound of Theorem \ref{ours_theorem_2_main} because we do not have an inaccessible term, $\mathcal{D}^{\text{f}}({\mathscr{D}_{s}}||{\mathscr{D}_{e}})$. 
\subsection{Implementation of UDIM}
\label{sec:implementation}
This section reformulates the second term of Eq. \ref{main_objective} into an implementable form. We first formalize the perturbation of $D_{s}$ to emulate the worst-case domain for $\max_{\theta^{'} \in N^{\gamma,\rho}_{s,\theta}} |\mathcal{L}_{D_{e}}(\theta^{'})-\mathcal{L}_{D_{s}}(\theta^{'})|$. 
\paragraph{Inconsistency-Aware Domain Perturbation on $D_{s}$} For clarification, we explain the perturbation process based on an arbitrary input instance \( x \) from \(D_{s}\). Given that the magnitude of the perturbation vector for input \( x \) is constrained to \( \rho_{x} \), the perturbed input \( \tilde{x} \) can be expressed as follows:
\begin{align}
&\tilde{x}=x+\underset{\epsilon_{x} : \|\epsilon_{x}\|_{2}\leq \rho_{x}}{\text{argmax}}\max\limits_{\theta^{'}\in N^{\gamma,\rho}_{s,\theta}}\Big{(}\ell(x+\epsilon_{x},\theta^{'})- \ell(x,\theta^{'})\Big{)}\approx x+\underset{\epsilon_{x} : \|\epsilon_{x}\|_{2}\leq \rho_{x}}{\text{argmax}}\max\limits_{\theta^{'}\in N^{\gamma,\rho}_{s,\theta}}\ell(x+\epsilon_{x},\theta^{'}) \label{cancel_eq} \\ 
&\underset{\text{1st Taylor}}{\approx} x+\underset{\epsilon_{x} : \|\epsilon_{x}\|_{2}\leq \rho_{x}}{\text{argmax}}\Big{(}\ell(x+\epsilon_{x},\theta)+\rho'\|\nabla_{\theta}\ell(x+\epsilon_{x},\theta)\|_{2}\Big{)} \label{cancel_eq_2_main}
\end{align}
Assuming $N^{\gamma,\rho}_{s,\theta}$ as flat minima of $\theta$, $\ell(x,\theta^{'})$ is almost invariant for \( \theta^{'} \in N^{\gamma,\rho}_{s,\theta} \). We assume that the invariant value of \( \ell(x,\theta^{'}) \) does not significantly influence the direction of \( x \)'s perturbation because it becomes almost constant in $N^{\gamma,\rho}_{s,\theta}$. Therefore, we cancel out this term in Eq. \ref{cancel_eq}. 

Additionally, as we cannot specify the regional shape of $N^{\gamma,\rho}_{s,\theta}$, it is infeasible to search maximal point $\theta^{'} \in N^{\gamma,\rho}_{s,\theta}$. As a consequence, we utilize $\Theta_{\theta,\rho{'}}$, which is the largest $\rho{'}$-ball region within $N^{\gamma,\rho}_{s,\theta}$, to approximately search the maximal point $\theta^{'}$. It should be noted that $\Theta_{\theta,\rho{'}} \subseteq N^{\gamma,\rho}_{s,\theta} \subseteq \Theta_{\theta,\rho}$, where we assume that $\rho{'}$ gradually approaches \(\rho\) during the SAM optimization. Through the first-order Taylor expansion\footnote{We empirically found out that extending it into second-order does not affect the resulting performance.} for the maximum point within $\Theta_{\rho{'}}$, we can design the aforementioned perturbation loss as Eq. \ref{cancel_eq_2_main}. Consequently, the perturbation is carried out in a direction that maximizes the loss and $\rho{'}$-weighted gradient norm of the original input \(x\).
\paragraph{Inconsistency Minimization on $\theta$} After the perturbation on $x \in D_{s}$, we get an inconsistency-aware perturbed dataset, $\tilde{D}_{s}$; which approximates the worst-case of unobserved domains. Accordingly, we can formulate the optimization of \(\mathcal{I}^{\gamma}_{s}(\theta)\) based on $\theta$ as $\min_{\theta}\max\limits_{\theta^{'}\in N^{\gamma,\rho}_{s,\theta}}\Big{(}\mathcal{L}_{\tilde{D}_{s}}(\theta^{'})- \mathcal{L}_{{D}_{s}}(\theta^{'})\Big{)}$.
For the above min-max optimization, we approximate the search for the maximum parameter $\theta^{'}$ in a closed-form using second order taylor-expansion, similar to the approach of SAM in Eq. \ref{eq:sam1}. 
\begin{align}
&\max\limits_{\theta^{'}\in N^{\gamma,\rho}_{s,\theta}}\Big{(}\mathcal{L}_{\tilde{D}_{s}}(\theta^{'})- \mathcal{L}_{{D}_{s}}(\theta^{'})\Big{)}{\approx}\mathcal{L}_{\tilde{D}_{s}}(\theta)-\mathcal{L}_{{D}_{s}}(\theta)+\rho{'}\|\nabla_{\theta}\mathcal{L}_{\tilde{D}_{s}}(\theta)\|_{2}+\max\limits_{\theta^{'}\in N^{\gamma,\rho}_{s,\theta}}\frac{1}{2}\theta^{'\top} \mathbf{H}_{\tilde{D}_{s}}\theta^{'} \label{hessian_appear_main} \\ 
&=\Big{(}\mathcal{L}_{\tilde{D}_{s}}(\theta)-\mathcal{L}_{{D}_{s}}(\theta)\Big{)}+\rho{'}\|\nabla_{\theta}\mathcal{L}_{\tilde{D}_{s}}(\theta)\|_{2}+\gamma\max\limits_{i} \lambda^{\tilde{D}_{s}}_{i} / \lambda^{{D}_{s}}_{i} \label{eigen_appear_main} 
\end{align}
Full derivation and approximation procedure of Eq. \ref{hessian_appear_main} and Eq. \ref{eigen_appear_main} are in Appendix \ref{appendix:eq8 full derivation}. $\mathbf{H}_{\tilde{D}_{s}}$ in Eq. \ref{hessian_appear_main} denotes a Hessian matrix of the perturbed dataset, $\tilde{D}_{s}$. Also, $\lambda^{\tilde{D}_{s}}_{i}$ in Eq. \ref{eigen_appear_main} denotes $i$-th eigenvalue of $\mathbf{H}_{\tilde{D}_{s}}$. Finally, Eq. \ref{eigen_appear_main} becomes the objective with three components: 1) the loss difference between $\tilde{D}_{s}$ and $D_{s}$, 2) the gradient norm of $\tilde{D}_{s}$ and 3) the maximum eigenvalue ratio between $\tilde{D}_{s}$ and $D_{s}$. Note that $\lambda^{\tilde{D}_{s}}_{i} / \lambda^{{D}_{s}}_{i}$ is minimized when $\mathbf{H}_{\tilde{D}_{s}}$ becomes equivalent to $\mathbf{H}_{D_{s}}$.

While Eq. \ref{eigen_appear_main} is differentiable with respect to \(\theta\) and can thus be utilized as a tractable objective, computing the Hessian matrix for an over-parameterized \(\theta\) is computationally demanding. In line with \cite{fishr}, we replace the objective with the Hessian matrix (Eq. \ref{eigen_appear_main}) with an optimization based on gradient variance. Accordingly, Eq. \ref{last_opt} represents the gradient variance-based objective as an optimization with respect to $\theta$ as follows: (See detailed derivations in Appendix \ref{appendix:gradvar})
\begin{align}
\label{last_opt}
\min_{\theta}\rho{'}\|\nabla_{\theta}\mathcal{L}_{\tilde{D}_{s}}(\theta)\|_{2}+\|\text{Var}(\mathbf{G}_{\tilde{D}_{s}})-\text{Var}(\mathbf{G}_{{D}_{s}})\|_{2}
\end{align}
Here, ${\mathbf{g}}_{i}$ is a per-sample gradient for $i$-th sample; and $\mathbf{G}_{{D}}=\{{\mathbf{g}}_{i}\}^{|D|}_{i=1}$ is a set of per-sample gradient for $x_{i} \in D$. Accordingly, the variance of $\mathbf{G}_{{D}}$, which we denote as $\text{Var}(\mathbf{G}_{{D}})$, is calculated as $\text{Var}(\mathbf{G}_{{D}}) = \frac{1}{|D|-1}\sum^{|D|}_{i=1}\big{(}\mathbf{g}_{i}-\bar{\mathbf{g}}\big{)}^{2}$. Matching the gradient variances between two distinct datasets encapsulates a specific form of loss matching between them \citep{fishr}. This allows us to unify loss matching and Hessian matching under a single optimization using $\text{Var}(\mathbf{G}_{{D}})$.

\textbf{Summary} Our methodology applies the perturbation technique to both input $x$ and parameter $\theta$. Particularly, the perturbation on inputs and parameters is necessary to minimize $\mathcal{I}^{\gamma}_{s}(\theta)$ under unknown domains, which is the unique contribution of this work. Ablation study in Section \ref{exp:ablation_study} supports that UDIM, which is the combination of the Eq. \ref{cancel_eq_2_main} and Eq. \ref{last_opt}, yields the best performance compared to various perturbations and optimizations. Algorithm of UDIM is in Appendix \ref{appendix:alg}.

A single iteration of UDIM optimization with SAM loss can be described as the following procedure:

1. Construction of $\tilde{D_s}$ via Inconsistency-Aware Domain Perturbation on $D_{s}$
\begin{align}
\tilde{D_s}=\{(\tilde{x}_i,y_i) \,| \,(x_i,y_i) \in D_{s}\} {\text{ where }}
\tilde{x}_i=x_i+\rho_{x}\frac{\nabla_{x_{i}}\big{(}\ell(x,\theta_{t})+\rho'\|\nabla_{\theta_{t}}\ell(x,\theta_{t})\|_{2}\big{)}}{\big{\|}\nabla_{x_{i}}\big{(}\ell(x,\theta_{t})+\rho'\|\nabla_{\theta_{t}}\ell(x,\theta_{t})\|_{2}\big{)}\big{\|}_{2}}\label{cancel_eq_2}
\end{align}
2. SAM loss and Inconsistency Minimization on the current parameter $\theta_{t}$
\begin{align}
\label{final_param_obj}
\theta_{t+1}=\theta_{t}-\eta \nabla_{\theta_{t}}\Big{(}\max_{\|\epsilon\|_2 \leq \rho} \mathcal{L}_{D_{s}}(\theta_{t}+\epsilon)+\rho{'}\|\nabla_{\theta_{t}}\mathcal{L}_{\tilde{D}_{s}}(\theta_{t})\|_{2}+\|\text{Var}(\mathbf{G}_{\tilde{D}_{s}})-\text{Var}(\mathbf{G}_{{D}_{s}})\|_{2}+\lambda_{2}\|\theta_{t}\|_{2}\Big{)} 
\end{align}

\begin{table}[t]
\vskip-0.1in
\caption{Test accuracy for CIFAR-10-C. Each level states the severity of corruption. \textbf{Bold} is the best case of each column or improved performances with the respective sharpness-based optimizers.}
\centering
\resizebox{\textwidth}{!}{
\begin{tabular}{c| l |ccccc| c}\toprule
\multicolumn{2}{c|}{Method} & level1 & level2 & level3 & level4 & level5 & Avg \\ \midrule
\multirow{17}{*}{\shortstack{LOO\\DG\\Based}}&ERM & 75.9\scriptsize{$\pm$ 0.5} & 72.9\scriptsize{$\pm$ 0.4} & 70.0\scriptsize{$\pm$ 0.4} & 65.9\scriptsize{$\pm$ 0.4} & 59.9\scriptsize{$\pm$ 0.5} & 68.9 \\
&IRM \citep{irm} & 37.6\scriptsize{$\pm$ 2.7} & 36.0\scriptsize{$\pm$ 2.8} & 34.6\scriptsize{$\pm$ 2.6} & 32.8\scriptsize{$\pm$ 2.1} & 30.8\scriptsize{$\pm$ 1.9} & 34.3 \\
&GroupDRO \citep{GroupDRO} & 76.0\scriptsize{$\pm$ 0.1} & 72.9\scriptsize{$\pm$ 0.1} & 69.8\scriptsize{$\pm$ 0.2} & 65.5\scriptsize{$\pm$ 0.3} & 59.5\scriptsize{$\pm$ 0.5} & 68.7 \\
&OrgMixup \citep{mixup} & 77.1\scriptsize{$\pm$ 0.0} & 74.2\scriptsize{$\pm$ 0.1} & 71.4\scriptsize{$\pm$ 0.1} & 67.4\scriptsize{$\pm$ 0.2} & 61.2\scriptsize{$\pm$ 0.1} & 70.3 \\
&Mixup \citep{mixupda} & 76.3\scriptsize{$\pm$ 0.3} & 73.2\scriptsize{$\pm$ 0.2} & 70.2\scriptsize{$\pm$ 0.2} & 66.1\scriptsize{$\pm$ 0.1} & 60.1\scriptsize{$\pm$ 0.1} & 69.2 \\
&CutMix \citep{cutmix} & 77.9\scriptsize{$\pm$ 0.0} & 74.2\scriptsize{$\pm$ 0.1} & 70.8\scriptsize{$\pm$ 0.2} & 66.3\scriptsize{$\pm$ 0.3} & 60.0\scriptsize{$\pm$ 0.4} & 69.8 \\
&MTL \citep{mtl} & 75.6\scriptsize{$\pm$ 0.5} & 72.7\scriptsize{$\pm$ 0.4} & 69.9\scriptsize{$\pm$ 0.2} & 65.9\scriptsize{$\pm$ 0.0} & 60.2\scriptsize{$\pm$ 0.3} & 68.9 \\
&MMD \citep{mmd} & 76.4\scriptsize{$\pm$ 0.1} & 73.2\scriptsize{$\pm$ 0.2} & 70.0\scriptsize{$\pm$ 0.3} & 65.7\scriptsize{$\pm$ 0.3} & 59.6\scriptsize{$\pm$ 0.5} & 69.0 \\
&CORAL \cite{coral} & 76.0\scriptsize{$\pm$ 0.4} & 72.9\scriptsize{$\pm$ 0.2} & 69.9\scriptsize{$\pm$ 0.0} & 65.8\scriptsize{$\pm$ 0.1} & 59.6\scriptsize{$\pm$ 0.1} & 68.8 \\
&SagNet \citep{sagnet} & 76.6\scriptsize{$\pm$ 0.2} & 73.6\scriptsize{$\pm$ 0.3} & 70.5\scriptsize{$\pm$ 0.4} & 66.4\scriptsize{$\pm$ 0.4} & 60.1\scriptsize{$\pm$ 0.4} & 69.5 \\
&ARM \citep{arm} & 75.7\scriptsize{$\pm$ 0.1} & 72.9\scriptsize{$\pm$ 0.1} & 69.9\scriptsize{$\pm$ 0.2} & 65.9\scriptsize{$\pm$ 0.2} & 59.8\scriptsize{$\pm$ 0.3} & 68.8 \\
&DANN \citep{dann} &75.4\scriptsize{$\pm$0.4}&72.6\scriptsize{$\pm$0.3}&69.7\scriptsize{$\pm$0.2}&65.6\scriptsize{$\pm$0.0}&59.6\scriptsize{$\pm$0.2}&68.6\\
&CDANN \citep{cdann} &75.3\scriptsize{$\pm$0.2}&72.3\scriptsize{$\pm$0.2}&69.4\scriptsize{$\pm$0.2}&65.3\scriptsize{$\pm$0.1}&59.4\scriptsize{$\pm$0.2}&68.3\\
&VREx \cite{vrex} & 76.0\scriptsize{$\pm$ 0.2} & 73.0\scriptsize{$\pm$ 0.2} & 70.0\scriptsize{$\pm$ 0.2} & 66.0\scriptsize{$\pm$ 0.1} & 60.0\scriptsize{$\pm$ 0.2} & 69.0 \\
&RSC \citep{rsc} & 76.1\scriptsize{$\pm$ 0.4} & 73.2\scriptsize{$\pm$ 0.5} & 70.1\scriptsize{$\pm$ 0.5} & 66.2\scriptsize{$\pm$ 0.5} & 60.1\scriptsize{$\pm$ 0.5} & 69.1 \\
&Fishr \citep{fishr} & 76.3\scriptsize{$\pm$ 0.3} & 73.4\scriptsize{$\pm$ 0.3} & 70.4\scriptsize{$\pm$ 0.5} & 66.3\scriptsize{$\pm$ 0.8} & 60.1\scriptsize{$\pm$ 1.1} & 69.3\\ \midrule
\multirow{2}{*}{\shortstack{SDG\\Based}}&M-ADA \citep{mada} &77.2\scriptsize{$\pm$0.2}&74.2\scriptsize{$\pm$0.1}&71.2\scriptsize{$\pm$0.0}&67.1\scriptsize{$\pm$0.1}&61.1\scriptsize{$\pm$0.1}&70.2\\
&LTD \citep{learningtodiversify} &75.3\scriptsize{$\pm$0.2}&73.0\scriptsize{$\pm$0.0}&70.6\scriptsize{$\pm$0.0}&67.2\scriptsize{$\pm$0.2}&61.7\scriptsize{$\pm$0.2}&69.6\\ \midrule
\multirow{6}{*}{\shortstack{SAM\\Based}}&SAM \citep{SAM}&79.0\scriptsize{$\pm$0.3}&76.0\scriptsize{$\pm$0.3}&72.9\scriptsize{$\pm$0.3}&68.7\scriptsize{$\pm$0.2}&62.5\scriptsize{$\pm$0.3}&71.8\\
&\textbf{UDIM} w/ SAM &\textbf{80.3}\scriptsize{$\pm$0.0}&\textbf{77.7}\scriptsize{$\pm$0.1}&\textbf{75.1}\scriptsize{$\pm$0.0}&\textbf{71.5}\scriptsize{$\pm$0.1}&\textbf{66.2}\scriptsize{$\pm$0.1}&\textbf{74.2}\\ \cmidrule(lr){2-8}
&SAGM \citep{sagmcvpr2023} &79.0\scriptsize{$\pm$0.1}&76.2\scriptsize{$\pm$0.0}&73.2\scriptsize{$\pm$0.2}&69.0\scriptsize{$\pm$0.3}&62.7\scriptsize{$\pm$0.4}&72.0\\
&\textbf{UDIM} w/ SAGM &\textbf{80.1}\scriptsize{$\pm$0.1}&\textbf{77.5}\scriptsize{$\pm$0.1}&\textbf{74.8}\scriptsize{$\pm$0.1}&\textbf{71.2}\scriptsize{$\pm$0.2}&\textbf{65.9}\scriptsize{$\pm$0.2}&\textbf{73.9}\\ \cmidrule(lr){2-8}
&GAM \citep{gamcvpr2023} &79.5\scriptsize{$\pm$0.1}&76.8\scriptsize{$\pm$0.1}&74.0\scriptsize{$\pm$0.2}&69.9\scriptsize{$\pm$0.1}&64.1\scriptsize{$\pm$0.2}&72.8\\
&\textbf{UDIM} w/ GAM &\textbf{81.4}\scriptsize{$\pm$0.1}&\textbf{78.9}\scriptsize{$\pm$0.0}&\textbf{76.3}\scriptsize{$\pm$0.0}&\textbf{72.8}\scriptsize{$\pm$0.1}&\textbf{67.4}\scriptsize{$\pm$0.1}&\textbf{75.3}\\ \bottomrule 
\end{tabular}
}
\vskip-0.15in
\label{table:cifar_sdg}
\end{table}

\section{Experiment}
\subsection{Implementation}
\textbf{Datasets and Implementation Details} We validate the efficacy of our method, UDIM, via experiments across multiple datasets for domain generalization. First, we conducted evaluation on CIFAR-10-C \citep{cifar10c}, a synthetic dataset that emulates various domains by applying several synthetic corruptions to CIFAR-10 \citep{cifar10}. Furthermore, we extend our evaluation to real-world datasets with multiple domains, namely PACS \citep{pacs}, OfficeHome \citep{officehome}, and DomainNet \citep{domainnet}. Since UDIM can operate regardless of the number of source domains, we evaluate UDIM under both scenarios on real-world datasets: 1) when multiple domains are presented in the source (Leave-One-Out Domain Generalization, \textbf{LOODG}); and 2) when a single domain serves as the source (Single Source Domain Generalization, \textbf{SDG}). Unless specified, we report the mean and standard deviation of accuracies from three replications. Appendix \ref{appendix:experiment implementation} provides information on the dataset and our implementations.

\textbf{Implementation of UDIM}
To leverage a parameter $\theta$ exhibitng flat minima on \(D_{s}\) during the minimization of \(\mathcal{I}^{\gamma}_{s}(\theta)\), we perform a warm-up training on $\theta$ with the SAM loss on \(D_{s}\). While keeping the total number of iterations consistent with other methods, we allocate initial iterations for warm-up based on the SAM loss. As a result, we expect to optimize Eq. \ref{main_objective} with a sufficiently wide region of \(N^{\gamma,\rho}_{s,\theta}\) for sufficiently low \(\gamma\).

The gradient variance, $\text{Var}(\mathbf{G}_{{D}_{s}})$ in Eq. \ref{last_opt}, necessitates the costly computation of per-sample gradients with respect to $\theta$. We utilize BackPACK \citep{backpack}, which provides the faster computation of per-sample gradients. Also, we compute the gradient variance only for the classifier parameters, which is an efficient practice to improve the performance with low computational costs \citep{lcmat}. Appendix \ref{appendix:experiment implementation} specifies additional hyperparameter settings of UDIM.

\textbf{Baselines for Comparison}
Since our approach, UDIM, is tested under both LOODG and SDG scenarios, we employed methods tailored for each scenario as baselines. These include strategies for robust optimization \citep{irm,GroupDRO} and augmentations for novel domain discovery \citep{mixup,cutmix,sagnet}. Appendix \ref{appendix:baseline} enumerates baselines for comparisons. For methods that leverage relationships between source domains, a straightforward application is not feasible in \textbf{SDG} scenarios. In such cases, we treated each batch as if it came from a different domain to measure experimental performance. ERM means the base model trained with standard classification loss. We also utilize the sharpness-based approaches as the baselines. Note that the first term of Eq. \ref{main_objective} (SAM loss on $D_{s}$) can be substituted with objectives for similar flatness outcomes, such as SAGM \citep{sagmcvpr2023} and GAM \citep{gamcvpr2023}.

\begin{table}[t]
\centering
\vskip-0.15in
\caption{Test accuracy for PACS. Each column represents test domain for \textbf{LOODG}, and train domain for \textbf{SDG}. $^*$ denotes that the performances are from its original paper. \textbf{Bold} indicates the best case of each column or improved performances when combined with the sharpness-based optimizers.}
\vskip-0.05in
\resizebox{\textwidth}{!}{
\begin{tabular}{l |cccc|c|cccc|c} \toprule
&\multicolumn{5}{c|}{\textbf{Leave-One-Out Source Domain Generalization}}&\multicolumn{5}{c}{\textbf{Single Source Domain Generalization}}\\ \cmidrule(lr){2-6}\cmidrule(lr){7-11}
Method&Art&Cartoon&Photo&Sketch&Avg&Art&Cartoon&Photo&Sketch&Avg \\ \midrule
Fishr$^*$ (Best among LOODG methods) &88.4$^*$\scriptsize{$\pm$0.2}&78.7$^*$\scriptsize{$\pm$0.7}&97.0$^*$\scriptsize{$\pm$0.1}&77.8$^*$\scriptsize{$\pm$2.0}&85.5$^*$&75.9\scriptsize{$\pm$1.7}&81.1\scriptsize{$\pm$0.7}&46.9\scriptsize{$\pm$0.7}&57.2\scriptsize{$\pm$4.4}&65.3\\ 
{RIDG \citep{RIDG}}&{86.3\scriptsize{$\pm$1.1}}&{81.0\scriptsize{$\pm$1.0}}&{97.4\scriptsize{$\pm$0.7}}&{77.5\scriptsize{$\pm$2.5}}&{85.5}&{76.2\scriptsize{$\pm$1.4}}&{80.0\scriptsize{$\pm$1.8}}&{48.5\scriptsize{$\pm$2.8}}&{54.8\scriptsize{$\pm$2.4}}&{64.9}\\
{ITTA \citep{ITTA}}&{87.9\scriptsize{$\pm$1.4}}&{78.6\scriptsize{$\pm$2.7}}&{96.2\scriptsize{$\pm$0.2}}&{80.7\scriptsize{$\pm$2.2}}&{85.8}&{78.4\scriptsize{$\pm$1.5}}&{79.8\scriptsize{$\pm$1.3}}&{56.5\scriptsize{$\pm$3.7}}&{60.7\scriptsize{$\pm$0.9}}&{68.8}\\\midrule
M-ADA&85.5\scriptsize{$\pm$0.7}&80.7\scriptsize{$\pm$1.5}&97.2\scriptsize{$\pm$0.5}&78.4\scriptsize{$\pm$1.4}&85.4&78.0\scriptsize{$\pm$1.1}&79.5\scriptsize{$\pm$1.2}&47.1\scriptsize{$\pm$0.4}&55.7\scriptsize{$\pm$0.5}&65.1\\
LTD&85.7\scriptsize{$\pm$1.9}&79.9\scriptsize{$\pm$0.9}&96.9\scriptsize{$\pm$0.5}&83.3\scriptsize{$\pm$0.5}&86.4&76.8\scriptsize{$\pm$0.7}&82.5\scriptsize{$\pm$0.4}&56.2\scriptsize{$\pm$2.5}&53.6\scriptsize{$\pm$1.4}&67.3\\ \midrule
SAM&86.8\scriptsize{$\pm$0.6}&79.6\scriptsize{$\pm$1.4}&96.8\scriptsize{$\pm$0.1}&80.2\scriptsize{$\pm$0.7}&85.9&77.7\scriptsize{$\pm$1.1}&80.5\scriptsize{$\pm$0.6}&46.7\scriptsize{$\pm$1.1}&54.2\scriptsize{$\pm$1.5}&64.8\\
\textbf{UDIM} w/ SAM &\textbf{88.5}\scriptsize{$\pm$0.1}&\textbf{86.1}\scriptsize{$\pm$0.1}&\textbf{97.3}\scriptsize{$\pm$0.1}&\textbf{82.7}\scriptsize{$\pm$0.1}&\textbf{88.7}&\textbf{81.5}\scriptsize{$\pm$0.1}&\textbf{85.3}\scriptsize{$\pm$0.4}&\textbf{67.4}\scriptsize{$\pm$0.8}&\textbf{64.6}\scriptsize{$\pm$1.7}&\textbf{74.7}\\ \midrule
SAGM&85.3\scriptsize{$\pm$2.5}&80.9\scriptsize{$\pm$1.1}&97.1\scriptsize{$\pm$0.4}&77.8\scriptsize{$\pm$0.5}&85.3&78.9\scriptsize{$\pm$1.2}&79.8\scriptsize{$\pm$1.0}&44.7\scriptsize{$\pm$1.8}&55.6\scriptsize{$\pm$1.1}&64.8\\
\textbf{UDIM} w/ SAGM& \textbf{88.9}\scriptsize{$\pm$0.2}&\textbf{86.2}\scriptsize{$\pm$0.3}&\textbf{97.4}\scriptsize{$\pm$0.4}&\textbf{79.5}\scriptsize{$\pm$0.8}&\textbf{88.0}&\textbf{81.6}\scriptsize{$\pm$0.3}&\textbf{84.8}\scriptsize{$\pm$1.2}&\textbf{68.1}\scriptsize{$\pm$0.8}&\textbf{63.3}\scriptsize{$\pm$0.9}&\textbf{74.5}\\ \midrule
GAM&85.5\scriptsize{$\pm$0.6}&81.1\scriptsize{$\pm$1.0}&96.4\scriptsize{$\pm$0.2}&81.0\scriptsize{$\pm$1.7}&86.0&79.1\scriptsize{$\pm$1.3}&79.7\scriptsize{$\pm$0.9}&46.3\scriptsize{$\pm$0.6}&56.6\scriptsize{$\pm$1.1}&65.4\\
\textbf{UDIM} w/ GAM &\textbf{87.1}\scriptsize{$\pm$0.9}&\textbf{86.3}\scriptsize{$\pm$0.4}&\textbf{97.2}\scriptsize{$\pm$0.1}&\textbf{81.8}\scriptsize{$\pm$1.1}&\textbf{88.1}&\textbf{82.4}\scriptsize{$\pm$0.9}&\textbf{84.2}\scriptsize{$\pm$0.4}&\textbf{68.8}\scriptsize{$\pm$0.8}&\textbf{64.0}\scriptsize{$\pm$0.7}&\textbf{74.9} \\ \bottomrule
\end{tabular}}
\label{tab:acc_pacs}
\end{table}

\begin{figure}[t]
\vskip-0.1in
\centering
\begin{subfigure}{0.735\textwidth}
\includegraphics[width=0.325\linewidth]{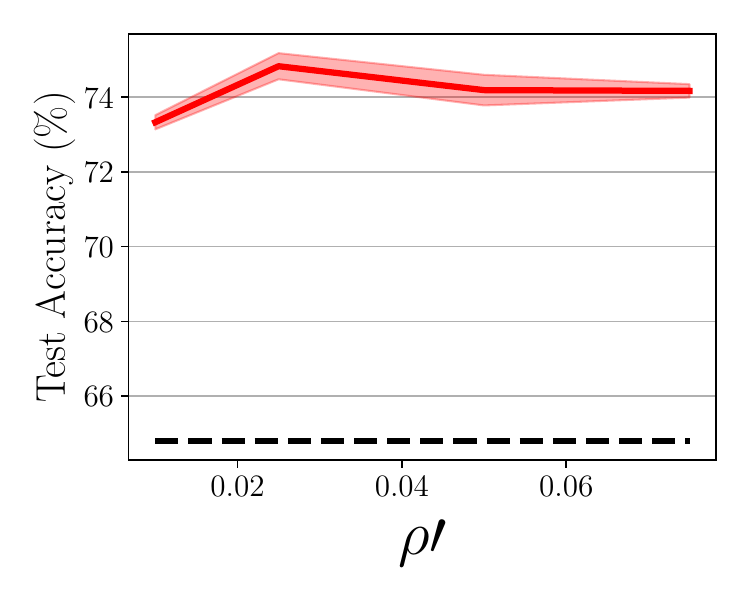}
\includegraphics[width=0.325\linewidth]{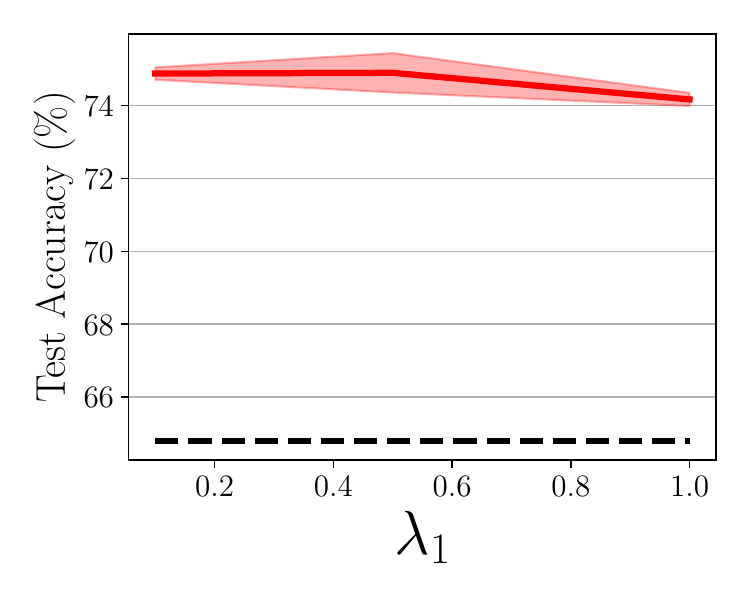}
\includegraphics[width=0.325\linewidth]{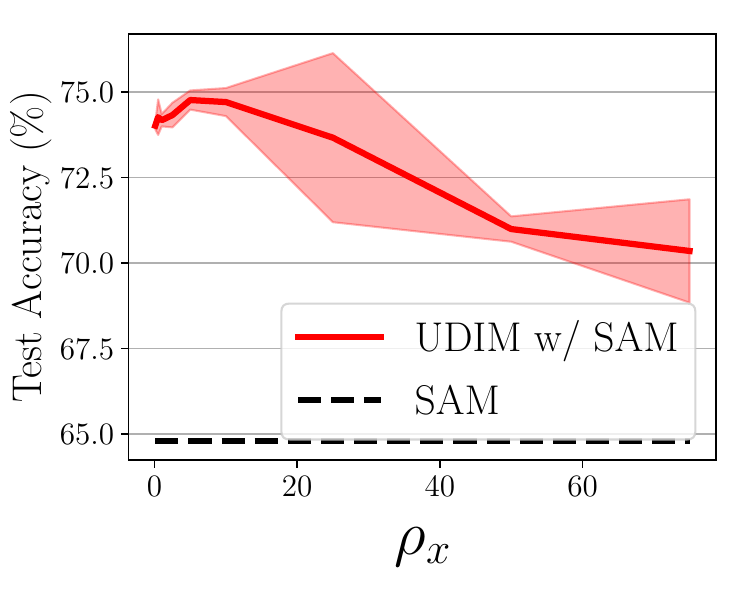}
\vskip-0.1in
\caption{{Sensitivity analyses of UDIM on $\rho'$, $\lambda_{1}$, and $\rho_x$.}}
\end{subfigure}
\hfill
\begin{subfigure}{0.245\textwidth}
\includegraphics[width=0.98\linewidth]{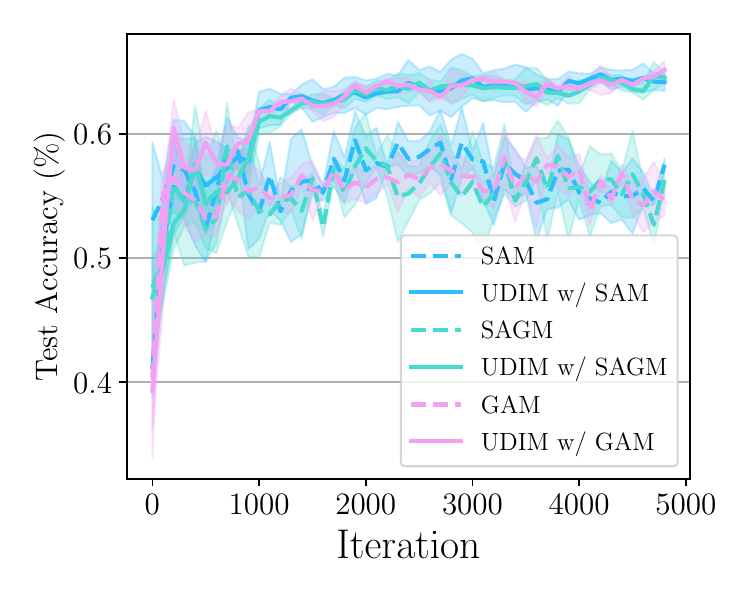}
\vskip-0.1in
\caption{Acc over iterations}
\end{subfigure}
\caption{(a) Sensitivity analyses of UDIM. (b) test accuracy plot of UDIM and sharpness-based approaches based on training iterations. Shaded regions represent standard deviation.}
\vskip-0.2in
\label{fig:sensitivity_and_ablation}
\end{figure}
\vspace{-0.2cm}
\subsection{Classification Accuracies on Various Benchmark Datasets}
To assess the effectiveness of each method under unknown domains, we present the accuracy results on unknown target domains. Table \ref{table:cifar_sdg} shows the results on CIFAR-10-C. The sharpness-based methods exhibit excellent performances compared to the other lines of methods. This underscores the importance of training that avoids overfitting to a specific domain. By adding UDIM (specifically, the inconsistency term of Eq. \ref{main_objective}) to these SAM-based approaches, we consistently observed improved performances compared to the same approaches without UDIM.

Table \ref{tab:acc_pacs} shows the results on the PACS dataset. Similar to the results in Table \ref{table:cifar_sdg}, UDIM consistently outperforms the existing baselines across each scenario. This improvement is particularly amplified in the single source scenario. Unlike the Leave-One-Out scenario, the single source scenario is more challenging as it requires generalizing to unknown domains using information from a single domain. These results emphasize the robust generalization capability of UDIM under unknown domains.

This section aims to examine the efficacy of UDIM by analyzing the sensitivity of each hyper-parameter used in UDIM's implementation. Additionally, We perform an ablation study by composing each part of the UDIM's objective in Eq. \ref{final_param_obj}. Unless specified, each experiment is carried out in the Single source domain generalization scenario using the PACS dataset.
\vspace{-0.2cm}
\subsection{Sensitivity Analyses and Ablation Study}
\label{exp:ablation_study}

\begin{wrapfigure}{h}{0.4\textwidth}
\vskip-0.45in
\begin{subfigure}{\linewidth}
\centering
\includegraphics[width=\linewidth]{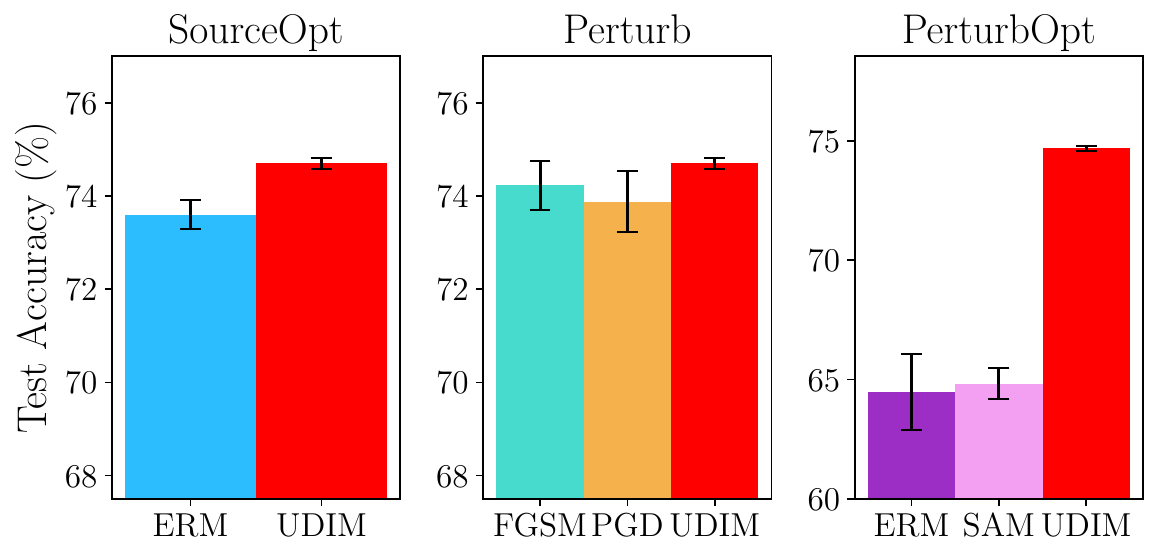}
\end{subfigure}
\vskip-0.15in
\caption{Ablation study of UDIM}
\label{fig:ablation_fig}
\vskip-0.2in
\end{wrapfigure}
\textbf{Sensitivity Analyses} Figure \ref{fig:sensitivity_and_ablation} (a) shows the sensitivity of $\rho^{'}$, $\lambda_{1}$ and $\rho_{x}$, which are main hyper-parameters of UDIM, under the feasible range of value lists. As a default setting of UDIM, we set $\rho^{'}$=0.05, $\lambda_{1}$=1 and $\rho_{x}$=1. In implementation, we multiply \(\rho_{x}\) by the unnormalized gradient of \(x\). Therefore, the values presented in the ablation study have a larger scale. Also, we compare the performances between SAM and UDIM w/ SAM to compare the effectiveness of UDIM objective. Each figure demonstrates that UDIM's performance remains robust and favorable, invariant to the changes in each hyper-parameter. Figure \ref{fig:sensitivity_and_ablation} (b) presents the test accuracies over training iterations while varying the sharpness-based approaches used alongside UDIM. Regardless of which method is used in conjunction, additional performance improvements over the iterations are observed  compared to the original SAM variants.

\textbf{Ablation Studies} Figure \ref{fig:ablation_fig} presents the ablation results of UDIM, which were carried out by replacing a subpart of the UDIM's objective with alternative candidates and subsequently assessing the performances. We conduct various ablations: 'SourceOpt' represents the optimization method for \(D_{s}\), 'Perturb' indicates the perturbation method utilized to emulate unknown domains, and 'PerturbOpt' indicates the optimization for the perturbed dataset \(\tilde{D}_{s}\). Appendix \ref{ablation} enumerates each ablation candidate and its implementation. UDIM, depicted by the red bar, consistently outperforms in all ablations, suggesting the effectiveness of our derived objective formulation.
\begin{figure}[!htbp]
\vskip-0.15in
\centering
\begin{subfigure}{0.245\textwidth}
\includegraphics[width=\linewidth]{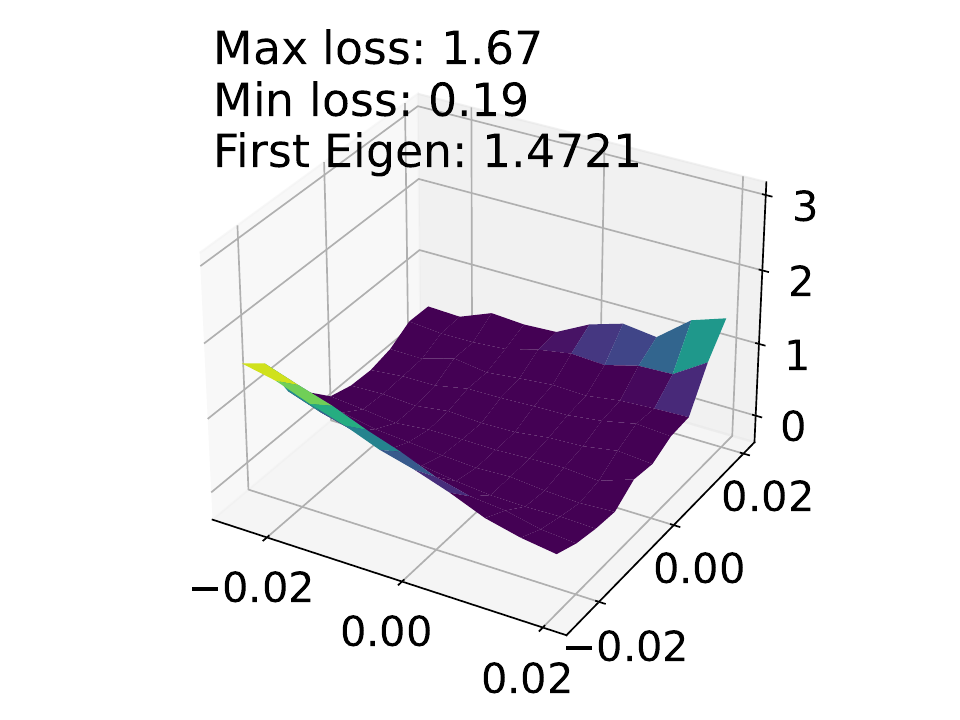}
\end{subfigure}
\hfill
\begin{subfigure}{0.245\textwidth}
\includegraphics[width=\linewidth]{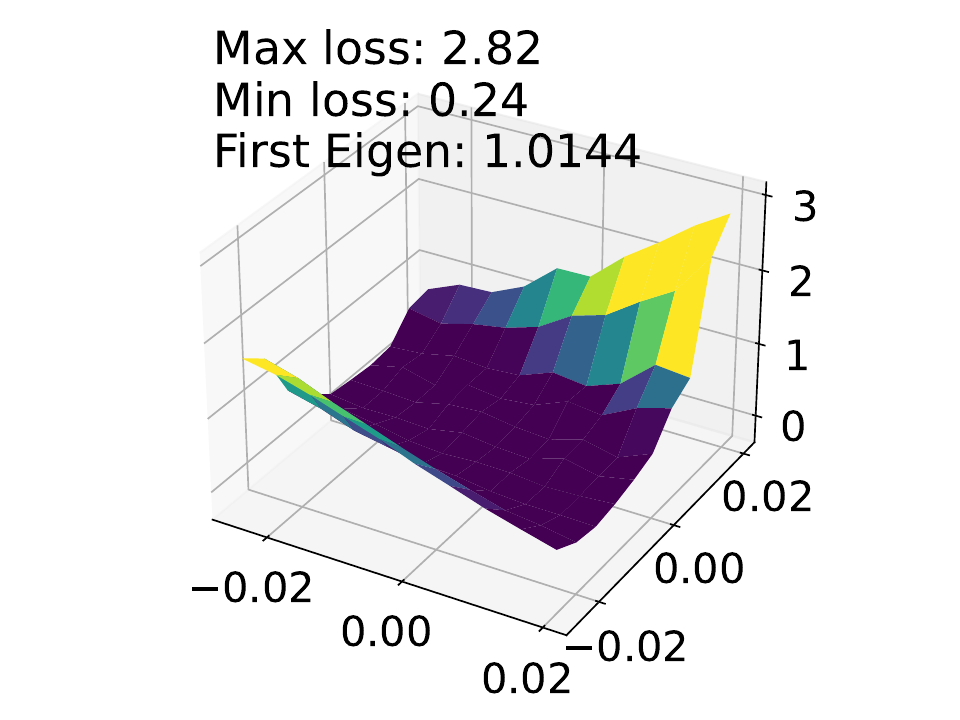}
\end{subfigure}
\begin{subfigure}{0.245\textwidth}
\includegraphics[width=\linewidth]{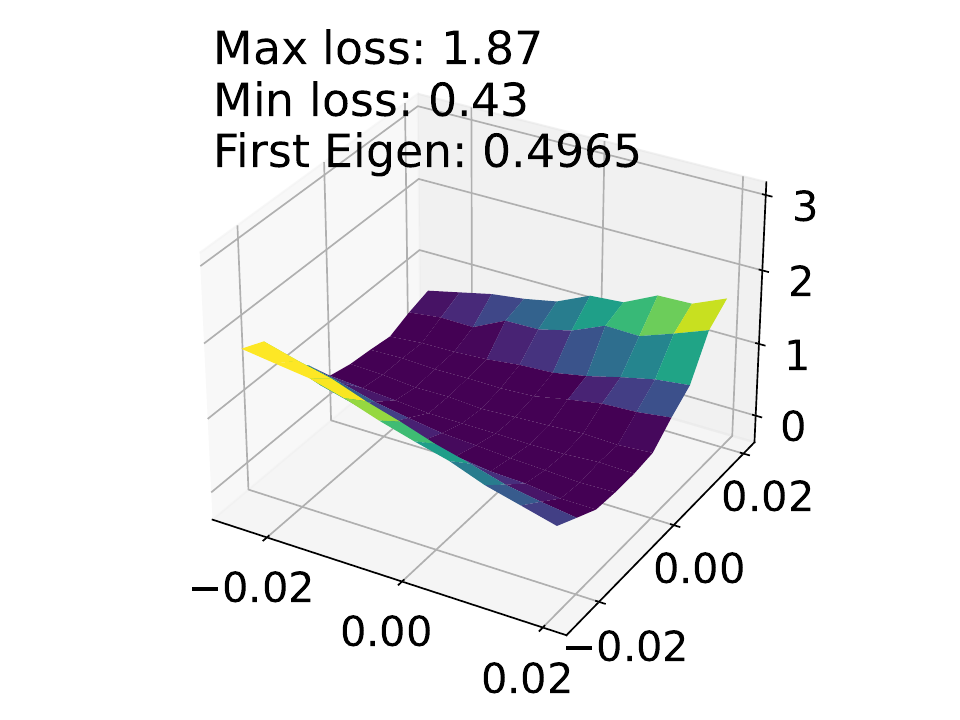}
\end{subfigure}
\begin{subfigure}{0.245\textwidth}
\includegraphics[width=\linewidth]{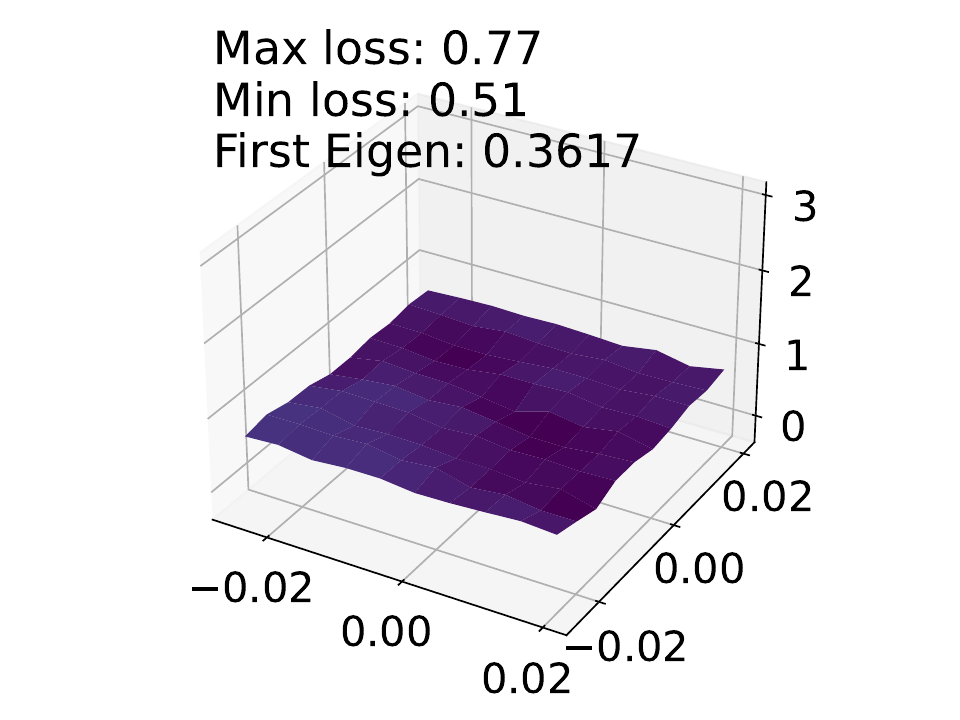}
\end{subfigure}
\begin{subfigure}{0.245\textwidth}
\includegraphics[width=\linewidth]{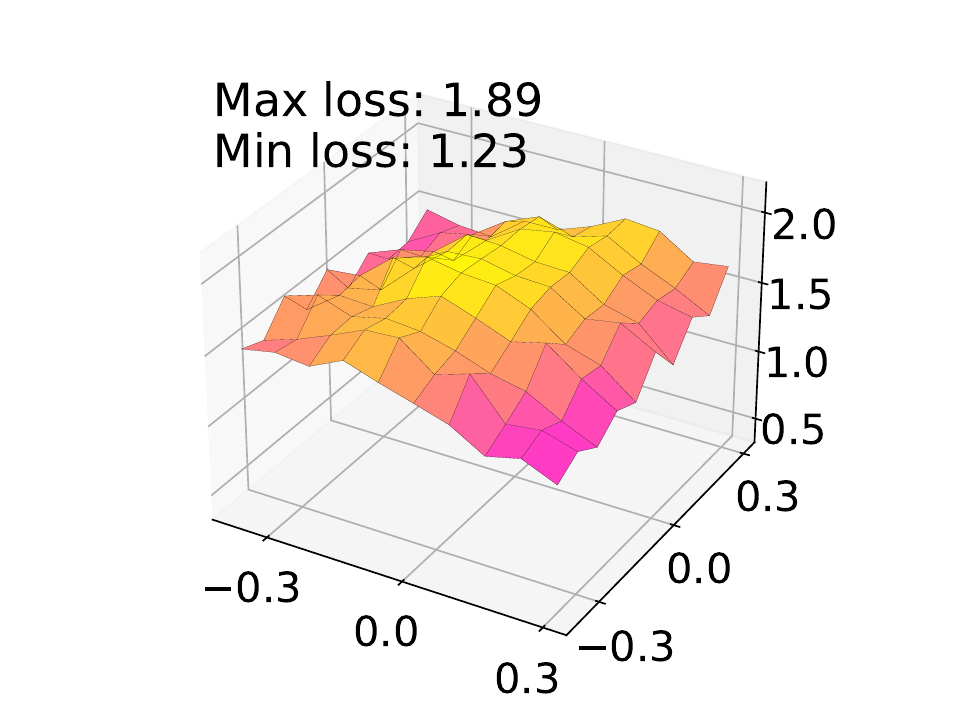}
\caption{SAM}
\end{subfigure}
\hfill
\begin{subfigure}{0.245\textwidth}
\includegraphics[width=\linewidth]{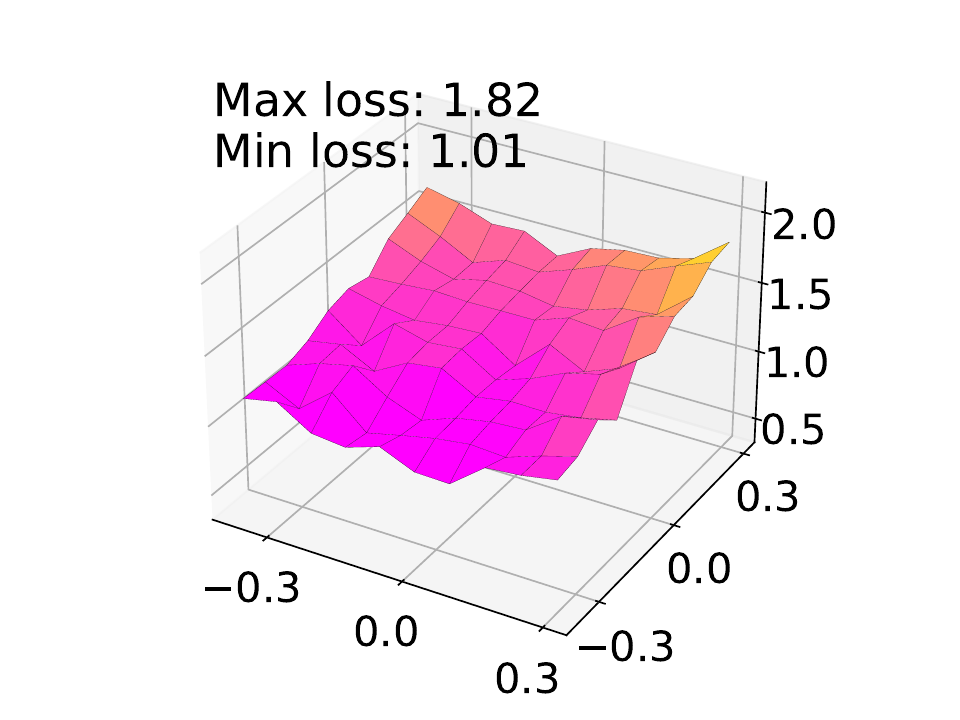}
\caption{SAGM}
\end{subfigure}
\begin{subfigure}{0.245\textwidth}
\includegraphics[width=\linewidth]{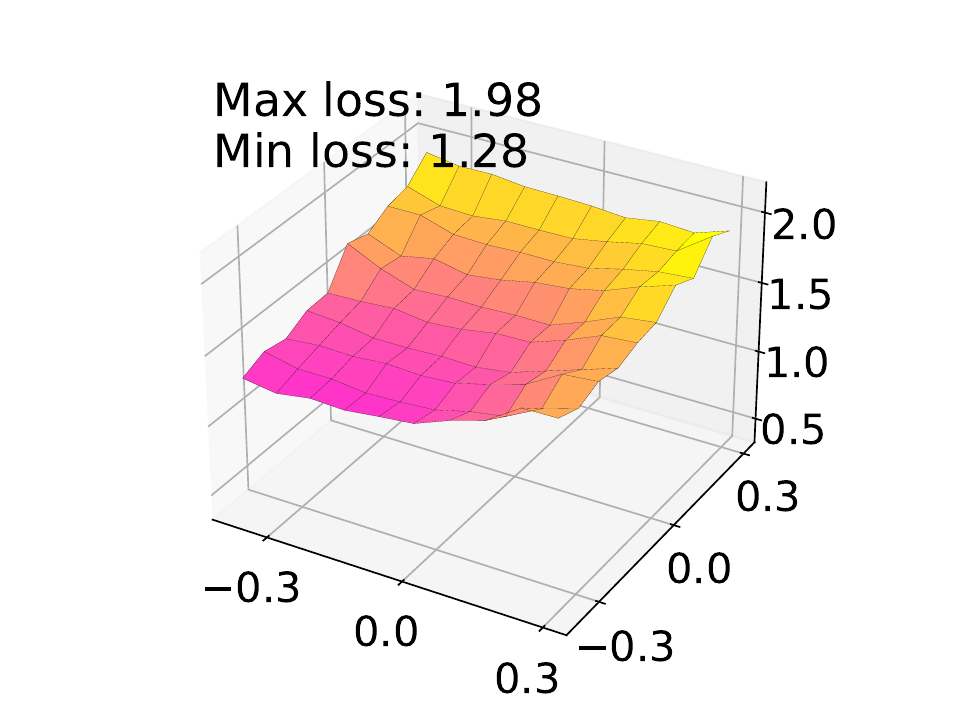}
\caption{GAM}
\end{subfigure}
\begin{subfigure}{0.245\textwidth}
\includegraphics[width=\linewidth]{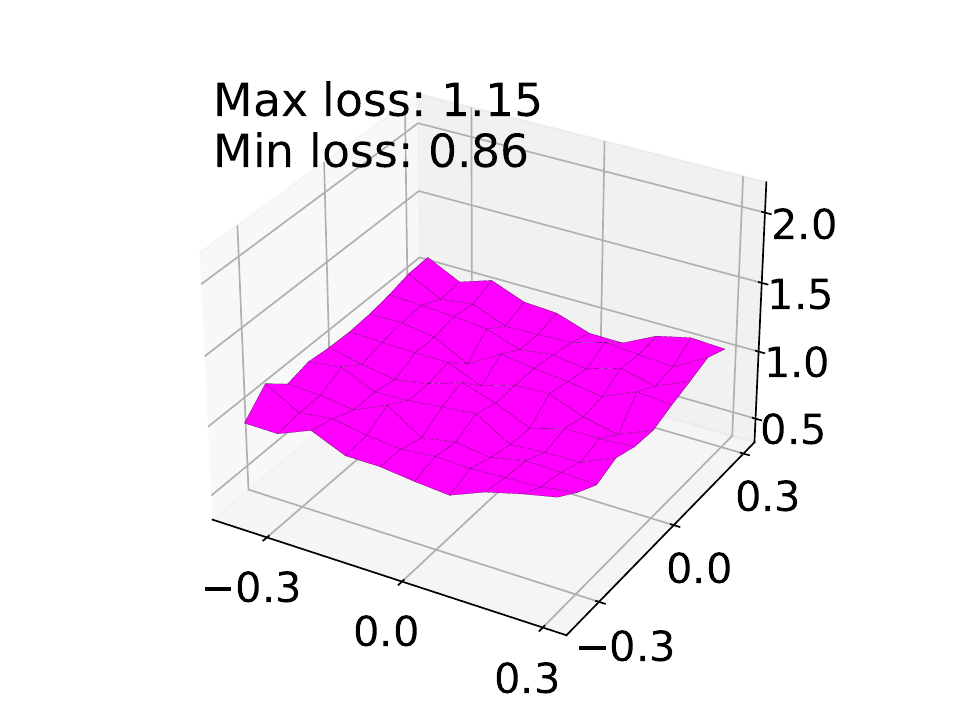}
\caption{UDIM}
\end{subfigure}
\vskip-0.05in
\caption{{Sharpness plots for models trained using various methods: the upper plot shows sharpness on the perturbed parameter space, while the lower plot displays sharpness on the perturbed data space. The colormap of each row is normalized into the same scale for fair comparison.}}
\label{fig:sharpness}
\vskip-0.2in
\end{figure}

\subsection{Sharpness Analyses}
Figure \ref{fig:motivation_fig} claims that UDIM would reduce the suggested sharpness in both parameter space and data space. To support this claim with experiments, Figure \ref{fig:sharpness} enumerates the sharpness plots for models trained with sharpness-based methods and those trained with UDIM. These figures are obtained by training models in the single source domain setting of the PACS dataset, and each plot is drawn utilizing target domain datasets, which are not utilized for training.

The top row of Figure \ref{fig:sharpness} represents the measurement of sharpness in the parameter space by introducing a random perturbation to the parameter \(\theta\). Additionally, to examine the sharpness in the perturbed data space of unobserved domains, the bottom side of Figure \ref{fig:sharpness} illustrates sharpness based on the input-perturbed region of the target domain datasets. Current SAM variants struggle to maintain sufficient flatness within both the perturbed parameter space and data space. On the other hand, UDIM effectively preserves flatness in the perturbed parameter space and the data space of unknown domains. Within the region, the model trained using UDIM also exhibits a lower loss value compared to other methods. Through preserving flatness in each space, we confirm that the optimization with UDIM, both in parameter and data space, has practical effectiveness.

\section{Conclusion}
We introduce UDIM, a novel approach to minimize the discrepancy in the loss landscape between the source domain and unobserved domains. Combined with SAM variants, UDIM consistently improves generalization performance on unobserved domains. This performance is achieved by perturbing both domain and parameter spaces, where UDIM optimization is conducted based on the iterative update between the dataset and the parameter. Experimental results demonstrate accuracy gains, up to $9.9\%$ in some settings, by adopting UDIM in current sharpness-based approaches.

\subsubsection*{Acknowledgments}
This research was supported by AI Technology Development for Commonsense Extraction, Reasoning, and Inference from Heterogeneous Data(IITP) funded by the Ministry of Science and ICT(2022-0-00077).

\bibliography{iclr2024_conference}
\bibliographystyle{iclr2024_conference}

\newpage
\appendix

\section{Explanation of Sharpness Variants for Domain Generalization}
\label{appendix:prev_DG}

\textbf{Gradient norm-Aware Minimization (GAM)} \cite{gamcvpr2023} introduces first-order flatness, which minimizes a maximal gradient norm within a perturbation radius, to regularize a stronger flatness than SAM. Accordingly, GAM seeks minima with uniformly flat curvature across all directions. 

\textbf{Sharpness-Aware Gradient Matching (SAGM)} \cite{sagmcvpr2023} minimizes an original loss, the corresponding perturbed loss, and the gap between them. This optimization aims to identify a minima that is both flat and possesses a sufficiently low loss value. Interpreting the given formula, this optimization inherently regularizes the gradient alignment between the original loss and the perturbed loss.

\textbf{Flatness-Aware Minimization (FAM)} \cite{famiccv2023} concurrently optimizes both zeroth-order and first-order flatness to identify flatter minima. To compute various sharpness metric on the different order, it incurs a higher computational cost.
\section{Proofs and Discussions}
\label{appendix:proofs}

\subsection{Proof for Theorem \ref{ours_theorem_2_main}}
\label{appendix:proof_thm3_2}

First, we provide some theorem, definition, and assumptions needed to prove the Theorem \ref{ours_theorem_2_main}.

\begin{theorem}
\label{sam_theorem}
\citep{SAM}
For any $\rho>0$ which satisfies $\mathcal{L}_{\mathscr{D}_{e}}(\theta)\leq \mathbb{E}_{\epsilon \sim p(\epsilon)}\mathcal{L}_{\mathscr{D}_{e}}(\theta+\epsilon) $, with probability at least $1-\delta$ over realized dataset $D_{e}$ from $\mathscr{D}_{e}$ with $|D_{e}|=n$, the following holds under some technical conditions on $\mathcal{L}_{\mathscr{D}_{e}}(\theta)$:
$$\mathcal{L}_{\mathscr{D}_{e}}(\theta) \leq \max_{\|\epsilon\|_2 \leq \rho} \mathcal{L}_{D_{e}}(\theta+\epsilon) + h_e(\frac{\|\theta\|_2^2}{\rho^2}),$$ 
where $h_e: \mathbb{R}_{+}\rightarrow  \mathbb{R}_{+}$ is a strictly increasing function.
\end{theorem}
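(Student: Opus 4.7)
}
The plan is to adapt the standard PAC-Bayesian argument used in the original SAM analysis. The starting point is a McAllester-style PAC-Bayes bound: for any data-independent prior $P$ and any posterior $Q$ over $\Theta$, with probability at least $1-\delta$ over $D_e\sim \mathscr{D}_e^n$,
\begin{equation*}
\mathbb{E}_{\theta'\sim Q}\mathcal{L}_{\mathscr{D}_e}(\theta') \;\le\; \mathbb{E}_{\theta'\sim Q}\mathcal{L}_{D_e}(\theta') \;+\; \sqrt{\tfrac{\mathrm{KL}(Q\Vert P)+\log(n/\delta)}{2(n-1)}}.
\end{equation*}
I will instantiate $Q=\mathcal{N}(\theta,\sigma^2 I)$ and $P=\mathcal{N}(0,\sigma_P^2 I)$ for a scale $\sigma$ that will be tied to the radius $\rho$ in the final step.

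Next I would convert both sides from expectations over the Gaussian posterior to the quantities appearing in the statement. On the left, the hypothesis of the theorem, $\mathcal{L}_{\mathscr{D}_e}(\theta)\le \mathbb{E}_{\epsilon\sim p(\epsilon)}\mathcal{L}_{\mathscr{D}_e}(\theta+\epsilon)$, is precisely what lets me replace $\mathcal{L}_{\mathscr{D}_e}(\theta)$ by $\mathbb{E}_{\theta'\sim Q}\mathcal{L}_{\mathscr{D}_e}(\theta')$. On the right I want to replace the Gaussian average of the empirical loss by $\max_{\|\epsilon\|_2\le\rho}\mathcal{L}_{D_e}(\theta+\epsilon)$. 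For this step I split the Gaussian into the event $\{\|\epsilon\|_2\le\rho\}$, on which the integrand is trivially bounded by the max, and its complement, which has negligible mass once $\rho$ is at least a constant multiple of $\sigma\sqrt{d}$ by a chi-squared tail bound; the ``technical conditions on $\mathcal{L}_{\mathscr{D}_e}$'' in the statement (boundedness / integrability of the loss) are exactly what makes this tail term absorbable.

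Finally I compute $\mathrm{KL}(Q\Vert P)$ for the two Gaussians, which yields a term of the form $\tfrac{\|\theta\|_2^2}{2\sigma_P^2}+\tfrac{d}{2}\bigl(\tfrac{\sigma^2}{\sigma_P^2}-1-\log\tfrac{\sigma^2}{\sigma_P^2}\bigr)$. Taking $\sigma\propto \rho/\sqrt{d}$ (so the tail bound above is controlled) and optimizing $\sigma_P$ on a grid over which a union bound is paid (this is the usual way to handle the fact that $P$ must be data-independent while $\sigma_P$ should depend on $\|\theta\|$), the dominant term becomes a strictly increasing function of $\|\theta\|_2^2/\rho^2$ plus vanishing $n$-dependent terms. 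Collecting everything and absorbing the $\sqrt{\log(n/\delta)/n}$ slack, the $n$-dependent constants, and the Gaussian-tail remainder into a single strictly increasing function $h_e$ of $\|\theta\|_2^2/\rho^2$ yields the claimed inequality.

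The main obstacle is the bridge between the PAC-Bayes average over a Gaussian posterior and the deterministic worst-case quantity $\max_{\|\epsilon\|_2\le\rho}\mathcal{L}_{D_e}(\theta+\epsilon)$: one must choose $\sigma$ small enough that the Gaussian concentrates inside the $\rho$-ball (so the max dominates the expectation up to negligible error) yet large enough that the KL term is controlled and the grid over $\sigma_P$ remains finite. Balancing these competing requirements is what pins down the exact form of $h_e$ and is the only genuinely delicate step; the rest is bookkeeping.
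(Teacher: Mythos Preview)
The paper does not actually prove Theorem~\ref{sam_theorem}; it is quoted verbatim from \citet{SAM} and used as a black-box ingredient in the proof of Theorem~\ref{ours_theorem_2_main}. Your proposal is therefore not being compared against anything in this paper, but it does faithfully reconstruct the original SAM argument: PAC-Bayes with Gaussian prior/posterior, the stated hypothesis handling the left-hand side, a chi-squared tail (Laurent--Massart) to pass from the Gaussian average to $\max_{\|\epsilon\|_2\le\rho}$, and the union-bound grid over prior scales to make the KL term an increasing function of $\|\theta\|_2^2/\rho^2$. That is exactly the route taken in \citet{SAM}, so the plan is correct.
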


\label{appendix:rmk5}
\begin{definition}
Let $\Theta_{\theta,\rho} = \{\theta^{'} | \|\theta^{'}-\theta\|_{2}\leq \rho\}$ and $N^{\gamma,\rho}_{e,\theta} = \{\theta^{'} \in \Theta_{\theta,\rho} | ~ |\mathcal{L}_{D_{e}}(\theta^{'})-\mathcal{L}_{D_{e}}(\theta)|\leq \gamma\}$.
\end{definition}
\begin{assumption}
\label{assumptionB3}
$\mathcal{L}_\mathscr{D}(\theta) \leq \mathbb{E}_{\epsilon \sim p(\epsilon)}\mathcal{L}_\mathscr{D}(\theta+\epsilon)$ where $p(\epsilon) \sim \mathcal{N}(0,\sigma^2 I)$ for some $\sigma>0$.
\end{assumption}
\begin{assumption}
\label{assumptionB4}
$\max_{e \in \mathcal{E}}\mathcal{L}_{D_{e}}(\theta^{'}) \geq \mathcal{L}_{D_{s}}(\theta^{'})$ for all $\theta^{'} \in N^{\gamma,\rho}_{s,\theta}$.
\end{assumption}
In practice, Assumption \ref{assumptionB4} is acceptable. Contrary to the dataset $D_{e}$ from an unobserved domain $e$, $D_{s}$ is a source domain dataset provided to us and hence, available for training. Moreover, the region of $N^{\gamma,\rho}_{s,\theta}$ would be flat with respect to local minima, $\theta^{*}$, from the perspective of the source domain dataset. Therefore, the perturbed loss on the source domain dataset, $L_{D_{s}}(\theta^{'})$, would be likely to have a sufficiently low value.
\begin{theorem}
\label{ours_theorem_2}
For $\theta \in \Theta$ and arbitrary domain $e \in \mathcal{E}$, with probability at least $1-\delta$ over realized dataset $D_{e}$ from $\mathscr{D}_{e}$ with $|D_{e}|=n$, the following holds under some technical conditions on $\mathcal{L}_{\mathscr{D}_{e}}(\theta)$.
\begin{align}
\mathcal{L}_\mathscr{D}(\theta) 
&\leq \max_{\|\epsilon\|_2 \leq \rho} \mathcal{L}_{D_{s}}(\theta+\epsilon) + (1-\frac{1}{|\mathcal{E}|})\max_{e \in \mathcal{E}}\max_{\theta^{'} \in N^{\gamma,\rho}_{s,\theta}} |\mathcal{L}_{D_{e}}(\theta^{'})-\mathcal{L}_{D_{s}}(\theta^{'})| + h(\frac{\|\theta\|_2^2}{\rho^2})
\end{align}
where $h: \mathbb{R}_{+}\rightarrow  \mathbb{R}_{+}$ is a strictly increasing function. 
\end{theorem}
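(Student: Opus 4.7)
The plan is to decompose the full-domain population risk $\mathcal{L}_\mathscr{D}(\theta) = \sum_{e\in\mathcal{E}} p(e)\mathcal{L}_{\mathscr{D}_e}(\theta)$ (using uniform $p(e)=1/|\mathcal{E}|$), apply the per-domain SAM generalization bound of Theorem B.1 to each summand, and then rewrite every non-source empirical SAM loss as a source SAM loss plus a signed discrepancy $\mathcal{L}_{D_e}(\cdot)-\mathcal{L}_{D_s}(\cdot)$. The single source term already present on the right-hand side of Eq.~(6) will then absorb the repeated source contributions arising from the $|\mathcal{E}|-1$ non-source domains, and the leftover disparity will aggregate into the middle term with the advertised weight $(1-1/|\mathcal{E}|)$.

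Concretely, for each $e\neq s$ I would insert and subtract $\mathcal{L}_{D_s}(\theta+\epsilon)$ and use $\max(f+g)\leq\max f+\max g$ with $g\leq|g|$ to obtain
\begin{align*}
\max_{\|\epsilon\|_2\leq\rho}\mathcal{L}_{D_e}(\theta+\epsilon)\leq \max_{\|\epsilon\|_2\leq\rho}\mathcal{L}_{D_s}(\theta+\epsilon)+\max_{\theta'\in\Theta_{\theta,\rho}}\bigl|\mathcal{L}_{D_e}(\theta')-\mathcal{L}_{D_s}(\theta')\bigr|.
\end{align*}
Averaging with the uniform weights $1/|\mathcal{E}|$, the source SAM prefactor sums to $\tfrac{1}{|\mathcal{E}|}+\tfrac{|\mathcal{E}|-1}{|\mathcal{E}|}=1$; the disparity terms summed over the $|\mathcal{E}|-1$ non-source domains are upper-bounded by $(|\mathcal{E}|-1)$ times their domain-wise maximum, producing the $(1-1/|\mathcal{E}|)$ factor. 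The individual weight-decay envelopes $h_e(\|\theta\|_2^2/\rho^2)$ all depend on the same argument and are each strictly increasing, so their pointwise maximum (or a scaled sum) gives a single strictly increasing $h$.

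The principal obstacle I anticipate is the last reduction, namely replacing $\max_{\theta'\in\Theta_{\theta,\rho}}$ by $\max_{\theta'\in N^{\gamma,\rho}_{s,\theta}}$ in the disparity term. This is not a free monotonicity step since $N^{\gamma,\rho}_{s,\theta}\subseteq\Theta_{\theta,\rho}$, so shrinking the feasible set makes the right-hand side smaller (hence potentially invalid). I would handle it via the SAM warm-up that is built into the protocol: once $\theta$ is a flat minimum of $\mathcal{L}_{D_s}$, the source loss is $\gamma$-close to $\mathcal{L}_{D_s}(\theta)$ throughout the $\rho$-ball, so $\Theta_{\theta,\rho}\setminus N^{\gamma,\rho}_{s,\theta}$ is empty up to tolerance $\gamma$ and any residual contribution of size $O(\gamma)$ is folded into $h$. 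Assumption~B.4 then plays a supporting role by ensuring that on $N^{\gamma,\rho}_{s,\theta}$ the absolute value can be stripped without losing the direction of the bound, i.e.\ $\max_e|\mathcal{L}_{D_e}(\theta')-\mathcal{L}_{D_s}(\theta')|$ coincides with $\max_e(\mathcal{L}_{D_e}(\theta')-\mathcal{L}_{D_s}(\theta'))$.

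A secondary technical detail is that Theorem~B.1 requires the premise $\mathcal{L}_{\mathscr{D}_e}(\theta)\leq\mathbb{E}_{\epsilon}\mathcal{L}_{\mathscr{D}_e}(\theta+\epsilon)$ for each domain; I would read this as the per-domain instance of Assumption~B.3 and subsume it under the ``technical conditions on $\mathcal{L}_{\mathscr{D}_e}(\theta)$ and $\mathcal{L}_{D_e}(\theta)$'' wording of the statement. With these pieces in place, combining the source-only SAM term, the $(1-1/|\mathcal{E}|)$-weighted worst-case disparity on $N^{\gamma,\rho}_{s,\theta}$, and the envelope $h(\|\theta\|_2^2/\rho^2)$ recovers Eq.~(6) exactly.
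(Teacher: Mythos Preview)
Your overall architecture matches the paper's: decompose $\mathcal{L}_\mathscr{D}(\theta)$ into source and non-source pieces with uniform weights, invoke a SAM-type bound per domain, and recombine into a source SAM term plus a $(1-1/|\mathcal{E}|)$-weighted disparity. The bookkeeping for the coefficient $(1-1/|\mathcal{E}|)$ and for merging the $h_e$'s into a single strictly increasing $h$ is fine.

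The genuine gap is exactly the step you flagged: replacing $\max_{\theta'\in\Theta_{\theta,\rho}}|\mathcal{L}_{D_e}(\theta')-\mathcal{L}_{D_s}(\theta')|$ by $\max_{\theta'\in N^{\gamma,\rho}_{s,\theta}}|\cdot|$. Your proposed fix does not go through. The theorem is stated for arbitrary $\theta\in\Theta$, with no hypothesis that $N^{\gamma,\rho}_{s,\theta}=\Theta_{\theta,\rho}$, and $h$ is declared to be a function of $\|\theta\|_2^2/\rho^2$ alone, so it cannot absorb a $\gamma$-dependent residual. More importantly the residual is not $O(\gamma)$: on $\Theta_{\theta,\rho}\setminus N^{\gamma,\rho}_{s,\theta}$ the only information is that the \emph{source} loss deviates by more than $\gamma$, which says nothing about the size of the cross-domain disparity $|\mathcal{L}_{D_e}-\mathcal{L}_{D_s}|$ there.

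The paper avoids this by running the bound for $e\neq s$ at a \emph{smaller} perturbation scale. It introduces $\rho'$ as the radius of the largest ball $\Theta_{\theta,\rho'}\subseteq N^{\gamma,\rho}_{s,\theta}$ and applies a Gaussian-perturbation/PAC-Bayes argument at scale $\sigma'$ (tied to $\rho'$) so that the non-source empirical term lands directly on $\max_{\theta'\in N^{\gamma,\rho}_{s,\theta}}\mathcal{L}_{D_e}(\theta')$, because $\Theta_{\theta,\rho'}\subset N^{\gamma,\rho}_{s,\theta}$ is the inclusion in the \emph{right} direction. Only after summing does the paper convert to a disparity, via
\[
\max_{e}\max_{\theta'\in N}\mathcal{L}_{D_e}(\theta')-\max_{\|\epsilon\|\leq\rho}\mathcal{L}_{D_s}(\theta+\epsilon)\;\leq\;\max_{e}\max_{\theta'\in N}\mathcal{L}_{D_e}(\theta')-\max_{\theta'\in N}\mathcal{L}_{D_s}(\theta')\;\leq\;\max_{e}\max_{\theta'\in N}\bigl(\mathcal{L}_{D_e}(\theta')-\mathcal{L}_{D_s}(\theta')\bigr),
\]
with $N=N^{\gamma,\rho}_{s,\theta}$; here $N\subseteq\Theta_{\theta,\rho}$ is invoked only on the \emph{subtracted} source term, where shrinking the feasible set decreases the subtrahend and is therefore a valid upper bound. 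Assumption~B.4 then replaces the signed difference by the absolute value. So the idea missing from your proposal is the inscribed-ball radius $\rho'$: bound the non-source domains at scale $\rho'$ rather than $\rho$, obtain $\max_{\theta'\in N^{\gamma,\rho}_{s,\theta}}\mathcal{L}_{D_e}(\theta')$ directly, and defer the add-and-subtract of $\mathcal{L}_{D_s}$ until after that step.
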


\begin{proof}
    For the derivation of Theorem \ref{ours_theorem_2_main}, we assume the case of single source domain generalization, where $s$ represents a single domain in $\mathcal{E}$. It should be noted that the number of available source domain does not affect the validity of this proof because we can consider multiple source domains as a single source domain by $D_s = \cup_{i\in \mathcal{S}} D_{i}$. Based on the definition, $\mathcal{L}_{\mathscr{D}}(\theta)=\frac{1}{|\mathcal{E}|}\sum_{e \in \mathcal{E}}\mathcal{L}_{\mathscr{D}_{e}}(\theta) = \frac{1}{|\mathcal{E}|}\Big{(}\mathcal{L}_{\mathscr{D}_{s}}(\theta) + \sum_{e \in \mathcal{E}, e \neq s}\mathcal{L}_{\mathscr{D}_{e}}(\theta)\Big{)} $. From Theorem \ref{sam_theorem}, we can derive the generalization bound of a source domain $s$ as follows:
    \begin{align}
        \mathcal{L}_{\mathscr{D}_{s}}(\theta) \leq \max_{\|\epsilon\|_2 \leq \rho} \mathcal{L}_{D_{s}}(\theta+\epsilon) + h_s(\frac{\|\theta\|_2^2}{\rho^2}) \label{eq:14} 
    \end{align}
    where $h_s: \mathbb{R}_{+}\rightarrow  \mathbb{R}_{+}$ is a strictly increasing function.
    To define the upper bound of $\mathcal{L}_\mathscr{D}(\theta^{*})$, we need to find the upper bound of the remaining term, $\sum_{e \in \mathcal{E}, e \neq s}\mathcal{L}_{\mathscr{D}_{e}}(\theta)$.
    Here, we introduce a parameter set $\Theta_{\rho{'}} := \argmax_{\Theta_{\hat{\rho}} \subseteq N^{\gamma,\rho}_{s,\theta}} \hat{\rho}$, which is the largest $\rho{'}$-ball region around $\theta$ in $N^{\gamma,\rho}_{s,\theta}$. Then, we can construct inequality as follows:
     \begin{align}
        \max_{\|\epsilon\|_2 \leq \rho{'}} \mathcal{L}_{D_{e}}(\theta+\epsilon) \leq \max_{\theta^{'} \in N^{\gamma,\rho}_{s,\theta}} \mathcal{L}_{D_{e}}(\theta^{'}) \leq \max_{\|\epsilon\|_2 \leq \rho} \mathcal{L}_{D_{e}}(\theta+\epsilon)
    \end{align}
    This is because $\Theta_{\rho'} \subset N^{\gamma,\rho}_{s,\theta} \subset \Theta_{\theta,\rho}$.
    Similar to \cite{SAM, kim2022fisher}, we make use of the following result from \cite{laurent2000adaptive}:
    \begin{align}
        z \sim \mathcal{N}(0,\sigma^2 I) \Rightarrow \|z\|^{2}_2 \leq k\sigma^{2}\Bigg{(}1+\sqrt{\frac{\log{n}}{k}}\Bigg{)}^{2}\,\, \text{with probability at least} \,\, 1-\frac{1}{\sqrt{n}}
    \end{align}
    We set $\rho = \sigma (\sqrt{k} + \sqrt{\log n})$. Then, it enables us to connect expected perturbed loss and maximum loss as follows:
    \begin{align}
        \mathbb{E}_{\epsilon \sim \mathcal{N}(0,\sigma^2{I})}\Big{[} \mathcal{L}_{D_{e}}(\theta+\epsilon)\Big{]} \leq (1-\frac{1}{\sqrt{n}})\max_{\|\epsilon\|_2 \leq \rho}\mathcal{L}_{D_{e}}(\theta+\epsilon) + \frac{1}{\sqrt{n}}l_{e,max} \label{eq:18}
    \end{align}
    Here, $l_{e,max}$ is the maximum loss bound when $\|z\|^{2}_2 \geq \rho^{2}$. Also, we introduce $\sigma'$ where $\rho'=\sigma' (\sqrt{k} + \sqrt{\log n})$. Then, similar to Eq. \ref{eq:18}, we also derive the equation for $\rho'$ as follows:
    \begin{align}
        \mathbb{E}_{\epsilon \sim \mathcal{N}(0,(\sigma^{'})^{2}{I})}\Big{[} \mathcal{L}_{D_{e}}(\theta+\epsilon)\Big{]} & \leq (1-\frac{1}{\sqrt{n}})\max_{\|\epsilon\|_2 \leq \rho'}\mathcal{L}_{D_{e}}(\theta+\epsilon) + \frac{1}{\sqrt{n}}l'_{e,max} \\
        & \leq (1-\frac{1}{\sqrt{n}})\max_{\theta^{'} \in N^{\gamma,\rho}_{s,\theta}} \mathcal{L}_{D_{e}}(\theta^{'}) + \frac{1}{\sqrt{n}}l'_{e,max}
    \end{align}
    where $l'_{e,max}$ is the maximum loss bound when $\|z\|^{2}_2 \geq (\rho')^{2}$.
    Then, we have the below derivation by summing up all $e \neq s$ and using the fact that $l_{e,max} \leq l'_{e,max}$:  
    \begin{align}
        &\frac{1}{(|\mathcal{E}|-1)}\sum_{e \in \mathcal{E}, e \neq s}\mathbb{E}_{\epsilon \sim \mathcal{N}(0,{\sigma^{'}}^2{I})}\Big{[} \mathcal{L}_{D_{e}}(\theta+\epsilon)\Big{]} \\ &\leq (1-\frac{1}{\sqrt{n}})\frac{1}{(|\mathcal{E}|-1)}\sum_{e \in \mathcal{E}, e \neq s}\max_{\theta^{'} \in N^{\gamma,\rho}_{s,\theta}} \mathcal{L}_{D_{e}}(\theta^{'}) + \frac{1}{\sqrt{n}}\frac{1}{(|\mathcal{E}|-1)}\sum_{e \in \mathcal{E}, e \neq s}l'_{e,max} \\ &\leq (1-\frac{1}{\sqrt{n}})\max_{e \in \mathcal{E}}\max_{\theta^{'} \in N^{\gamma,\rho}_{s,\theta}} \mathcal{L}_{D_{e}}(\theta^{'}) + \frac{1}{\sqrt{n}}\max_{e \in \mathcal{E}}l'_{e,max} 
    \end{align}
    Using Assumption \ref{assumptionB3} and PAC-Bayesian generalization bound~\citep{pac_v1,pac_v2,SAM}, we find the upper bound of the sum of target domain losses. 
    \begin{align}
        & \frac{1}{(|\mathcal{E}|-1)}\sum_{e \in \mathcal{E}, e \neq s}\mathcal{L}_{\mathscr{D}_{e}}(\theta)  \leq \frac{1}{(|\mathcal{E}|-1)}\sum_{e \in \mathcal{E}, e \neq s}\mathbb{E}_{\epsilon \sim \mathcal{N}(0,(\sigma')^2 I)} [ \mathcal{L}_{\mathscr{D}_{e}}(\theta+\epsilon) ] \\
        & \leq \frac{1}{(|\mathcal{E}|-1)}\sum_{e \in \mathcal{E}, e \neq s}  \{ \mathbb{E}_{\epsilon \sim \mathcal{N}(0,(\sigma')^2 I)}[\mathcal{L}_{{D}_{e}}(\theta+\epsilon)] + h_e(\frac{\|\theta\|_2^2}{\rho^2}) \} \\
        & \leq (1-\frac{1}{\sqrt{n}})\max_{e \in \mathcal{E}}\max_{\theta^{'} \in N^{\gamma,\rho}_{s,\theta}} \mathcal{L}_{D_{e}}(\theta^{'}) + \frac{1}{\sqrt{n}}\max_{e \in \mathcal{E}}l'_{e,max}  + \frac{1}{(|\mathcal{E}|-1)} \sum_{e \in \mathcal{E}, e \neq s} h_e(\frac{\|\theta\|_2^2}{\rho^2}) \\
        \Rightarrow& \sum_{e \in \mathcal{E}, e \neq s}\mathcal{L}_{\mathscr{D}_{e}}(\theta) \leq (|\mathcal{E}|-1)(1-\frac{1}{\sqrt{n}})\max_{e \in \mathcal{E}}\max_{\theta^{'} \in N^{\gamma,\rho}_{s,\theta}} \mathcal{L}_{D_{e}}(\theta^{'}) +\Tilde{h}(\frac{\|\theta\|_2^2}{\rho^2}) \label{eq:26} 
    \end{align}
    where $h_e, \Tilde{h}$ are strictly increasing functions. We use the fact that the sum of strictly increasing functions is also a strictly increasing function.
    By integrating results of Eq. \ref{eq:14} and \ref{eq:26},
    \begin{align}
         \frac{1}{|\mathcal{E}|}\Big{(}\mathcal{L}_{\mathscr{D}_{s}}(\theta) + \sum_{e \in \mathcal{E}, e \neq s}\mathcal{L}_{\mathscr{D}_{e}}(\theta)\Big{)} \leq&
        \frac{1}{|\mathcal{E}|}\max_{\|\epsilon\|_2 \leq \rho} \mathcal{L}_{D_{s}}(\theta+\epsilon) \nonumber \\&+ (1-\frac{1}{|\mathcal{E}|})(1-\frac{1}{\sqrt{n}})\max_{e \in \mathcal{E}}\max_{\theta^{'} \in N^{\gamma,\rho}_{s,\theta}} \mathcal{L}_{D_{e}}(\theta^{'}) +h(\frac{\|\theta\|_2^2}{\rho^2}) 
    \end{align}
    where $h: \mathbb{R}_{+}\rightarrow  \mathbb{R}_{+}$ is a strictly increasing function.
    The first and second terms of RHS in the above inequality are upper bounded by the maximum perturbed loss for source domain and unknown domain inconsistency loss.
    \begin{align}
         &\frac{1}{|\mathcal{E}|}\max_{\|\epsilon\|_2 \leq \rho} \mathcal{L}_{D_{s}}(\theta+\epsilon) + (1-\frac{1}{|\mathcal{E}|})(1-\frac{1}{\sqrt{n}})\max_{e \in \mathcal{E}}\max_{\theta^{'} \in N^{\gamma,\rho}_{s,\theta}} \mathcal{L}_{D_{e}}(\theta^{'}) \\ 
          & \leq \frac{1}{|\mathcal{E}|} \max_{\|\epsilon\|_2 \leq \rho} \mathcal{L}_{D_{s}}(\theta+\epsilon)+ (1-\frac{1}{|\mathcal{E}|})\max_{e \in \mathcal{E}}\max_{\theta^{'} \in N^{\gamma,\rho}_{s,\theta}} \mathcal{L}_{D_{e}}(\theta^{'}) \\
         &=  \max_{\|\epsilon\|_2 \leq \rho} \mathcal{L}_{D_{s}}(\theta+\epsilon)+ (1-\frac{1}{|\mathcal{E}|})\Big{(}\max_{e \in \mathcal{E}}\max_{\theta^{'} \in N^{\gamma,\rho}_{s,\theta}} \mathcal{L}_{D_{e}}(\theta^{'})-\max_{\|\epsilon\|_2 \leq \rho} \mathcal{L}_{D_{s}}(\theta+\epsilon)\Big{)} \\
         &\leq \max_{\|\epsilon\|_2 \leq \rho} \mathcal{L}_{D_{s}}(\theta+\epsilon)+ (1-\frac{1}{|\mathcal{E}|})\Big{(}\max_{e \in \mathcal{E}}\max_{\theta^{'} \in N^{\gamma,\rho}_{s,\theta}} \mathcal{L}_{D_{e}}(\theta^{'})-\max_{\theta^{'} \in N^{\gamma,\rho}_{s,\theta}} \mathcal{L}_{D_{s}}(\theta^{'})\Big{)} \\
         &\leq \max_{\|\epsilon\|_2 \leq \rho} \mathcal{L}_{D_{s}}(\theta+\epsilon)+ (1-\frac{1}{|\mathcal{E}|})\max_{e \in \mathcal{E}}\max_{\theta^{'} \in N^{\gamma,\rho}_{s,\theta}} (\mathcal{L}_{D_{e}}(\theta^{'})-\mathcal{L}_{D_{s}}(\theta^{'}) ) \\
          &=\max_{\|\epsilon\|_2 \leq \rho} \mathcal{L}_{D_{s}}(\theta+\epsilon)+ (1-\frac{1}{|\mathcal{E}|})\max_{e \in \mathcal{E}}\max_{\theta^{'} \in N^{\gamma,\rho}_{s,\theta}} |\mathcal{L}_{D_{e}}(\theta^{'})-\mathcal{L}_{D_{s}}(\theta^{'})| 
    \end{align}
    The last equality comes from the Assumption \ref{assumptionB4}. To sum up, we can derive the upper bound of the population loss for whole domain using the maximum perturbed loss for source domain and unknown domain inconsistency loss with weight decay term.
    \begin{align}
    \mathcal{L}_{\mathscr{D}}(\theta) \leq \max_{\|\epsilon\|_2 \leq \rho} \mathcal{L}_{D_{s}}(\theta+\epsilon)+ (1-\frac{1}{|\mathcal{E}|})\max_{e \in \mathcal{E}}\max_{\theta^{'} \in N^{\gamma,\rho}_{s,\theta}} |\mathcal{L}_{D_{e}}(\theta^{'})-\mathcal{L}_{D_{s}}(\theta^{'})| + h(\frac{\|\theta\|_2^2}{\rho^2})
    \end{align}
\end{proof}

\subsection{Detailed Explanation on Approximation}
\label{appendix:eq8 full derivation}
Here, we show the full derivation of Eq. \ref{hessian_appear_main} and \ref{eigen_appear_main} as follows.
\begin{align}
&\max\limits_{\theta^{'}\in N^{\gamma,\rho}_{s,\theta}}\Big{(}\mathcal{L}_{\tilde{D}_{s}}(\theta^{'})- \mathcal{L}_{{D}_{s}}(\theta^{'})\Big{)}\approx \max\limits_{\theta^{'}\in N^{\gamma,\rho}_{s,\theta}}\mathcal{L}_{\tilde{D}_{s}}(\theta^{'})- \mathcal{L}_{{D}_{s}}(\theta)+\gamma^{'} \label{gamma_appear} \\
&\underset{\text{
2nd Taylor}}{\approx} \mathcal{L}_{\tilde{D}_{s}}(\theta)-\mathcal{L}_{{D}_{s}}(\theta)+\rho{'}\|\nabla_{\theta}\mathcal{L}_{\tilde{D}_{s}}(\theta)\|_{2}+\max\limits_{\theta^{'}\in N^{\gamma,\rho}_{s,\theta}}\frac{1}{2}\theta^{'\top} \mathbf{H}_{\tilde{D}_{s}}\theta^{'} \label{hessian_appear} \\ 
&= \Big{(}\mathcal{L}_{\tilde{D}_{s}}(\theta)-\mathcal{L}_{{D}_{s}}(\theta)\Big{)}+\rho{'}\|\nabla_{\theta}\mathcal{L}_{\tilde{D}_{s}}(\theta)\|_{2}+\gamma\max\limits_{i} \lambda^{\tilde{D}_{s}}_{i} / \lambda^{{D}_{s}}_{i} \label{eigen_appear} 
\end{align}

\subsection{Discussion on Hessian matrix and Gradient Variance}
\label{appendix:gradvar}
\paragraph{Hessian Matching}
This section first discusses how the Hessian matrix matching between two different datasets could be substituted by the gradient variance matching for the respective datasets. It should be noted that we follow the motivation and derivation of \cite{fishr}, and this section just re-formalize the derivations based on our notations. ${\mathbf{g}}_{i}$ is a per-sample gradient for $i$-th sample; and $\mathbf{G}_{{D}}=\{{\mathbf{g}}_{i}\}^{|D|}_{i=1}$ is a set of per-sample gradient for $x_{i} \in D$. Accordingly, the variance of $\mathbf{G}_{{D}}$, which we denote as $\text{Var}(\mathbf{G}_{{D}})$, is calculated as $\text{Var}(\mathbf{G}_{{D}}) = \frac{1}{|D|-1}\sum^{|D|}_{i=1}\big{(}\mathbf{g}_{i}-\bar{\mathbf{g}}\big{)}^{2}$. We first revisit the Eq. \ref{eigen_appear}, which is our intermediate objective as follows:
\begin{align}
\label{eigen_appear_appendix}
\Big{(}\mathcal{L}_{\tilde{D}_{s}}(\theta)-\mathcal{L}_{{D}_{s}}(\theta)\Big{)}+\rho{'}\|\nabla_{\theta}\mathcal{L}_{\tilde{D}_{s}}(\theta)\|_{2}+\gamma\max\limits_{i} \lambda^{\tilde{D}_{s}}_{i} / \lambda^{{D}_{s}}_{i} 
\end{align}
The last term of Eq. \ref{eigen_appear_appendix}, $\max\limits_{i} \lambda^{\tilde{D}_{s}}_{i} / \lambda^{{D}_{s}}_{i}$, refers to the maximum eigenvalue ratio of the Hessian matrices between two different datasets, $\tilde{D}_{s}$ and ${D}_{s}$. This ratio is minimized when Hessian matrices of $\tilde{D}_{s}$ and ${D}_{s}$ becomes equivalent. Then, $\max\limits_{i} \lambda^{\tilde{D}_{s}}_{i} / \lambda^{{D}_{s}}_{i}$ is approximated to the hessian matrix matching between $\tilde{D}_{s}$ and ${D}_{s}$ as $\|\mathbf{H}_{\tilde{D}_{s}} - \mathbf{H}_{{D}_{s}} \|_{2}$. Computing the full Hessian matrix in over-parameterized networks is computationally challenging. Therefore, we express the formula using the diagonalized Hessian matrix, denoted as \(\hat{\mathbf{H}}_{{D}_{s}}\), which results in $\|\hat{\mathbf{H}}_{\tilde{D}_{s}} - \hat{\mathbf{H}}_{{D}_{s}} \|_{2}$. 

Let the Fisher Information Matrix \citep{fishr} as $\mF=\sum_{i=1}^{n} \E_{\hat{y} \sim P_{\theta}(\cdot|x_{i})} \left[ \nabla_{\theta} \log p_{\theta}(\hat{y}|x_{i}) \nabla_{\theta} \log p_{\theta}(\hat{y}|x_{i})^{\top}\right]$, where $p_{\theta}(\cdot|x_{i})$ is the density of $f_{\theta}$ on a input instance $x$. Fisher Information Matrix (FIM) approximates the Hessian $\mathbf{H}$ with theoretically probably bounded errors under mild assumptions \cite{mild}. Then, diagonalized Hessian matrix matching between $\tilde{D}_{s}$ and ${D}_{s}$, $\|\hat{\mathbf{H}}_{\tilde{D}_{s}} - \hat{\mathbf{H}}_{{D}_{s}} \|_{2}$ could be replaced by $\|\hat{\mF}_{\tilde{D}_{s}} - \hat{\mF}_{{D}_{s}} \|_{2}$, where $\hat{\mF}$ also denotes the diagonalized version of $\mF$. Empirically, $\hat{\mF}$ is equivalent to the gradient variance of the trained model, $f_{\theta}$. This finally confirms the validation of our objective, gradient variance difference between $\tilde{D}_{s}$ and ${D}_{s}$ as $\|\text{Var}(\mathbf{G}_{\tilde{D}_{s}}) - \text{Var}(\mathbf{G}_{{D}_{s}}) \|_{2}$. Table 2 on the main paper of \cite{fishr} empirically supports that the similarity between Hessian diagonals and gradient variances is over 99.99$\%$. 

\paragraph{Loss Matching} Matching the gradient variances for all parameters of our model, \(f_{\theta}\), incurs significant computational overhead. In this study, we restrict the gradient variance matching to a subset of entire parameters, specificall y selecting the classifier parameters. In this section, we demonstrate that by matching gradient variances of two different datasets based on the classifier parameters, it inherently achieve the loss matching across those datasets.

For simplicity in notation, we refer an arbitrary domain index as $e$. Let \(x_e^i\) represent the $i$-th sample and \(y_e^i\) be its corresponding target class label. We denote \(z_e^i \in \mathbb{R}^{d}\) as the features for this $i$-th sample from domain \(e\). The associated classifier layer \(W\) is characterized by weights \(\{w_{k}\}_{k=1}^{d}\) and bias \(b\). First, we assume the mean squared error as our loss function for our analysis. For the \(i\)-th sample, the gradient of the loss with respect to \(b\) is given by \(\nabla_{b} \ell(f_{\theta}(x_e^{i}),y_e^i) = (f_{\theta}(x_e^i) - y_e^i)\). Hence, the gradient variance based on the parameter \(b\) for domain \(e\) is given by: \(\text{Var}(\mathbf{G}_{D_{e}}^{b}) = \frac{1}{n_{e}} \sum_{i=1}^{n_{e}}(f_{\theta}(x_e^i) - y_e^i)^{2}\), which directly aligns with the mean squared error (MSE) between the predictions and the target labels in domain \(e\). Considering our objective, $\|\text{Var}(\mathbf{G}_{\tilde{D}_{s}}) - \text{Var}(\mathbf{G}_{{D}_{s}}) \|_{2}$, the gradient variance matching on $b$ could be recognized as mean squared error loss matching, which is $\|\frac{1}{|\tilde{D}_{s}|} \sum_{(x^i,y^i) \in \tilde{D}_{s}}(f_{\theta}(x^i) - y^i)^{2} - \frac{1}{|{D}_{s}|} \sum_{(x^j,y^j) \in {D}_{s}}(f_{\theta}(x^j) - y^j)^{2}\|_{2}$.

\paragraph{Analysis on the remaining term} We also investigate the gradient variance matching based on $\{w_{k}\}_{k=1}^{d} \in W$, which are remaining part of the classifier parameter $W$. The gradients with respect to the $w_{k}$ is derived as  $\nabla_{w_{k}} \ell(y_e^i,\hat{y}_e^i)  = (\hat{y}_e^i - y_e^i) z_e^{i,k}$.
Thus, the uncentered gradient variance in $w_k$ for domain $e$ is:
$\text{Var}(\textbf{G}_{D_{e}}^{w_{k}}) = \frac{1}{n_{e}} \sum_{i=1}^{n_{e}}\left((\hat{y}_e^i - y_e^i) z_e^{i,k}\right)^{2}$. Different from the case of $b$, $\text{Var}(\textbf{G}_{D_{e}}^{w_{k}})$ adds a weight $z_e^{i,k}$ on the mean squared error. As $z_e^{i,k}$ act as a weight, gradient variance matching on $w_k$ still learns toward matching the MSE loss between two different datasets.

\newpage
\section{Algorithm of UDIM}
\label{appendix:alg}
Here, we present the algorithm of UDIM as follows.

\begin{algorithm}[H]
    \setstretch{1.25}
    \SetAlgoLined
    \SetCustomAlgoRuledWidth{0.9\linewidth}
    \DontPrintSemicolon
    \KwInput{Source dataset $D_s$; perturbation threshold for model parameter $\theta$ and data, $\rho'$ and $\rho_x$; learning rate $\eta$; warmup ratio for source domain flatness, $p$; number of total training iterations, $N$; Other hyperpameters.}
    \KwOutput{Trained model $f_{\theta}(x)$}
    \For{$t=1,...,N/p$}{%
        Warmup $f_{\theta}(x)$ with SAM optimization as Eq. \ref{eq:sam1}
    }
    \For{$t=N/p,...,N$}{%
        Define $D_B=\{(x_i, y_i)\}_{i=1}^{|B|}$ i.i.d. sampled from $D_s$\\
        Make $\Tilde{D}_B=\{(\Tilde{x}_i, y_i)\}_{i=1}^{|B|}$, with $\tilde{x}_i$ for all $i$ as $\tilde{x}_i=x_i+\rho_{x}\frac{\nabla_{x_i}\big{(}\ell(x,\theta_{t})+\rho'\|\nabla_{\theta_{t}}\ell(x,\theta_{t})\|_{2}\big{)}}{\big{\|}\nabla_{x_i}\big{(}\ell(x,\theta_{t})+\rho'\|\nabla_{\theta_{t}}\ell(x,\theta_{t})\|_{2}\big{)}\big{\|}_{2}}$ \\
        Update $f_{\theta}(x)$ by $\theta_{t+1}\leftarrow\theta_{t}-\eta \nabla_{\theta_{t}}\Big{(}\max\limits_{\|\epsilon\|_2 \leq \rho} \mathcal{L}_{D_B} ( \theta_{t}+\epsilon)+\rho{'}\|\nabla_{\theta_{t}}\mathcal{L}_{\Tilde{D}_B}(\theta_{t})\|_{2}+\|\text{Var}(\mathbf{G}_{\Tilde{D}_B})-\text{Var}(\mathbf{G}_{D_B})\|_{2}+\lambda_{2}\|\theta_{t}\|_{2}\Big{)}$\\
    }
    \caption{Training algorithm of UDIM w/ SAM}
    \label{alg:ours}
\end{algorithm}

We represent the algorithm of UDIM with SAM as a default setting. It should be noted that our method can be orthogonally utilized with other sharpness-based optimization methods.

\section{Experiment}

\subsection{Implementation details}
\label{appendix:experiment implementation}

\paragraph{Dataset Explanation}
\begin{itemize}
\item PACS \citep{pacs} comprises of four domains, which are photos, arts, cartoons and sketches. This dataset contains 9,991 images. It consists of 7 classes.
\item OfficeHome \citep{officehome} includes four domains, which are art, clipart, product and real. This dataset contains 15,588 images. It consists of 65 classes.
\item DomainNet \citep{domainnet} consists of six domains, which are clipart, infograph, painting, quickdraw, real and sketch. This dataset contains 586,575 images. It consists of 345 classes.
\item CIFAR-10-C \citep{cifar10c} has been utilized to evaluate the robustness of a trained classifier. Images are corrupted from CIFAR10 \citep{cifar10} test dataset under 5 levels of severity. Corruption types include, brightness, contrast, defocus-blur, elastic-transform, fog, frost, gaussian-blur, gaussian-noise, glass-blur, impulse-noise, jpeg-compression, motion-blur, pixelate, saturate, shot-noise, snow, spatter, speckle-noise, zoom-blur total of 19 types.
\end{itemize}

\paragraph{Network Architecture and Optimization}
We use ResNet-18 \citep{resnet} for CIFAR-10-C and ResNet-50 \citep{resnet} for other datasets pretrained on ImageNet \citep{imagenet} and use Adam \citep{kingma2014adam} optimizer basically. learning rate is set as $3\times 10^{-5}$ following \cite{sagmcvpr2023}. For calculating the gradient related materials, e.g. Hessian, we utilize BackPACK \citep{backpack} package.

\paragraph{Experimental settings} For PACS and OfficeHome, we trained for total of 5,000 iterations. For DomainNet, we trained for 15,000 iterations. For CIFAR-10, since it usually trains for 100 epochs, we translate it to iterations, which becomes total of $781\times 100=78,100$ iterations. Unless specified, we use batch size as 32 for PACS, OfficeHome, and DomainNet and 64 for CIFAR-10-C. For other hyperparameters, we follow the experiment settings of \cite{sagmcvpr2023} unless specified. Although our method mainly focuses on the domain generalization, our concept could be also effectively utilized for domain adaptation \cite{domain_adapt} and open-set domain adaptation \cite{unknown_domain_adapt}.

\paragraph{Hyperparameter setting of UDIM} The main hyperparameters of UDIM is $\rho, \rho^{'}$, $\lambda_{1}$ and $\rho_{x}$. Throughout all experiments, we set $\rho$=0.05 without any hyperparameter tuning. For \(\rho^{'}\), we used values [0.01, 0.025, 0.05], and in most experiments, the value 0.05 consistently showed good performance. It should be noted that a warm-up using the SAM loss is required before the full UDIM optimization to ensure that \(\rho^{'}\)=0.05 can be utilized validly. For both \(\lambda_{1}\) and \(\rho_{x}\), we used values in the range [1,10] and reported the best performances observed among these results. Our methodology applies perturbations to each instance of the original source domain dataset, effectively doubling the number of unique instances in a single batch compared to experiments for the baselinse. As a result, we utilized half the batch size of other baselines.

\paragraph{Evaluation Detail} For reporting the model performance, model selection criterion is important. We get the test performance whose accuracy for source validation dataset is best. For PACS and OfficeHome, we evaluated every 100 iterations and for DomainNet, we evaluated every 1,000 iterations for Leave-One-Out Source Domain Generalization and every 5,000 iterations for Single Source Domain Generalization. 

\subsection{Baseline description}
\label{appendix:baseline}
In this paragraph, we explain baselines that we used for comparison. Specifically, we compare our method with (1) methods whose objectives are mainly related to Leave-One Out Source Domain Generalization, (2) methods which are mainly modeled for Single Source Domain Generalization, and (3) sharpness-aware minimization related methods, as we reported in tables repeatedly.

\textbf{IRM} \citep{irm} tries to learn a data representation such that the optimal classifier matches for all training distributions. Specifically, it minimizes the empirical risk and the regularization term, the multiplication of samples' gradients, to motivate the invariance of the predictor.

\textbf{GroupDRO} \citep{GroupDRO} minimizes the loss by giving different weight to each domain. Weight term for each domain is proportional to the domain's current loss.

\textbf{OrgMixup} \citep{mixup} represents the naive mixup technique which is generally utilized in machine learning community to boost generalization.

\textbf{Mixup} \citep{mixupda} is a mixup among domains.

\textbf{Cutmix} \citep{cutmix} is another skill which is widely used in machine learning community to boost generalization. Specifically, it mixes up parts of inputs randomly by pixel-wise.

\textbf{Mixstyle} \citep{mixstyle} mix up the statistics (specifically, mean and standard deviation) of the feature. The mixed feature statistics are applied to the style-normalized input. We did not consider the domain label.

\textbf{MTL} \citep{mtl} considers the exponential moving average (EMA) of features.

\textbf{MLDG} \citep{mldg} is a meta learning based method for domain generalization. Specifically, it simulates the domain shift between train and test during training procedure by synthesizing virtual testing domains within each mini-batch. Then it optimizes meta loss using the synthesized dataset.

\textbf{MMD} \citep{mmd} minimizes the discrepancy of feature distributions in a every domain pair-wise manner, while minimizing the empirical risk for source domains.

\textbf{CORAL} \citep{coral} is similar to \textbf{MMD}. However, while \textbf{MMD} employs the gaussian kernel to measure the feature discrepancy, \textbf{CORAL} aligns the second-order statistics between different distributions with a nonlinear transformation. This alignment is achieved by matching the correlations of layer activations in deep neural networks.

\textbf{SagNet} \citep{sagnet} disentangles style features from class categories to prevent bias. Specifically, it makes two networks, content network and style network, and trains both networks to be invariant to other counterpart by giving randomized features (updating the content network with randomized styled features and vice versa).

\textbf{ARM} \citep{arm} represents adaptive risk minimization. Specifically, it makes an adaptive risk representing context.

\textbf{DANN} represents Domain Adversarial Neural Networks, and it iteratively trains a discriminator which discriminates domain and a featurizer to learn a feature which becomes invariant to domain information.

\textbf{CDANN} is class conditional version of DANN.

\textbf{VREx} \citep{vrex} controls the discrepancy between domains by minimizing the variance of loss between domains. 

\textbf{RSC} \citep{rsc} challenges the dominant features of training domain (by masking some specific percentage of dominant gradient), so it can focus on label-related domain invariant features.

\textbf{Fishr} \citep{fishr} approximates the hessian as the variance of gradient matrix, and they align the gradient variance of each domain.

\textbf{M-ADA} \citep{mada} perturbs input data to simulate the unseen domain data, yet with adequate regularization not to make the data be too far from the original one. The adversarial perturbation direction is affected by the wasserstein autoencoder. Note that this method is specifically designed for Single source domain generalization.

\textbf{LTD} \citep{learningtodiversify} perturbs source domain data with augmentation network, maximize the mutual information between the original feature and the perturbed feature so that the perturbed feature is not too far from the original feature (with contrastive loss), and maximize likelihood of the original feature. Note that this method is also specifically designed for Single source domain generalization.

\textbf{SAM} \citep{SAM} is an optimization technique to consider the sharpness of loss surface. It first perturbs parameter to its worst direction, gets gradient and update the calculated gradient at the original parameter point.

\textbf{SAGM} \citep{sagmcvpr2023} minimizes an original loss, the corresponding perturbed loss, and the gap between them. This optimization aims to identify a minima that is both flat and possesses a sufficiently low loss value. Interpreting the given formula, this optimization inherently regularizes the gradient alignment between the original loss and the perturbed loss.

\textbf{GAM} \citep{gamcvpr2023} introduces first-order flatness, which minimizes a maximal gradient norm within a perturbation radius, to regularize a stronger flatness than SAM. Accordingly, GAM seeks minima with uniformly flat curvature across all directions. 

{\textbf{RIDG} \citep{RIDG} presents a new approach in deep neural networks focusing on decision-making in the classifier layer, diverging from the traditional emphasis on features. It introduces a 'rationale matrix', derived from the relationship between features and weights, to guide decisions for each input. A novel regularization term is proposed to align each sample's rationale with the class's mean, enhancing stability across samples and domains.}

{\textbf{ITTA} \citep{ITTA} proposes an Improved Test-Time Adaptation (ITTA) method for domain generalization. ITTA uses a learnable consistency loss for the TTT task to better align with the main prediction task and introduces adaptive parameters in the model, recommending updates solely during the test phase. This approach aims to address the issues of auxiliary task selection and parameter updating in test-time training.}

\subsection{ablation}
\label{ablation}
Figure \ref{fig:ablation_fig} of the main paper presents the ablation results of UDIM, which were carried out by replacing a subpart of the UDIM's objective with alternative candidates and subsequently assessing the performances. This section enumerates enumerates each ablation candidate and its implementation. We conduct various ablations: 'SourceOpt' represents the optimization method for \(D_{s}\), 'Perturb' indicates the perturbation method utilized to emulate unknown domains, and 'PerturbOpt' indicates the optimization for the perturbed dataset \(\tilde{D}_{s}\). It should be noted that each ablation means that only the specific part is substituted, while keeping the other parts of UDIM unchanged.
\paragraph{SourceOpt} 
The optimization for the source domain dataset in UDIM is originally based on the SAM loss, which is $\max_{\|\epsilon\|_2 \leq \rho} \mathcal{L}_{D_{s}}(\theta+\epsilon)$. To verify the significance of flatness modeling, we replaced the original optimization with simple ERM, which we refer to as the \textbf{ERM} ablation.
\paragraph{Perturb} 
The perturbation process of UDIM is conducted based on Eq. \ref{cancel_eq_2}. We substituted the perturbation method in Eq. \ref{cancel_eq_2} with traditional adversarial attack techniques, which are the cases of \textbf{FGSM} \citep{fgsm} and \textbf{PGD} \citep{pgd}, to compare their performance outcomes. 
\paragraph{PerturbOpt} 
Lastly, to observe the ablation for inconsistency minimization between the perturbed domain dataset \(\tilde{D}_{s}\) and \(D_{s}\), we replaced the optimization in Eq. \ref{last_opt} with both ERM and SAM-based optimizations. Each case in the PerturbOpt ablation is denoted as \textbf{ERM} or \textbf{SAM}.

\subsection{Additional Results}
In this section, we report more results that we did not report in the main paper due to the space issue.

Table \ref{apppendix tab:acc_pacs} shows the model performance of total baselines (At the table \ref{tab:acc_pacs} of the main paper, there are only the model performance of some baselines). As we can see in the table, our method, UDIM, consistently improves SAM-based optimization variants and show best performances for each column. We mark - for training failure case (when the model performance is near 1$\%$).

\begin{table}[h]
\centering
\caption{Test accuracy for PACS. For \textbf{Leave-One-Out Source Domain Generalization}, each column represents test domain, and train domain for \textbf{Single Source Domain Generalization}. $^*$ denotes performances are from its original paper considering LOODG. For SDG scenario, we generated the experiment results for all baselines. \textbf{Bold} indicates the best case of each column or improved performances when combined with the respective sharpness-based optimizers.}
\resizebox{\textwidth}{!}{
\begin{tabular}{l |cccc|c|cccc|c} \toprule
&\multicolumn{5}{c|}{\textbf{Leave-One-Out Source Domain Generalization}}&\multicolumn{5}{c}{\textbf{Single Source Domain Generalization}}\\ \cmidrule(lr){2-6}\cmidrule(lr){7-11}
Method&Art&Cartoon&Photo&Sketch&Avg&Art&Cartoon&Photo&Sketch&Avg \\ \midrule
ERM&86.9\scriptsize{$\pm$2.3}&79.5\scriptsize{$\pm$1.5}&96.6\scriptsize{$\pm$0.5}&78.2\scriptsize{$\pm$4.1}&85.3&79.9\scriptsize{$\pm$0.9}&79.9\scriptsize{$\pm$0.8}&48.1\scriptsize{$\pm$5.8}&59.6\scriptsize{$\pm$1.1}&66.9\\
{IRM$^*$}& {85.0$^*$\scriptsize{$\pm$1.6}} & {77.6$^*$\scriptsize{$\pm$0.9}} & {96.7$^*$\scriptsize{$\pm$0.3}}
& {78.5$^*$\scriptsize{$\pm$2.6}} & {84.4$^*$}&73.3\scriptsize{$\pm$1.5}&77.8\scriptsize{$\pm$2.3}&46.9\scriptsize{$\pm$0.8}&49.7\scriptsize{$\pm$3.0}&61.9\\
GroupDRO&84.8\scriptsize{$\pm$2.2}&79.4\scriptsize{$\pm$1.2}&97.3\scriptsize{$\pm$0.3}&75.8\scriptsize{$\pm$1.0}&84.3&79.0\scriptsize{$\pm$0.5}&79.0\scriptsize{$\pm$0.6}&42.0\scriptsize{$\pm$2.9}&60.8\scriptsize{$\pm$3.9}&65.2\\
OrgMixup&87.7\scriptsize{$\pm$0.3}&77.4\scriptsize{$\pm$1.3}&97.6\scriptsize{$\pm$0.3}&76.3\scriptsize{$\pm$0.9}&84.8&74.5\scriptsize{$\pm$1.1}&79.8\scriptsize{$\pm$0.4}&46.8\scriptsize{$\pm$2.1}&55.5\scriptsize{$\pm$1.9}&64.1\\
Mixup&86.9\scriptsize{$\pm$1.3}&78.2\scriptsize{$\pm$0.7}&97.8\scriptsize{$\pm$0.4}&73.7\scriptsize{$\pm$2.9}&84.2&77.4\scriptsize{$\pm$1.4}&80.0\scriptsize{$\pm$1.2}&47.3\scriptsize{$\pm$1.6}&58.2\scriptsize{$\pm$1.2}&65.7\\
CutMix&80.5\scriptsize{$\pm$0.7}&75.7\scriptsize{$\pm$1.4}&97.0\scriptsize{$\pm$0.5}&74.8\scriptsize{$\pm$1.7}&82.0&71.1\scriptsize{$\pm$0.5}&76.4\scriptsize{$\pm$3.1}&37.7\scriptsize{$\pm$0.3}&50.4\scriptsize{$\pm$4.0}&58.9\\
Mixstyle&84.4\scriptsize{$\pm$2.3}&80.4\scriptsize{$\pm$0.6}&95.6\scriptsize{$\pm$0.1}&80.5\scriptsize{$\pm$1.1}&85.2&78.1\scriptsize{$\pm$2.8}&78.8\scriptsize{$\pm$1.1}&56.1\scriptsize{$\pm$3.9}&54.7\scriptsize{$\pm$2.9}&66.9\\
MTL&85.4\scriptsize{$\pm$2.2}&78.8\scriptsize{$\pm$2.2}&96.5\scriptsize{$\pm$0.2}&74.4\scriptsize{$\pm$2.0}&83.8&76.7\scriptsize{$\pm$1.2}&78.7\scriptsize{$\pm$1.7}&44.7\scriptsize{$\pm$2.0}&59.5\scriptsize{$\pm$1.4}&64.9\\
MLDG&87.7\scriptsize{$\pm$0.6}&77.5\scriptsize{$\pm$0.7}&96.6\scriptsize{$\pm$0.6}&75.3\scriptsize{$\pm$1.9}&84.3&-&-&-&-&-\\
{MMD$^*$}& {84.5$^*$\scriptsize{$\pm$0.6}} 
& {79.7$^*$\scriptsize{$\pm$0.7}}
& {97.5$^*$\scriptsize{$\pm$0.4}} 
& {78.1$^*$\scriptsize{$\pm$1.3}} 
& {85.0$^*$}&75.4\scriptsize{$\pm$1.1}&80.1\scriptsize{$\pm$0.5}&45.2\scriptsize{$\pm$1.2}&58.2\scriptsize{$\pm$0.6}&64.7\\
{CORAL$^*$} & {87.7$^*$\scriptsize{$\pm$0.6}} 
& {79.2$^*$\scriptsize{$\pm$1.1}} 
& {97.6$^*$\scriptsize{$\pm$0.0}} 
& {79.4$^*$\scriptsize{$\pm$0.7}}
& {86.0$^*$}&76.3\scriptsize{$\pm$0.8}&79.2\scriptsize{$\pm$2.0}&45.9\scriptsize{$\pm$1.7}&57.0\scriptsize{$\pm$1.4}&64.6\\
SagNet&87.1\scriptsize{$\pm$1.1}&78.0\scriptsize{$\pm$1.9}&96.8\scriptsize{$\pm$0.2}&78.4\scriptsize{$\pm$1.4}&85.1&77.4\scriptsize{$\pm$0.0}&78.9\scriptsize{$\pm$1.8}&47.6\scriptsize{$\pm$2.4}&56.4\scriptsize{$\pm$4.0}&65.1\\
ARM&86.4\scriptsize{$\pm$0.1}&78.8\scriptsize{$\pm$0.6}&96.1\scriptsize{$\pm$0.1}&75.1\scriptsize{$\pm$3.3}&84.1&76.2\scriptsize{$\pm$0.5}&75.5\scriptsize{$\pm$4.0}&45.2\scriptsize{$\pm$5.7}&61.9\scriptsize{$\pm$2.0}&64.7\\
{DANN$^*$}& {85.9$^*$\scriptsize{$\pm$0.5} }& {79.9$^*$\scriptsize{$\pm$1.4} }& {97.6$^*$\scriptsize{$\pm$0.2} }& {75.2$^*$\scriptsize{$\pm$2.8}} & {84.6$^*$}&79.0\scriptsize{$\pm$1.4}&76.5\scriptsize{$\pm$2.0}&48.7\scriptsize{$\pm$2.1}&57.9\scriptsize{$\pm$4.7}&65.5\\
{CDANN$^*$}& {84.0$^*$\scriptsize{$\pm$0.9}} & {78.5$^*$\scriptsize{$\pm$1.5}} & {97.0$^*$\scriptsize{$\pm$0.4}} & {71.8$^*$\scriptsize{$\pm$3.9}} & {82.8$^*$}&78.5\scriptsize{$\pm$1.5}&78.7\scriptsize{$\pm$2.0}&48.3\scriptsize{$\pm$3.1}&56.9\scriptsize{$\pm$2.2}&65.6\\
VREx&87.2\scriptsize{$\pm$0.5}&77.8\scriptsize{$\pm$0.8}&96.8\scriptsize{$\pm$0.3}&75.2\scriptsize{$\pm$3.4}&84.3&75.3\scriptsize{$\pm$2.1}&80.2\scriptsize{$\pm$0.4}&44.9\scriptsize{$\pm$2.8}&56.8\scriptsize{$\pm$2.6}&64.3\\
RSC&81.0\scriptsize{$\pm$0.7}&77.6\scriptsize{$\pm$1.0}&95.3\scriptsize{$\pm$0.8}&75.0\scriptsize{$\pm$1.4}&82.2&68.9\scriptsize{$\pm$2.3}&70.6\scriptsize{$\pm$3.6}&41.1\scriptsize{$\pm$3.1}&45.9\scriptsize{$\pm$3.1}&56.6\\ 
Fishr$^*$&88.4$^*$\scriptsize{$\pm$0.2}&78.7$^*$\scriptsize{$\pm$0.7}&97.0$^*$\scriptsize{$\pm$0.1}&77.8$^*$\scriptsize{$\pm$2.0}&85.5$^*$&75.9\scriptsize{$\pm$1.7}&81.1\scriptsize{$\pm$0.7}&46.9\scriptsize{$\pm$0.7}&57.2\scriptsize{$\pm$4.4}&65.3\\
{RIDG}&{86.3\scriptsize{$\pm$1.1}}&{81.0\scriptsize{$\pm$1.0}}&{97.4\scriptsize{$\pm$0.7}}&{77.5\scriptsize{$\pm$2.5}}&{85.5}&{76.2\scriptsize{$\pm$1.4}}&{80.0\scriptsize{$\pm$1.8}}&{48.5\scriptsize{$\pm$2.8}}&{54.8\scriptsize{$\pm$2.4}}&{64.9}\\
{ITTA}&{87.9\scriptsize{$\pm$1.4}}&{78.6\scriptsize{$\pm$2.7}}&{96.2\scriptsize{$\pm$0.2}}&{80.7\scriptsize{$\pm$2.2}}&{85.8}&{78.4\scriptsize{$\pm$1.5}}&{79.8\scriptsize{$\pm$1.3}}&{56.5\scriptsize{$\pm$3.7}}&{60.7\scriptsize{$\pm$0.9}}&{68.8}\\\midrule
M-ADA&85.5\scriptsize{$\pm$0.7}&80.7\scriptsize{$\pm$1.5}&97.2\scriptsize{$\pm$0.5}&78.4\scriptsize{$\pm$1.4}&85.4&78.0\scriptsize{$\pm$1.1}&79.5\scriptsize{$\pm$1.2}&47.1\scriptsize{$\pm$0.4}&55.7\scriptsize{$\pm$0.5}&65.1\\
LTD&85.7\scriptsize{$\pm$1.9}&79.9\scriptsize{$\pm$0.9}&96.9\scriptsize{$\pm$0.5}&83.3\scriptsize{$\pm$0.5}&86.4&76.8\scriptsize{$\pm$0.7}&82.5\scriptsize{$\pm$0.4}&56.2\scriptsize{$\pm$2.5}&53.6\scriptsize{$\pm$1.4}&67.3\\ \midrule
SAM&86.8\scriptsize{$\pm$0.6}&79.6\scriptsize{$\pm$1.4}&96.8\scriptsize{$\pm$0.1}&80.2\scriptsize{$\pm$0.7}&85.9&77.7\scriptsize{$\pm$1.1}&80.5\scriptsize{$\pm$0.6}&46.7\scriptsize{$\pm$1.1}&54.2\scriptsize{$\pm$1.5}&64.8\\
\textbf{UDIM} w/ SAM &\textbf{88.5}\scriptsize{$\pm$0.1}&\textbf{86.1}\scriptsize{$\pm$0.1}&\textbf{97.3}\scriptsize{$\pm$0.1}&\textbf{82.7}\scriptsize{$\pm$0.1}&\textbf{88.7}&\textbf{81.5}\scriptsize{$\pm$0.1}&\textbf{85.3}\scriptsize{$\pm$0.4}&\textbf{67.4}\scriptsize{$\pm$0.8}&\textbf{64.6}\scriptsize{$\pm$1.7}&\textbf{74.7}\\ \midrule
SAGM&85.3\scriptsize{$\pm$2.5}&80.9\scriptsize{$\pm$1.1}&97.1\scriptsize{$\pm$0.4}&77.8\scriptsize{$\pm$0.5}&85.3&78.9\scriptsize{$\pm$1.2}&79.8\scriptsize{$\pm$1.0}&44.7\scriptsize{$\pm$1.8}&55.6\scriptsize{$\pm$1.1}&64.8\\
\textbf{UDIM} w/ SAGM& \textbf{88.9}\scriptsize{$\pm$0.2}&\textbf{86.2}\scriptsize{$\pm$0.3}&\textbf{97.4}\scriptsize{$\pm$0.4}&\textbf{79.5}\scriptsize{$\pm$0.8}&\textbf{88.0}&\textbf{81.6}\scriptsize{$\pm$0.3}&\textbf{84.8}\scriptsize{$\pm$1.2}&\textbf{68.1}\scriptsize{$\pm$0.8}&\textbf{63.3}\scriptsize{$\pm$0.9}&\textbf{74.5}\\ \midrule
GAM&85.5\scriptsize{$\pm$0.6}&81.1\scriptsize{$\pm$1.0}&96.4\scriptsize{$\pm$0.2}&81.0\scriptsize{$\pm$1.7}&86.0&79.1\scriptsize{$\pm$1.3}&79.7\scriptsize{$\pm$0.9}&46.3\scriptsize{$\pm$0.6}&56.6\scriptsize{$\pm$1.1}&65.4\\
\textbf{UDIM} w/ GAM &\textbf{87.1}\scriptsize{$\pm$0.9}&\textbf{86.3}\scriptsize{$\pm$0.4}&\textbf{97.2}\scriptsize{$\pm$0.1}&\textbf{81.8}\scriptsize{$\pm$1.1}&\textbf{88.1}&\textbf{82.4}\scriptsize{$\pm$0.9}&\textbf{84.2}\scriptsize{$\pm$0.4}&\textbf{68.8}\scriptsize{$\pm$0.8}&\textbf{64.0}\scriptsize{$\pm$0.7}&\textbf{74.9} \\ \bottomrule
\end{tabular}
}
\label{apppendix tab:acc_pacs}
\end{table}

\begin{table}[t]
\centering
\caption{Results on OfficeHome dataset. $^{*}$ represents we got the results from \cite{sagmcvpr2023} and $'$ from \cite{fishr} considering Leave-One-Out Source Domain Generalization. For Single Source Domain Generalization case, we report the model performances generated under our experiment setting.}
\resizebox{\textwidth}{!}{
\begin{tabular}{l cccc c|cccc c} \toprule
&\multicolumn{5}{c|}{\textbf{Leave-One-Out Source Domain Generalization}}&\multicolumn{5}{c}{\textbf{Single Source Domain Generalization}}\\ \cmidrule(lr){2-6}\cmidrule(lr){7-11}
Method&Art&Clipart&Product&Real World&Avg&Art&Clipart&Product&Real World&Avg \\ \cmidrule(lr){1-1}\cmidrule(lr){2-11}
ERM&61.4\scriptsize{$\pm$1.0}&53.5\scriptsize{$\pm$0.2}&75.9\scriptsize{$\pm$0.2}&77.1\scriptsize{$\pm$0.2}&67.0&55.6\scriptsize{$\pm$0.6}&52.8\scriptsize{$\pm$1.6}&50.3\scriptsize{$\pm$1.1}&59.4\scriptsize{$\pm$0.3}&54.5\\
{IRM$^*$}& {61.8$^*$\scriptsize{$\pm$1.0}} & {52.3$^*$\scriptsize{$\pm$1.0}} & {75.2$^*$\scriptsize{$\pm$0.8}} & {77.2$^*$\scriptsize{$\pm$1.1}} & {66.6$^*$}&54.9\scriptsize{$\pm$0.3}&53.2\scriptsize{$\pm$0.6}&48.6\scriptsize{$\pm$0.7}&59.2\scriptsize{$\pm$0.1}&54.0\\
GroupDRO&61.3\scriptsize{$\pm$2.0}&53.3\scriptsize{$\pm$0.4}&75.4\scriptsize{$\pm$0.3}&76.0\scriptsize{$\pm$1.0}&66.5&55.1\scriptsize{$\pm$0.2}&52.0\scriptsize{$\pm$0.5}&50.3\scriptsize{$\pm$1.1}&59.3\scriptsize{$\pm$0.3}&54.2\\
OrgMixup&63.7\scriptsize{$\pm$1.1}&55.4\scriptsize{$\pm$0.1}&77.1\scriptsize{$\pm$0.2}&78.9\scriptsize{$\pm$0.6}&68.8&56.0\scriptsize{$\pm$1.1}&54.4\scriptsize{$\pm$1.3}&50.4\scriptsize{$\pm$0.4}&61.0\scriptsize{$\pm$0.7}&55.5\\
Mixup&64.1\scriptsize{$\pm$0.6}&54.9\scriptsize{$\pm$0.7}&76.6\scriptsize{$\pm$0.4}&78.7\scriptsize{$\pm$0.5}&68.6&55.5\scriptsize{$\pm$0.6}&54.1\scriptsize{$\pm$1.1}&49.4\scriptsize{$\pm$1.7}&59.4\scriptsize{$\pm$0.6}&54.6\\
CutMix&63.2\scriptsize{$\pm$0.3}&52.1\scriptsize{$\pm$1.7}&77.2\scriptsize{$\pm$0.8}&78.1\scriptsize{$\pm$0.6}&67.7&53.5\scriptsize{$\pm$0.8}&52.2\scriptsize{$\pm$1.2}&47.7\scriptsize{$\pm$1.7}&60.2\scriptsize{$\pm$0.4}&53.4\\
Mixstyle$^{*}$&51.1$^*$\scriptsize{$\pm$0.3}&53.2$^*$\scriptsize{$\pm$0.4}&68.2$^*$\scriptsize{$\pm$0.7}&69.2$^*$\scriptsize{$\pm$0.6}&60.4&44.3\scriptsize{$\pm$0.5}&29.8\scriptsize{$\pm$1.2}&33.6\scriptsize{$\pm$0.5}&48.5\scriptsize{$\pm$0.9}&39.0\\
MTL&60.1\scriptsize{$\pm$0.5}&52.0\scriptsize{$\pm$0.3}&75.7\scriptsize{$\pm$0.3}&77.2\scriptsize{$\pm$0.4}&66.3&55.3\scriptsize{$\pm$0.3}&53.3\scriptsize{$\pm$0.4}&49.0\scriptsize{$\pm$0.3}&60.4\scriptsize{$\pm$0.1}&54.5\\
{MLDG$^*$}& {63.7$^*$\scriptsize{$\pm$0.3} }& {54.5$^*$\scriptsize{$\pm$0.6} }& {75.9$^*$\scriptsize{$\pm$0.4}} & {78.6$^*$\scriptsize{$\pm$0.1} }& {68.2$^*$}&-&-&-&-&-\\
{MMD$^*$}& {63.0$^*$\scriptsize{$\pm$0.1}} & {53.7$^*$\scriptsize{$\pm$0.9}} & {76.1$^*$\scriptsize{$\pm$0.3}} & {78.1$^*$\scriptsize{$\pm$0.5}} & {67.7$^*$}&55.1\scriptsize{$\pm$0.2}&52.0\scriptsize{$\pm$0.5}&50.3\scriptsize{$\pm$1.1}&59.3\scriptsize{$\pm$0.3}&54.2\\
CORAL&64.1\scriptsize{$\pm$0.5}&54.5\scriptsize{$\pm$1.7}&76.2\scriptsize{$\pm$0.4}&77.8\scriptsize{$\pm$0.5}&68.2&55.6\scriptsize{$\pm$0.6}&52.8\scriptsize{$\pm$1.6}&50.3\scriptsize{$\pm$1.1}&59.4\scriptsize{$\pm$0.3}&54.5\\
SagNet&62.3\scriptsize{$\pm$1.1}&51.7\scriptsize{$\pm$0.3}&75.4\scriptsize{$\pm$1.0}&78.1\scriptsize{$\pm$0.2}&66.9&56.9\scriptsize{$\pm$1.2}&53.4\scriptsize{$\pm$2.1}&50.8\scriptsize{$\pm$0.3}&61.2\scriptsize{$\pm$0.8}&55.6\\
ARM&59.9\scriptsize{$\pm$0.6}&51.8\scriptsize{$\pm$0.5}&73.3\scriptsize{$\pm$0.5}&75.7\scriptsize{$\pm$0.8}&65.2&55.0\scriptsize{$\pm$0.1}&51.6\scriptsize{$\pm$1.1}&47.3\scriptsize{$\pm$0.8}&59.3\scriptsize{$\pm$0.7}&53.3\\
{DANN$^{*}$}&{59.9$^*$\scriptsize{$\pm$1.3}}&{53.0$^*$\scriptsize{$\pm$0.3}}&{73.6$^*$\scriptsize{$\pm$0.7}}&{76.9$^*$\scriptsize{$\pm$0.5}}&{65.9$^*$}&55.2\scriptsize{$\pm$0.8}&49.3\scriptsize{$\pm$1.5}&48.4\scriptsize{$\pm$1.5}&58.4\scriptsize{$\pm$0.2}&52.8\\
{CDANN$^{*}$} &{61.5$^*$\scriptsize{$\pm$1.4}}&{50.4$^*$\scriptsize{$\pm$2.4}}&{74.4$^*$\scriptsize{$\pm$0.9}}&{76.6$^*$\scriptsize{$\pm$0.8}}&{65.7$^*$}&55.2\scriptsize{$\pm$0.7}&49.9\scriptsize{$\pm$1.4}&47.6\scriptsize{$\pm$1.3}&58.6\scriptsize{$\pm$0.5}&52.8\\
VREx&61.4\scriptsize{$\pm$1.0}&52.2\scriptsize{$\pm$0.2}&76.1\scriptsize{$\pm$0.6}&77.6\scriptsize{$\pm$0.9}&66.8&55.5\scriptsize{$\pm$0.6}&52.6\scriptsize{$\pm$0.2}&49.1\scriptsize{$\pm$1.0}&59.3\scriptsize{$\pm$0.6}&54.1\\
RSC&-&-&-&-&-&-&-&-&-&-\\ 
Fishr$^{'}$&62.4\scriptsize{$\pm$0.5}&54.4\scriptsize{$\pm$0.4}&76.2\scriptsize{$\pm$0.5}&78.3\scriptsize{$\pm$0.1}&67.8&55.1\scriptsize{$\pm$0.4}&51.2\scriptsize{$\pm$0.1}&49.2\scriptsize{$\pm$1.0}&59.9\scriptsize{$\pm$1.4}&53.9 \\
{RIDG}&{63.6\scriptsize{$\pm$0.7}}&{55.0\scriptsize{$\pm$0.9}}&{76.0\scriptsize{$\pm$0.8}}&{77.5\scriptsize{$\pm$0.7}}&{68.0}&{56.8\scriptsize{$\pm$0.5}}&{55.4\scriptsize{$\pm$0.7}}&{50.5\scriptsize{$\pm$0.3}}&{60.9\scriptsize{$\pm$0.1}}&{55.9}\\
{ITTA}&{61.8\scriptsize{$\pm$0.9}}&{57.0\scriptsize{$\pm$1.0}}&{74.3\scriptsize{$\pm$0.3}}&{77.3\scriptsize{$\pm$0.3}}&{67.6}&{56.0\scriptsize{$\pm$0.4}}&{51.5\scriptsize{$\pm$0.8}}&{50.5\scriptsize{$\pm$0.6}}&{61.6\scriptsize{$\pm$0.4}}&{54.9}\\\midrule
SAM&62.2\scriptsize{$\pm$0.7}&55.9\scriptsize{$\pm$0.1}&\textbf{77.0}\scriptsize{$\pm$0.9}&78.8\scriptsize{$\pm$0.6}&68.5&56.9\scriptsize{$\pm$0.4}&53.8\scriptsize{$\pm$1.1}&50.9\scriptsize{$\pm$0.7}&61.5\scriptsize{$\pm$0.8}&55.8\\
\textbf{UDIM} (w/ SAM)&\textbf{63.5}\scriptsize{$\pm$1.3}&\textbf{58.6}\scriptsize{$\pm$0.4}&76.9\scriptsize{$\pm$0.6}&\textbf{79.1}\scriptsize{$\pm$0.3}&\textbf{69.5}&\textbf{58.1}\scriptsize{$\pm$0.6}&\textbf{55.0}\scriptsize{$\pm$0.9}&\textbf{53.8}\scriptsize{$\pm$0.1}&\textbf{64.3}\scriptsize{$\pm$0.2}&\textbf{57.8} \\ \midrule
SAGM&63.1\scriptsize{$\pm$2.1}&56.2\scriptsize{$\pm$0.4}&77.3\scriptsize{$\pm$0.2}&78.4\scriptsize{$\pm$0.4}&68.8&57.7\scriptsize{$\pm$0.3}&54.8\scriptsize{$\pm$1.0}&51.5\scriptsize{$\pm$1.2}&61.4\scriptsize{$\pm$0.1}&56.3\\
\textbf{UDIM} (w/ SAGM)&\textbf{64.4}\scriptsize{$\pm$0.3}&\textbf{57.3}\scriptsize{$\pm$0.5}&77.1\scriptsize{$\pm$0.4}&\textbf{79.1}\scriptsize{$\pm$0.3}&\textbf{69.5}&\textbf{58.5}\scriptsize{$\pm$0.4}&\textbf{55.7}\scriptsize{$\pm$0.6}&\textbf{54.5}\scriptsize{$\pm$0.1}&\textbf{64.5}\scriptsize{$\pm$0.4}&\textbf{58.3} \\ \midrule
GAM&64.0\scriptsize{$\pm$0.5}&58.6\scriptsize{$\pm$1.2}&77.5\scriptsize{$\pm$0.1}&79.3\scriptsize{$\pm$0.2}&69.8&59.4\scriptsize{$\pm$0.6}&56.1\scriptsize{$\pm$0.9}&53.3\scriptsize{$\pm$0.6}&63.4\scriptsize{$\pm$0.2}&58.1\\ 
\textbf{UDIM} (w/ GAM)&\textbf{64.2}\scriptsize{$\pm$0.3}&57.4\scriptsize{$\pm$0.9}&77.5\scriptsize{$\pm$0.1}&79.3\scriptsize{$\pm$0.3}&69.6&58.7\scriptsize{$\pm$0.3}&55.7\scriptsize{$\pm$0.0}&\textbf{53.6}\scriptsize{$\pm$0.4}&\textbf{64.4}\scriptsize{$\pm$0.0}&58.1 \\ \bottomrule
\end{tabular}
}
\label{tab:acc_office}
\end{table}

Table \ref{tab:acc_office} and \ref{tab:acc_single_source} shows the model performances for OfficeHome \citep{officehome} and DomainNet \citep{domainnet} dataset, respectively. Similar to the above tables, UDIM shows performance improvement over sharpness-aware baselines, and good performances for each column consistently.

\begin{table}[t]
\centering
\caption{{Results on DomainNet datasets - Leave-One-Out (Multi) Source Domain Generalization}}
\vskip-0.05in
\resizebox{\textwidth}{!}{
{\begin{tabular}{l|cccccc|c} \toprule
Method&clipart&infograph&painting&quickdraw&real&sketch&Avg \\ \cmidrule(lr){1-1}\cmidrule(lr){2-8}
ERM & 62.8\scriptsize{$\pm$0.4} & 20.2\scriptsize{$\pm$0.3} & 50.3\scriptsize{$\pm$0.3} & 13.7\scriptsize{$\pm$0.5} & \textbf{63.7}\scriptsize{$\pm$0.2} & 52.1\scriptsize{$\pm$0.5} & 43.8\\
IRM& 48.5\scriptsize{$\pm$2.8} & 15.0\scriptsize{$\pm$1.5} & 38.3\scriptsize{$\pm$4.3} & 10.9\scriptsize{$\pm$0.5} & 48.2\scriptsize{$\pm$5.2} & 42.3\scriptsize{$\pm$3.1} & 33.9\\
GroupDRO& 47.2\scriptsize{$\pm$0.5} & 17.5\scriptsize{$\pm$0.4} & 33.8\scriptsize{$\pm$0.5} & 9.3\scriptsize{$\pm$0.3} & 51.6\scriptsize{$\pm$0.4} & 40.1\scriptsize{$\pm$0.6} & 33.3\\
MTL& 57.9\scriptsize{$\pm$0.5} & 18.5\scriptsize{$\pm$0.4} & 46.0\scriptsize{$\pm$0.1} & 12.5\scriptsize{$\pm$0.1} & 59.5\scriptsize{$\pm$0.3} & 49.2\scriptsize{$\pm$0.1} & 40.6\\
MLDG& 59.1\scriptsize{$\pm$0.2} & 19.1\scriptsize{$\pm$0.3} & 45.8\scriptsize{$\pm$0.7} & 13.4\scriptsize{$\pm$0.3} & 59.6\scriptsize{$\pm$0.2} & 50.2\scriptsize{$\pm$0.4} & 41.2\\
MMD& 32.1\scriptsize{$\pm$13.3} & 11.0\scriptsize{$\pm$4.6} & 26.8\scriptsize{$\pm$11.3} & 8.7\scriptsize{$\pm$2.1} & 32.7\scriptsize{$\pm$13.8} & 28.9\scriptsize{$\pm$11.9} & 23.4\\
CORAL& 59.2\scriptsize{$\pm$0.1} & 19.7\scriptsize{$\pm$0.2} & 46.6\scriptsize{$\pm$0.3} & 13.4\scriptsize{$\pm$0.4} & 59.8\scriptsize{$\pm$0.2} & 50.1\scriptsize{$\pm$0.6} & 41.5\\
SagNet & 57.7\scriptsize{$\pm$0.3} & 19.0\scriptsize{$\pm$0.2} & 45.3\scriptsize{$\pm$0.3} & 12.7\scriptsize{$\pm$0.5} & 58.1\scriptsize{$\pm$0.5} & 48.8\scriptsize{$\pm$0.2} & 40.3\\
ARM& 49.7\scriptsize{$\pm$0.3} & 16.3\scriptsize{$\pm$0.5} & 40.9\scriptsize{$\pm$1.1} & 9.4\scriptsize{$\pm$0.1} & 53.4\scriptsize{$\pm$0.4} & 43.5\scriptsize{$\pm$0.4} & 35.5\\
DANN$^*$ & 53.8$^*$\scriptsize{$\pm0.7$} & 17.8$^*$\scriptsize{$\pm0.3$} & 43.5$^*$\scriptsize{$\pm0.3$} & 11.9$^*$\scriptsize{$\pm0.5$} & 56.4$^*$\scriptsize{$\pm0.3$} & 46.7$^*$\scriptsize{$\pm0.5$}&38.4$^*$ \\
CDANN$^*$& 53.4$^*$\scriptsize{$\pm0.4$} & 18.3$^*$\scriptsize{$\pm0.7$} & 44.8$^*$\scriptsize{$\pm0.3$} & 12.9$^*$\scriptsize{$\pm0.2$} & 57.5$^*$\scriptsize{$\pm0.4$} & 46.7$^*$\scriptsize{$\pm0.2$}&38.9$^*$\\
VREx& 47.3\scriptsize{$\pm$3.5} & 16.0\scriptsize{$\pm$1.5} & 35.8\scriptsize{$\pm$4.6} & 10.9\scriptsize{$\pm$0.3} & 49.6\scriptsize{$\pm$4.9} & 42.0\scriptsize{$\pm$3.0} & 33.6\\
RSC& 55.0\scriptsize{$\pm$1.2} & 18.3\scriptsize{$\pm$0.5} & 44.4\scriptsize{$\pm$0.6} & 12.2\scriptsize{$\pm$0.2} & 55.7\scriptsize{$\pm$0.7} & 47.8\scriptsize{$\pm$0.9} & 38.9 \\
Fishr$^*$&58.2$^*$\scriptsize{$\pm$0.5}& 20.2$^*$\scriptsize{$\pm$0.2 }&47.7$^*$\scriptsize{$\pm$0.3 }&12.7$^*$\scriptsize{$\pm$0.2 }&60.3$^*$\scriptsize{$\pm$0.2 }&50.8$^*$\scriptsize{$\pm$0.1}& 41.7$^*$\\\midrule
SAM& \textbf{64.5}\scriptsize{$\pm$0.3} & 20.7\scriptsize{$\pm$0.2} & 50.2\scriptsize{$\pm$0.1} & \textbf{15.1}\scriptsize{$\pm$0.3} & 62.6\scriptsize{$\pm$0.2} & 52.7\scriptsize{$\pm$0.3} & 44.3\\ \midrule
UDIM (w/ SAM) & 63.5\scriptsize{$\pm$0.1} & \textbf{21.01}\scriptsize{$\pm$0.1} & \textbf{50.63}\scriptsize{$\pm$0.1} & 14.76\scriptsize{$\pm$0.1} & 62.5\scriptsize{$\pm$0.1} & \textbf{53.39}\scriptsize{$\pm$0.1} & 44.3\\ \bottomrule
\end{tabular}}
}
\label{tab:acc_domain}
\end{table}

\begin{table}[t]
\centering
\caption{Results on DomainNet datasets - Single Source Domain Generalization}
\vskip-0.05in
\resizebox{\textwidth}{!}{
\begin{tabular}{l|cccccc|c} \toprule
Method & clipart & infograph & painting & quickdraw & real & sketch & Avg \\ \cmidrule(lr){1-8}
ERM & 27.5\scriptsize{$\pm$0.5} & 26.6\scriptsize{$\pm$0.1} & 28.5\scriptsize{$\pm$0.3} & 7.1\scriptsize{$\pm$0.1} & 28.9\scriptsize{$\pm$0.4} & 29.4\scriptsize{$\pm$0.3} & 24.7 \\
IRM &-&-&-&-&-&-&- \\
GroupDRO & 27.6\scriptsize{$\pm$0.4} & 26.6\scriptsize{$\pm$0.8} & 28.9\scriptsize{$\pm$0.3} & 7.3\scriptsize{$\pm$0.3} & 28.7\scriptsize{$\pm$0.2} & 29.4\scriptsize{$\pm$0.7} & 24.8 \\
OrgMixup & 28.3\scriptsize{$\pm$0.1} & \textbf{27.3}\scriptsize{$\pm$0.1} & 29.4\scriptsize{$\pm$0.3} & 7.7\scriptsize{$\pm$0.4} & 30.2\scriptsize{$\pm$0.2} & 30.0\scriptsize{$\pm$0.7} & 25.5 \\
Mixup & 27.5\scriptsize{$\pm$0.3} & 26.9\scriptsize{$\pm$0.8} & 28.6\scriptsize{$\pm$0.4} & 7.0\scriptsize{$\pm$0.1} & 28.7\scriptsize{$\pm$0.2} & 29.6\scriptsize{$\pm$0.6} & 24.7 \\
CutMix & 28.0\scriptsize{$\pm$0.2} & 26.5\scriptsize{$\pm$0.2} & 28.4\scriptsize{$\pm$0.6} & 6.4\scriptsize{$\pm$0.1} & 29.0\scriptsize{$\pm$0.3} & 29.7\scriptsize{$\pm$0.8} & 24.6 \\
Mixstyle&18.3\scriptsize{$\pm$0.8}&14.5\scriptsize{$\pm$0.3}&19.6\scriptsize{$\pm$0.2}&5.0\scriptsize{$\pm$0.1}&20.5\scriptsize{$\pm$1.3}&21.3\scriptsize{$\pm$0.8}&16.5\\
MTL & 27.3\scriptsize{$\pm$0.2} & 26.6\scriptsize{$\pm$0.3} & 28.7\scriptsize{$\pm$0.1} & 7.8\scriptsize{$\pm$0.1} & 28.2\scriptsize{$\pm$0.2} & 28.7\scriptsize{$\pm$0.6} & 24.5 \\
MLDG &-&-&-&-&-&-&- \\
MMD & 27.6\scriptsize{$\pm$0.4} & 26.6\scriptsize{$\pm$0.7} & 28.5\scriptsize{$\pm$0.3} & 7.4\scriptsize{$\pm$0.3} & 28.9\scriptsize{$\pm$0.3} & 29.5\scriptsize{$\pm$0.9} & 24.8 \\
CORAL & 18.1\scriptsize{$\pm$12.8} & 17.4\scriptsize{$\pm$12.3} & 18.9\scriptsize{$\pm$13.4} & 4.8\scriptsize{$\pm$3.4} & 19.5\scriptsize{$\pm$13.8} & 20.2\scriptsize{$\pm$14.3} & 16.5 \\
SagNet & 27.6\scriptsize{$\pm$0.3} & 25.6\scriptsize{$\pm$0.5} & 28.5\scriptsize{$\pm$0.7} & 7.3\scriptsize{$\pm$0.2} & 28.8\scriptsize{$\pm$0.5} & 28.8\scriptsize{$\pm$0.8} & 24.4 \\
ARM & 17.0\scriptsize{$\pm$12.0} & 16.8\scriptsize{$\pm$11.9} & 17.7\scriptsize{$\pm$12.5} & 4.1\scriptsize{$\pm$2.9} & 17.5\scriptsize{$\pm$12.4} & 18.8\scriptsize{$\pm$13.3} & 15.3 \\
DANN&26.8\scriptsize{$\pm$0.8}&25.2\scriptsize{$\pm$0.9}&28.0\scriptsize{$\pm$0.4}&6.8\scriptsize{$\pm$0.6}&27.6\scriptsize{$\pm$0.2}&28.0\scriptsize{$\pm$0.2}&23.8\\
CDANN&26.8\scriptsize{$\pm$0.7}&25.8\scriptsize{$\pm$0.2}&27.6\scriptsize{$\pm$0.2}&6.8\scriptsize{$\pm$0.6}&27.6\scriptsize{$\pm$0.2}&28.0\scriptsize{$\pm$0.2}&23.8\\
VREx & 18.6\scriptsize{$\pm$13.1} & 17.3\scriptsize{$\pm$12.3} & 19.3\scriptsize{$\pm$13.6} & 4.7\scriptsize{$\pm$3.3} & 19.3\scriptsize{$\pm$13.7} & 19.9\scriptsize{$\pm$14.1} & 16.5 \\
RSC &-&-&-&-&-&-&- \\ 
Fishr& \textbf{30.0}\scriptsize{$\pm$0.3} & 26.6\scriptsize{$\pm$0.7} & 28.9\scriptsize{$\pm$0.3} & 7.5\scriptsize{$\pm$0.7} & 28.4\scriptsize{$\pm$0.7} & 28.9\scriptsize{$\pm$0.3} & 24.7 \\\midrule
SAM&28.4\scriptsize{$\pm$0.2}&26.9\scriptsize{$\pm$0.1}&29.1\scriptsize{$\pm$0.4}&6.9\scriptsize{$\pm$0.5}&30.0\scriptsize{$\pm$0.2}&29.8\scriptsize{$\pm$0.7}&25.2\\ \midrule
UDIM (w/ SAM) & \textbf{30.0}\scriptsize{$\pm$0.1} & 23.8\scriptsize{$\pm$0.4} & \textbf{31.0}\scriptsize{$\pm$0.1} & \textbf{12.6}\scriptsize{$\pm$0.1} & \textbf{30.7}\scriptsize{$\pm$0.2} & \textbf{34.0}\scriptsize{$\pm$0.3} & \textbf{27.0}\ \\ \bottomrule
\end{tabular}
}
\label{tab:acc_single_source}
\vskip-0.1in
\end{table}

{
\subsection{Additional Analyses}
\label{additional_analyses}

In this section, we additionally provide analysis on 1) Comparison with \cite{REG}, which utilzes both 1) data-based perturbation and 2) parameter-based regularization on their own framework. Afterwards, we provide visual inspections on the perturbed domain instances, which are constructed by Eq \ref{cancel_eq_2}. 

\subsubsection{Analytical Comparison with \cite{REG}} 

UDIM and \cite{REG} share a similar direction, as both methodologies involve 1) generating virtual samples through distinct data perturbation methods, and 2) implementing their own types of regularization on these samples.

\cite{REG} introduced novel regularization techniques for the embedding function based on theoretical analysis. Their approach involves establishing an upper bound on the balanced error rate in the test environment, which is obtained from the combination of the balanced error rates in the source environments, the feature-conditional invariance, and the smoothness of the embedding function. In particular, they focused on minimizing the smoothness term by reducing the Frobenius norm of the Jacobian matrix with respect to the embedding function. To implement this regularization term over unobserved regions, they use virtual samples generated by a linear combination of samples from each source.

On the other hand, UDIM also introduces an upper bound on the generalization loss in an unknown domain. This bound includes the SAM loss on the source domain and a region-based loss disparity between the worst-case domain and the source domain. This disparity is implemented by a gradient variance-based objective outlined in Eq. \ref{last_opt}. The objective involves a perturbed dataset constructed by perturbing samples, where the direction of perturbation is determined by the inconsistencies described in Eq. \ref{cancel_eq_2_main}.

\subsubsection{Analytic comparison with domain augmentation and adversarial attack}

The proposed method, UDIM, involves applying arbitrary perturbation to a given instance to create a new instance, which is similar to domain augmentation and adversarial attacks. However, UDIM has the following differences and advantages compared to them.

\paragraph{Comparison with domain augmentation}

First, we taxonomize domain augmentation based on its dependency on learning signals such as parameters or loss, dividing it into not-learnable domain augmentation \cite{mixup,mixstyle,simple} and learnable domain augmentation \cite{mixup,mixstyle,simple}. Please note that we briefly cite the representative methodologies among wide range of augmentation techniques.

While non-learnable domain augmentations are effective in generating new styles and may generalize well to specific types of domains, it does not guarantee generalization across wide range of unseen domains, as discussed in the Theorem \ref{ours_theorem_2_main}. In contrast, UDIM’s data perturbation method is designed to generate perturbations towards the most vulnerable or worst domain from a parameter space perspective, enabling a reduction of the generalization bound in Eq \ref{theorem_eq}, even in scenarios involving numerous unobserved domains. Additionally, it's important to note that these domain augmentation techniques could be applied orthogonally to the UDIM framework, for instance, by implementing domain augmentation prior to UDIM's data perturbation. 

Learnable augmentations, similar to UDIM, determine its augmentation direction based on the current parameter response. However, these methodologies do not link their augmentation with a theoretical analysis to assure minimization of the target objective, which is left-hand side of Eq \ref{theorem_eq} of our manuscript. UDIM's data perturbation impacts the generalization bound from a parameter perspective as it takes into account a parameter loss curvature information, rather than just a single parameter point, when determining perturbations.

\paragraph{Comparison with adversarial attack}

Adversarial attacks also introduce perturbations in the direction most vulnerable to the current parameters, but methodologies like FGSM \cite{fgsm} and PGD \cite{pgd} do not consider the local parameter curvature in their perturbation process. By integrating perturbations on instances with attention to parameter loss curvature; and parameter perturbation, we facilitate the modeling of inconsistency in unknown domains, as described in Eq \ref{inconsistency_1}. Having said that, \citet{saal} also utilize worst-case instance selection on the active learning framework by utilizing the parameter perturbation. In the literature of coreset selection \cite{coreset}, \citet{lcmat} also utilizes the perturbed parameter region to attain samples which effectively represent whole dataset.

From a mathematical perspective, UDIM's data perturbation involves receiving not only gradients related to the simple cross-entropy loss but also additional gradients concerning the norm of gradient, as elaborated in Eq \ref{cancel_eq_2_main}. 

\paragraph{Combination of domain augmentation with SAM optimizer}

We accordingly report the experimental results of models, which combines various domain augmentation techniques with SAM optimization in Table \ref{tab:your_table_augment}. We reported the average test accuracy for each domain in each setting. Applying SAM optimization to data instances of augmented domains led to mixed results: some methodologies improved, others didn't, but all still under-performed compared to UDIM. We hypothesize that the observed performance decline in certain augmentations combined with the SAM optimizer might stem from an unstable learning process. This instability may arise from attempting to minimize sharpness in the perturbed domain prematurely, before ensuring flatness in the source domain.

\begin{table}[h]
\centering
\resizebox{\textwidth}{!}{\begin{tabular}{l|c|c}
\toprule
\textbf{Method} & \textbf{Leave-One-Out Source Domain Generalization} & \textbf{Single Source Domain Generalization} \\ \midrule
SAM & 85.9\% & 64.8\% \\ \midrule
SAM w/ Mixup & 84.51\% & 62.28\% \\ \midrule
SAM w/ Mixstyle & 86.56\% & 68.59\% \\ \midrule
SAM w/ Simple Augment \cite{simple} & 86.26\% & 64.97\% \\ \midrule
SAM w/ advstyle \cite{advstyle} & 85.28\% & 61.02\% \\ \midrule
UDIM & 88.7\% & 74.7\% \\ \bottomrule
\end{tabular}}
\caption{Performance comparison of models using combinations of domain augmentation variants and SAM optimizer, alongside UDIM model, on the PACS dataset.}
\label{tab:your_table_augment}
\end{table}

\subsubsection{Visual Inspection on the Perturbed Domain}
\label{additional_analyses}
In this section, we aim to illustrate how perturbed instances are created depending on the size of \(\rho\), which represents the magnitude of data perturbation. Each plot utilizes the PACS dataset and a model trained under the Single Source Domain Generalization setting.

Figure \ref{perturb_pixel} displays instances perturbed through the original UDIM methodology. As the perturbation size increases, we can observe a gradual distortion of the image's semantics, highlighting the importance of selecting an appropriate perturbation size.

Figure \ref{perturb_channel} shows the scenario where the data perturbation of the original UDIM method is applied to the input channel, instead of input pixels. This approach of perturbing the channel over pixels has the advantage of maintaining the basic shape and line information of the image, ensuring the preservation of essential visual features.

\begin{figure}[!htbp]
\centering
\includegraphics[width=\linewidth]{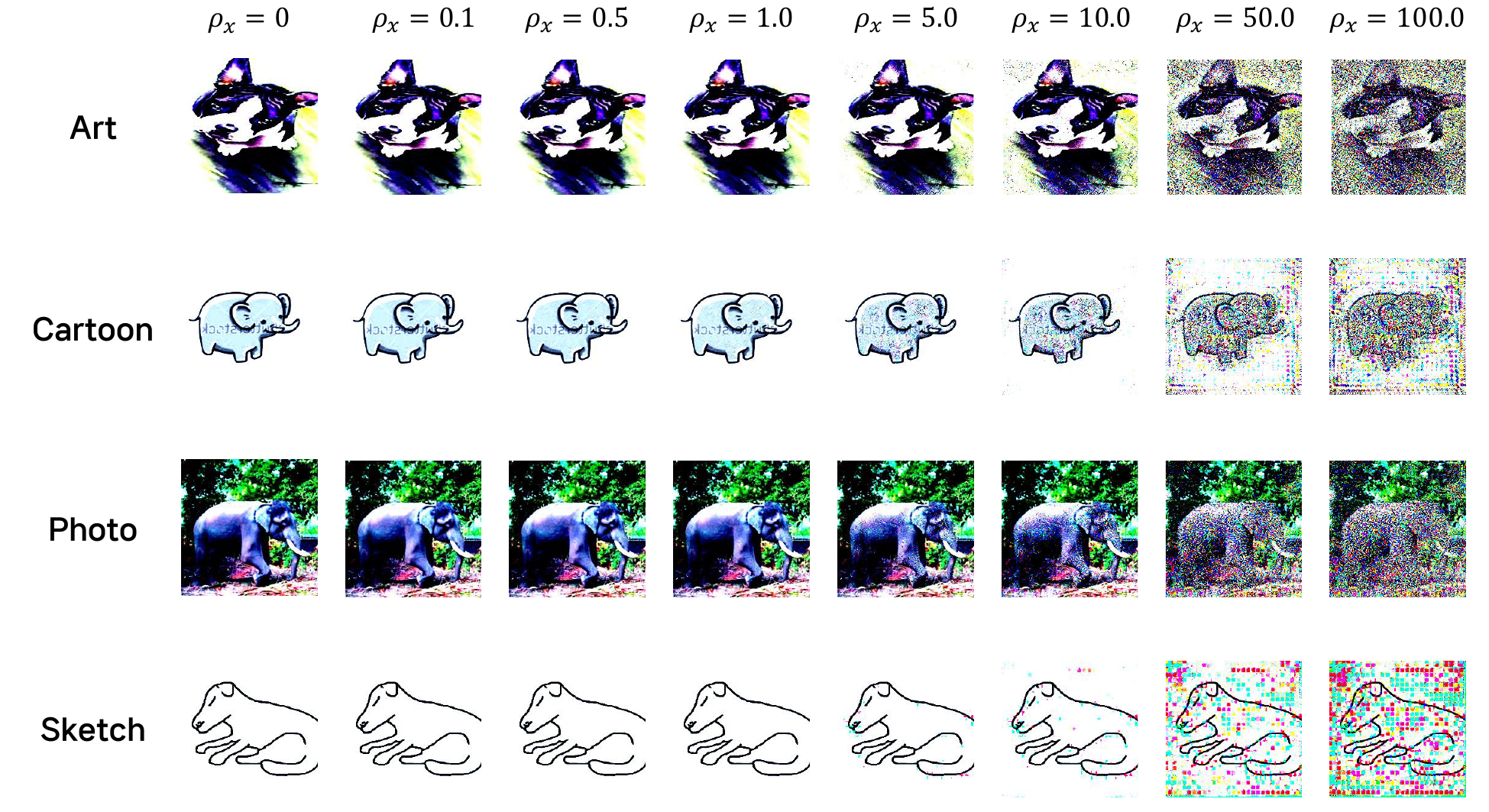}

\caption{Pixel-perturbed domain instances by UDIM model trained on PACS dataset by varying the perturbation size}
\label{perturb_pixel}
\end{figure}

\begin{figure}[!htbp]
\centering
\includegraphics[width=\linewidth]{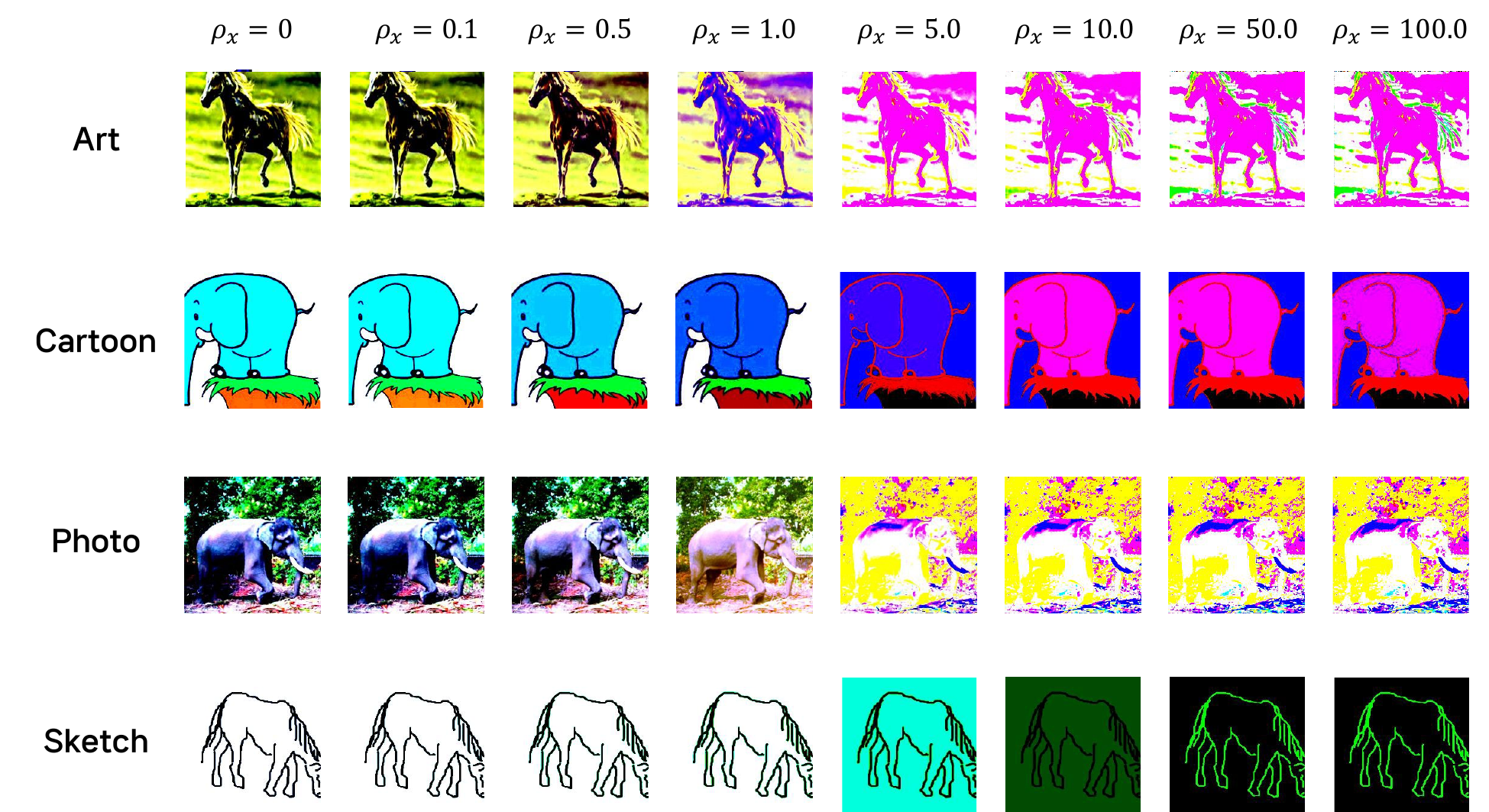}
\caption{Channel-perturbed domain instances by UDIM model trained on PACS dataset by varying the perturbation size}
\label{perturb_channel}
\end{figure}}

\end{document}